\documentclass[twoside,11pt]{article}

\usepackage{blindtext}

\usepackage{mymacros}
\allowdisplaybreaks[4] 
\usepackage{parskip} 
\usepackage[ruled,linesnumbered]{algorithm2e}
\SetKwComment{Comment}{/* }{ */} 
\let\oldnl\nl
\newcommand{\nonl}{\renewcommand{\nl}{\let\nl\oldnl}}
\usepackage{subcaption}  

\usepackage{enumitem}

%

%
%
%

\usepackage[preprint]{jmlr2e}



\usepackage{lastpage}
\jmlrheading{XY}{2025}{1-\pageref{LastPage}}{1/25; Revised
-/-}{-/-}{21-0000}{Filippo Lazzati, Mirco Mutti, Alberto Maria Metelli}


\ShortHeadings{A Framework for Inverse RL}{Reward Compatibility}
\firstpageno{1}

\begin{document}

\setlength{\abovedisplayskip}{5.5pt}
\setlength{\belowdisplayskip}{5.5pt}
\setlength{\textfloatsep}{14pt}

\title{Reward Compatibility: A Framework for Inverse RL}

\author{\name Filippo Lazzati \email filippo.lazzati@polimi.it \\
       \addr Politecnico di Milano\\
       Milan, Italy
       \AND
       \name Mirco Mutti \email mirco.m@technion.ac.il \\
       \addr Technion\\
        Haifa, Israel
       \AND
       \name Alberto Maria Metelli \email albertomaria.metelli@polimi.it \\
       \addr Politecnico di Milano\\
       Milan, Italy}

\editor{My Editor}

\maketitle

\begin{abstract}
    We provide an original theoretical study of Inverse Reinforcement Learning
    (IRL) through the lens of \emph{reward compatibility}, a novel framework to
    quantify the compatibility of a reward with the given expert's
    demonstrations.
    Intuitively, a reward is more \emph{compatible} with the demonstrations the
    closer the performance of the expert's policy computed with that reward is
    to the optimal performance for that reward.
    This generalizes the notion of \emph{feasible reward set}, the most common
    framework in the theoretical IRL literature, for which a reward is either
    compatible or not compatible. The \emph{grayscale} introduced by the reward
    compatibility is the key to extend the realm of provably efficient IRL far
    beyond what is attainable with the feasible reward set: from tabular to
    \emph{large-scale} MDPs.
    We analyze the IRL problem across various settings, including optimal and
    \emph{suboptimal} expert's demonstrations and both online and \emph{offline}
    data collection.
    For all of these dimensions, we provide a tractable algorithm and
    corresponding sample complexity analysis, as well as various insights on
    reward compatibility
    and how the framework can pave the way to yet more general problem settings.
\end{abstract}

\begin{keywords}
  Inverse Reinforcement Learning, Linear MDPs, Sample Complexity, Reward-Free
  Exploration, Identifiability
\end{keywords}

\section{Introduction}

Inverse Reinforcement Learning (IRL) is the problem of inferring the reward
function of an agent, named the expert agent, from demonstrations of behavior
\citep{russell1998learning,ng2000algorithms}. Since its formulation, much
research effort has been put into the design of efficient algorithms for solving
the IRL problem \citep{arora2018survey,adams2022survey}, with the promise to
open the door to a variety of interesting applications, including Apprenticeship
Learning \citep[AL,][]{abbeel2004apprenticeship,abbeel2006helicopter}, reward
design \citep{hadfieldmenell2017inverserewarddesign}, interpretability of the
expert's behavior \citep{hadfieldmenell2016cooperativeIRL}, and transferability
of behavior to new environments \citep{Fu2017LearningRR}.

Despite the relevance of these applications and the abundance of practical
solutions, IRL has long escaped a formal and coherent theoretical
characterization to support the empirical research. Indeed, the \emph{vanilla}
formulation of IRL, coarsely ``given a set of demonstrations from an expert's
policy recover the reward function the expert is maximizing'', is notoriously
ill-posed \citep{ng2000algorithms,metelli2021provably,metelli2023towards}, as
several rewards (infinitely many) are \emph{feasible} to explain the evidence
provided by expert's demonstrations, no matter the number of available
demonstrations.
To resolve this ambiguity, it has been proposed to consider \emph{additional}
demonstrations from multiple environments
\citep{amin2016resolving,cao2021identifiability} or from multiple experts
\citep{rolland2022identifiabilitymultipleexperts,poiani2024inverse}, or to
consider \emph{additional} human feedback
\citep{jeon2020rewardrational,skalse2023invariance}. However, this extra
information is not always available.

Only recently, the notion of \emph{feasible reward set}, introduced by
\cite{metelli2021provably}, has convincingly sorted out the issue, emerging as a
common theoretical framework for the study of IRL~\citep{lindner2022active,
metelli2023towards, lazzati2024offline, zhao2023inverse, poiani2024inverse} and
related problems
\citep{lazzati2024learningutilitiesdemonstrationsmarkov,yue2024provablyefficientexplorationinverse,freihaut2024multiagentinversereinforcementlearning}.
In the absence of conclusive information coming from expert's demonstrations,
\cite{metelli2021provably} propose to extract not just one feasible reward, but
the set of all of the rewards that make the expert's policy optimal, called the
\emph{feasible reward set}. This formulation lead to the development of provably
efficient IRL algorithms and clarified the statistical barriers of the
problem~\citep{metelli2023towards}. However, the application of the feasible
reward set framework is essentially limited to settings where the space of
states and actions is finite and small, falling into the \emph{tabular} Markov
Decision Process (MDP) formalism. As we shall see, this constraint is inherent
to the feasible set and cannot be overcome.

Unfortunately, important potential applications of IRL do not comply with this
property. Think about the problem of extracting a reward from demonstrations
collected by an expert driver~\citep{ziebart2008maximum} or an helicopter
control policy~\citep{abbeel2006helicopter}. These settings typically involve
large or continuous \emph{state spaces}, thus invalidating the tabular
representation of the space. Previous works have studied IRL beyond tabular
MDPs~\citep{michini2013scalable,finn2016guided,Fu2017LearningRR,barnes2024massively},
but none of them from the theoretical viewpoint of the feasible set.

From these considerations, the need for a unifying framework that incorporates
the important traits of the feasible reward set on the one hand, i.e., provides
a formal and coherent theoretical characterization of the sample complexity of
IRL, yet extends the notion to MDPs with large state spaces naturally arises.

In order to be useful and general, such framework should be easily applicable
not only to the common IRL problem of \citet{ng2000algorithms}, where $(i)$ the
expert's demonstrations are collected with an \emph{optimal} policy, and $(ii)$
the IRL algorithm can \emph{actively explore} the environment to collect the
data it needs, but also to other settings in which these properties do not hold.
In fact, note that in all the settings in which
demonstrations are provided by humans, property $(i)$ may also be violated.
Humans are characterized by a bounded rationality and may not be able to
optimally solve the demonstrated task. This is why a more realistic setting, in
which demonstrations of behavior come from suboptimal experts, has been widely
studied in the
literature~\citep{ziebart2008maximum,jing2019reinforcementlearningimperfectdemonstrations,kurenkov2019acteachbayesianactorcriticmethod,cheng2020policyimprovementimitationmultiple,liu2023activepolicyimprovementmultiple,poiani2024inverse}.
Moreover, while letting the IRL algorithm actively interact with the environment
helps improve its performance in many cases (e.g., in simulation
problems~\citealt{neu2007apprenticeship}), it may not be possible in settings
where safety is critical. Think about the possibility of damaging the helicopter
in~\citep{abbeel2006helicopter} or harm other people in an autonomous driving
scenario~\citep{arora2018survey}. This is why previous works considered access
to a batch of demonstrations and no interaction with the
environment~\citep{jarboui2021offlineinversereinforcementlearning,zhao2023inverse,lazzati2024offline}
invalidating property $(ii)$.

In this paper, we build on these premises as follows.

\textbf{Contributions.}~~We propose the \emph{reward compatibility} framework as a
unifying learning framework for the development of efficient
algorithms for IRL. The main contributions of the current work are summarized
as follows:
\begin{itemize}[noitemsep, topsep=0pt]
  \item We demonstrate that, without additional assumptions, the notion of
  feasible set can \emph{not} be learned efficiently in Markov Decision
  Processes (MDPs) when the state space is large/continuous, even under the
  structure enforced by Linear MDPs~\citep{jin2020provablyefficient,yang2019sampleoptimal}, in which we assume 
  that the reward function and the transition model can be expressed as linear 
  combinations of known features. This fact entails that the learning
  framework of the feasible reward set is not powerful enough to manage all the
  meaningful IRL problem settings (Section \ref{sec: limitations fs}).
  \item As a unifying learning framework for IRL, we propose \emph{Reward
  Compatibility}, a novel scheme that formalizes the intuitive notion of
  compatibility of a reward function with expert demonstrations. It generalizes
  the feasible set and allows us to define an original problem, \emph{IRL
  classification}. We devise a provably-efficient algorithm, \caty (\catylong),
  for solving the IRL classification problem in both the online settings with
  optimal and suboptimal expert, and in both tabular and Linear MDPs (see
  Section \ref{sec: the rewards compatibility framework} and Table \ref{table:
  problem settings}).
  \item In addition, we focus on the offline IRL setting in tabular MDPs. We
  show that, given the partial coverage of the space induced by the batch
  dataset, the offline setting requires suitable ``robust'' notions of reward
  compatibility, that we provide. Then, we develop a provably-efficient
  algorithm, \catyoff (\catyofflong), for solving the problem in both the
  optimal and suboptimal expert settings (see Section \ref{sec: offline} and
  Table \ref{table: problem settings}).
  \item We conclude with a discussion on reward compatibility that
  ranges from Reward Learning (ReL) \citep{jeon2020rewardrational}, to other IRL
  problem settings (e.g., \citep{ziebart2008maximum}), to the practical usage of
  learned reward functions, and we identify a variety of interesting research
  directions for future works.
\end{itemize}

This paper unifies and extends the previous conference papers
\citep{lazzati2024offline,lazzati2024scaleinverserllarge}. The former
demonstrates the need of suitable ``robust'' notions of feasible set in the
offline setting, and studies them. Instead, the latter demonstrates the
limitations of the feasible set in Linear MDPs, formalizes the reward
compatibility framework, and presents a version of \caty for the setting with
optimal expert. In this paper, we extend the contributions of
\citep{lazzati2024offline,lazzati2024scaleinverserllarge} as follows:
\begin{itemize}[noitemsep, topsep=0pt]
    \item We extend the results on the limitations of the feasible set in
    \citep{lazzati2024scaleinverserllarge} to the setting with suboptimal expert
    (Theorem \ref{thr: fs suboptimal not learnable}), and to other function
    approximation settings beyond Linear MDPs (Theorem \ref{thr: fs not
    learnable linear mixture} and a discussion in Section \ref{sec: limitations
    fs}).
    \item We extend the reward compatibility framework and the \caty algorithm
    in \citep{lazzati2024scaleinverserllarge} to the setting with suboptimal
    experts, and we provide theoretical guarantees of sample complexity for the
    algorithm (Theorem \ref{thr: upper bound caty online}, the claim concerning
    suboptimal experts). Moreover, we provide sufficient conditions to
    extend \caty to any function approximation setting beyond tabular and
    linear MDPs (see Section \ref{subsec: algorithm caty online}).
    \item We demonstrate the need of suitable ``robust'' notions of reward
    compatibility if we want to extend the framework to the offline setting
    (Theorem \ref{thr: comp non learnable offline}), and we devise them
    (Definition \ref{def: best worst comp}), similarly to what derived in
    \citep{lazzati2024offline} for the feasible set.
    \item We extend \caty to both the offline settings with optimal and
    suboptimal expert in tabular MDPs, and we provide theoretical guarantees of
    sample complexity (Theorem \ref{thr: bounds tabular off}), in a setting
    analogous to that analyzed in \citep{lazzati2024offline}. 
\end{itemize}

\begin{table}[t!]
  \centering
  \begin{tabular}{||c || c c||} 
    \hline
     & Tabular MDPs & Linear MDPs\\ [0.5ex] 
    \hline\hline
    Online (\caty) & $\widetilde{\mathcal{O}}\Big(\frac{H^3SA}{\epsilon^2}
    \Bigr{S+ \log \frac{1}{\delta}}\Big)$ & $\widetilde{\mathcal{O}}\Big(\frac{H^5d}{\epsilon^2}
    \Big(d+\log\frac{1}{\delta}\Big)\Big)$\\ 
    Offline (\catyoff) & $\widetilde{\mathcal{O}}\Big(\frac{H^4\log\frac{1}
    {\delta}}{\epsilon^2 d_{\min}}\Bigr{S+\log\frac{1}
    {\delta}}\Big)$ & /\\
    \hline
    \end{tabular}
  \caption{ In this table, we summarize the theoretical guarantees of sample
  complexity concerning the estimation of the transition model for the proposed
  algorithms.
  Interestingly, our algorithms enjoy the \emph{same rates in both the optimal
  and suboptimal expert settings}. In tabular MDPs, $S$ denotes the size of the
  state space, $A$ that of the action space, $H$ is the horizon, $\epsilon$ the
  accuracy, and $\delta$ the failure probability. $d$ represents the feature
  dimension in Linear MDPs, while $d_{\min}$ keeps into account the coverage of
  the environment provided by the batch dataset.
  We leave for future works the development of efficient algorithms for Linear
  MDPs in the offline setting. Indeed, the notion of compatibility that we
  adopted for the offline tabular setting is based on a definition of coverage
  of the space that cannot be applied to Linear MDPs. For this reason, a new
  notion of reward compatibility that exploits the existing definitions of
  coverage for Linear MDPs \citep{Wang2020WhatAT,jin2021ispessimism} should be
  developed, falling outside the scope of this paper. }
  \label{table: problem settings}
  \end{table}

\section{Preliminaries}\label{sec: preliminaries}

\textbf{Notation.}~~Given an integer $N \in \Nat$, we define
$\dsb{N}\coloneqq\{1,\dotsc,N\}$. We denote by $\Delta^\cX$ the probability
simplex over $\cX$, and by $\Delta_\cY^\cX$ the set of functions from $\cY$ to
$\Delta^\cX$. Sometimes, we denote the dot product between vectors $x,y$ as
$\dotp{x,y}\coloneqq x^\intercal y$. We employ $\mathcal{O},\Omega,\Theta$ for
the common asymptotic notation and
$\widetilde{\mathcal{O}},\widetilde{\Omega},\widetilde{\Theta}$ to omit
logarithmic terms. Given an equivalence relation $\equiv\subseteq\cX\times\cX$
and an item $x\in\cX$, we denote by $[x]_{\equiv}$ the equivalence class of $x$.

\textbf{Markov Decision Processes.}~~A finite-horizon Markov Decision Process
(MDP) without reward \citep{puterman1994markov} is defined as a tuple
$\cM\coloneqq\tuple{\cS,\cA,H, d_0,p}$, where $\cS$ and $\cA$ are the measurable
state and action spaces, $H \in \Nat$ is the horizon, $d_0\in\Delta^\cS$ is the
initial-state distribution,\footnote{In most results, w.l.o.g. we will consider
a deterministic initial state distribution $d_0(s_0)=1$.} and
$p\in\cP\coloneqq\Delta_{\SAH}^\cS$ is the transition model. Given a
(deterministic) reward function $r\in\mathfrak{R}\coloneqq[-1,1]^{\SAH}$, we
denote by $\overline{\cM}\coloneqq\cM\cup\{r\}$ the MDP obtained by pairing
$\cM$ and $r$.
Each policy $\pi\in\Pi\coloneqq\Delta_{\SH}^\cA$ induces in $\overline{\cM}$ a
state-action probability distribution
$d^{p,\pi}\coloneqq\{d^{p,\pi}_h\}_{h\in\dsb{H}}$ (we omit $d_0$ for simplicity)
that assigns, to each subset $\cZ\subseteq\cS\times\cA$, the probability of
being in $\cZ$ at stage $h \in \dsb{H}$ when playing $\pi$ in $\overline{\cM}$.
We denote with $\cS^{p,\pi}_h$ (resp. $\cZ^{p,\pi}_h$) the set of states (resp.
state-action pairs) supported by $d^{p,\pi}_h$ at stage $h$, and with
$\cS^{p,\pi}$ (resp. $\cZ^{p,\pi}$) the disjoint union of sets
$\{\cS^{p,\pi}_h\}_{h \in \dsb{H}}$ (resp. $\{\cZ^{p,\pi}_h\}_{h \in \dsb{H}}$).
For distribution $d^{p,\pi}$, we define
$d_{\min}^{p,\pi}\coloneqq\min_{(s,a,h)\in\cZ^{p,\pi}}d^{p,\pi}_h(s,a)$.
The $Q$-function of policy $\pi$ in MDP $\overline{\cM}$ is defined at every
$(s,a,h) \in \cS \times \cA \times \dsb{H}$ as $Q^{\pi}_h(s,a;p,r)\coloneqq
\E_{p,\pi}[\sum_{t=h}^H r_{t}(s_t,a_t)|s_h=s,a_h=a]$, and the optimal
$Q$-function as $Q^*_h(s,a;p,r)\coloneqq\sup_{\pi \in \Pi} Q^{\pi}_h(s,a;p,r)$,
where the expectation $\E_{p,\pi}$ is computed over the stochastic process
generated by playing policy $\pi$ in the MDP $\overline{\cM}$. Similarly, we
define the $V$-function of policy $\pi$ at $(s,h)$ as $V^\pi_h(s;p,r)\coloneqq
\E_{p,\pi}[\sum_{t=h}^H r_{t}(s_t,a_t)|s_h=s]$, and the optimal $V$-function as
$V^*_h(s;p,r)\coloneqq\sup_{\pi \in \Pi} V^{\pi}_h(s;p,r)$. We define the
utility of $\pi$ as $J^\pi(r;p)\coloneqq \E_{s\sim
d_0}[V^\pi_1(s;p,r)]=\sum_{(s,a,h)\in\SAH}d_h^{p,\pi}(s,a)r_h(s,a)$, and the
optimal utility as $J^*(r;p)\coloneqq \E_{s\sim d_0}[V^*_1(s;p,r)]$.
A forward (sampling) model\footnote{See \citet{kakade2003onthesample} or
\citet{azar2013minimax}.} of the environment permits to collect samples starting
from $s\sim d_0$ and following some policy. A generative (sampling) model
consists in an oracle that, given an arbitrary state-stage pair $s,h$ (resp.
state-action-stage triple $s,a,h$) in input, returns a sampled action $a'\sim
\pi_h(\cdot|s)$ (resp. a sampled next state $s'\sim p_h(\cdot|s,a)$).
For simplicity, when $\cS,\cA$ are finite, given a set
$\overline{\cZ}\subseteq\SAH$, we define the equivalence relation
$\equiv_{\overline{\cZ}}\subseteq\cP\times\cP$ such that, for any pair
$p,p'\in\mathcal{P}$ of transition models, we have $p\equiv_{\overline{\cZ}}p'$
if and only if $\forall (s,a,h)\in \overline{\cZ}:\;
p_h(\cdot|s,a)=p_h(\cdot|s,a)$.

\textbf{BPI and RFE.}~~In both Best-Policy Identification (BPI) \citep{menard2021fast} and
Reward-Free Exploration (RFE) \citep{jin2020RFE}, the learner has to explore the \emph{unknown} MDP to
optimize a certain reward function. In BPI, the learner observes the reward
function $r$ during exploration, and its goal is to output a policy
$\widehat{\pi}$ such that, in the true MDP with transition model $p$, we have 
$\mathbb{P}\big(J^*(r;p)-J^{\widehat{\pi}}(r;p)\le\epsilon\big)\ge 1-\delta$ for every
$\epsilon,\delta\in(0,1)$.
RFE considers the setting in which the reward  is revealed \emph{a
posteriori} of the exploration phase. Thus,
the goal of the agent in RFE is to compute an estimate $\widehat{p}$ of the true dynamics $p$
so that $\mathbb{P}\big(\sup_{r\in\mathfrak{R}} (J^*(r;p)-
J^{\widehat{\pi}_r}(r;p))\le\epsilon\big)\ge 1-\delta$ for every $\epsilon,\delta\in(0,1)$, where $\widehat{\pi}_r$
is the optimal policy in the MDP with $\widehat{p}$ as transition model and $r$
as reward function.

\textbf{IRL.}~~In the most common IRL setting, we are given an MDP without
reward $\cM\coloneqq\tuple{\cS,\cA,H, d_0,p}$ ($p$ may be known or unknown) and
a dataset of $n$ state-action trajectories
$\cD=\{\tuple{s_1^i,a_1^i,\dotsc,s_H^i,a_H^i,s_{H+1}^i}\}_{i\in\dsb{n}}$
obtained by executing $n$ times the expert's policy $\pi^E$ in $\cM$. The
underlying assumption is that there exists a true expert's reward $r^E$ that
``someway'' relates to $\pi^E$, and the IRL objective is to use the knowledge of
$\cM$ and the dataset $\cD$ to find a reward $\widehat{r}\approx r^E$ that
represents a good approximation for the unknown $r^E$ according to some
performance index. In literature, different works have analysed different kinds
of relationships between $r^E$ and $\pi^E$. To make some examples,
\citet{ng2000algorithms} assume that $\pi^E$ is optimal under $r^E$,
\citet{ziebart2010MCE} assume that $\pi^E$ is the maximum causal entropy policy
for $r^E$, \citet{Ramachandran2007birl} assume that $\pi^E$ is Boltzmann
rational for $r^E$, and \citet{poiani2024inverse} assume that the suboptimality
of $\pi^E$ under $r^E$ is no more than some given threshold $\xi$. We remark
that different assumptions give birth to different \emph{solution concepts} for
IRL, i.e., given the same expert's policy $\pi^E$, these works aim to estimate
different reward functions. The IRL solution concept of the \emph{feasible set}
has been formulated more recently
\citep{metelli2021provably,metelli2023towards}.

\section{Limitations of the Feasible Set}\label{sec: limitations fs} 
 
In this section, we show that the notion of \emph{feasible set}, an important
solution concept for IRL, unfortunately, cannot be estimated efficiently in MDPs
with a large state space, even under the structure provided by certain
function approximation settings, particularly Linear MDPs.
This limitation, along with some implementability issues, urges the introduction
of a more powerful solution concept for IRL, that overcomes these limitations
while preserving the desirable properties of the feasible set.
Let us begin by formalizing the notion of feasible set in two important IRL
settings.

\subsection{The feasible set}
We consider the IRL settings where we are given demonstrations collected by the
expert's policy $\pi^E$, and we assume that $\pi^E$ is optimal
\citep{ng2000algorithms} or suboptimal \citep{poiani2024inverse} under the true,
unknown, expert's reward $r^E$. In both cases, even the exact knowledge of
$\pi^E$ is not sufficient for uniquely identifying $r^E$, because many other
reward functions satisfy the same constraint of making $\pi^E$ (sub)optimal.
The \emph{feasible set} is defined as the set of rewards that comply with such
constraint \citep{metelli2021provably,metelli2023towards}. Intuitively, we do
not know the ``true reason'' ($r^E$) why the expert plays $\pi^E$ because we do
not have enough information, but we know that if it plays $\pi^E$ then its
``goal'' is one of those in the feasible set.          
Thus, simply put, the feasible set represents the entire set of objectives that
the expert may be optimizing. Formally, with optimal
expert:\footnote{To be precise, previous works
\citep{metelli2021provably,metelli2023towards,poiani2024inverse}  
adopt more restrictive definitions of feasible set, that assume the existence of
the expert's policy at all $(s,a,h)$ of the state-action space. Our definitions
relax such requirement \citep{lazzati2024offline}.}
\begin{defi}[Feasible Set for Optimal Expert]\label{def: fs optimal}
    
    Let $\cM=\tuple{\cS,\cA,H,d_0,p}$ be an MDP without reward and let $\pie$ be
    the expert's policy, that is optimal for the true unknown reward $r^E$. The
    \emph{feasible set} $\fs$ of rewards compatible with $\pie$ in
    $\cM$ is defined as:
    \begin{equation}\label{eq: fs optimal}
        \fs\coloneqq 
        \Bigc{r\in\mathfrak{R}\,\big|\,J^{\pie}(r;p)=J^*(r;p)}.
\end{equation}
\end{defi}
Simply put, we are told that the true reward $r^E$ satisfies the constraint
$J^{\pie}(r^E;p)=J^*(r^E;p)$, thus, all the rewards in $\fs$ represent
equally-plausible candidates for $r^E$.

Analogously, we can define the feasible set for the setting in which the expert
is suboptimal with a suboptimality contained in the (known) range $[L,U]$, for
some $U\ge L\ge 0$. Note that this setting generalizes the setting in
\citet{poiani2024inverse}.
\begin{defi}[Feasible Set for Subptimal Expert]\label{def: fs suboptimal}
    
    Let $U\ge L\ge 0$. Let $\cM=\tuple{\cS,\cA,H,d_0,p}$ be an MDP without
    reward and let $\pie$ be the expert's policy, with suboptimality in $[L,U]$
    under the true unknown reward $r^E$. The \emph{feasible set} $\fs^{L,U}$
    of rewards compatible with $\pie$ in $\cM$ is defined as:
    \begin{equation}\label{eq: fs suboptimal}
        \fs^{L,U}\coloneqq\Bigc{r\in\fR\,\big|\,J^*(r;p)-J^{\pi^E}(r;p)\in[L,U]}.
\end{equation}
\end{defi}
Again, the notion of feasible set represents the set of rewards containing the
true reward $r^E$. Note that $\fs^{0,0}=\fs$, and that $\fs^{0,U}\supseteq\fs$
for any $U\ge 0$. Moreover, observe that the setting with $L=0,U>0$ coincides
with the setting of \citet{poiani2024inverse}.

\subsection{Learning the Feasible Set}

In practical applications, the dynamics $(d_0,p)$ and the expert's policy $\pi^E$
are not known, but have to be estimated from samples.
%
%
In tabular MDPs, in the simple setting with a generative model for both $p$ and
$\pi^E$, we have algorithms for learning the feasible set that require a number
of calls to the generative model (i.e., sample complexity) that grows at most
\emph{quadratically} in the size of the state space. For instance, algorithm
US-IRL of \citet{metelli2023towards} can compute accurate estimates of (a
variant of) the feasible set in Definition \ref{def: fs optimal} with high
probability (w.h.p.), while, for the setting with a suboptimal expert, algorithm
US-IRL-SE of \citet{poiani2024inverse} can compute accurate estimates of (a
variant of) the feasible set in Definition \ref{def: fs suboptimal} w.h.p..

These algorithms are efficient as long as the cardinality of the state space is
reasonably small, but when the state space is large, these algorithms can be
very sample \emph{inefficient} because of the quadratic dependence.
Unfortunately, without any structural assumption on the MDP, no efficient
algorithm can be developed when the state space is large, even using the simple
generative model, due to the lower bounds to the sample complexity presented in
\citet{metelli2023towards} (optimal expert) and \citet{poiani2024inverse}
(suboptimal expert), that grow directly with the size of the state
space.\footnote{Again, to be precise, the lower bounds address a definition of
feasible set that is slightly different than Definition \ref{def: fs optimal}
and \ref{def: fs suboptimal}.}

For this reason, similarly to what is done in (forward) RL in environments with
large or even continuous state spaces, we make the simple but strong structural
assumption that the environment is a Linear MDP, and we try to design sample
efficient algorithms. Specifically, we assume the existence of a feature mapping
$\phi:\SA\to\RR^d$, with $\|\phi(s,a)\|_2\le1$ for every $ (s,a) \in \mathcal{S
\times A}$, such that the considered MDP without reward
$\cM=\tuple{\cS,\cA,H,d_0,p}$ has a dynamics that is linear in $\phi$, i.e.,
$p_h(\cdot|s,a)=\dotp{\phi(s,a),\mu_h(\cdot)}$ for all $h\in\dsb{H}$, where
$\mu_h=[\mu_h^1,\dotsc,\mu_h^d]^\intercal$ is a vector of $d$ signed measures
such that $\||\mu_h|(\cS)\|_2\le\sqrt{d}$. Linear MDPs also assume linear
rewards. Thus, we consider the true expert's reward $r^E$ to be linear in the
feature mapping $r_h(s,a)=\dotp{\phi(s,a),\theta_h}$ for all $h\in\dsb{H}$, for
some parameters $\theta_h$ s.t. $\|\theta_h\|_2\le \sqrt{d}$. For simplicity,
let us define the set of rewards that are linear in $\phi$ as $\fR_\phi$:
\begin{align*}
    \scalebox{0.98}{$  \displaystyle \fR_\phi\coloneqq\Bigc{
        r\in\mathfrak{R}\,\big|\,
        \forall h\in\dsb{H}\,
        \exists \theta_h\in\RR^d:\,\|\theta_h\|_2\le \sqrt{d}\,\wedge\,\forall (s,a)\in\SA:\,r_h(s,a)=
        \dotp{\phi(s,a),\theta_h}
        }.$}
\end{align*}
We denote a Linear
MDP without reward as $\cM_\phi=\tuple{\cS,\cA,H,d_0,p,\phi}$.

Since the Linear MDP assumption gives additional information on $r^E$, then our
definitions of feasible set change. In particular, they contain only rewards
that are linear in $\phi$:
\begin{defi}[Feasible Set for Optimal Expert in Linear MDPs]\label{def: fs
optimal linear mdps}
    Let $\cM_\phi=\tuple{\cS,\cA,H,d_0,p,\phi}$ be a Linear MDP without reward
    and let $\pie$ be the expert's policy, that is optimal for the true
    unknown reward $r^E$. The \emph{feasible set} $\fsl$ of rewards
    compatible with $\pie$ in $\cM_\phi$ is defined as:
    \begin{align}\label{eq: fs optimal linear mdps}
        \fsl\coloneqq 
        \Bigc{
            r\in\mathfrak{R}_\phi\,\big|\,&J^{\pie}(r;p)=J^*(r;p)
            }.
\end{align}
\end{defi}
\begin{defi}[Feasible Set for Subptimal Expert in Linear MDPs]\label{def: fs suboptimal linear mdps}
    Let $U\ge L\ge 0$. Let $\cM_\phi=\tuple{\cS,\cA,H,d_0,p,\phi}$ be a Linear MDP without
    reward and let $\pie$ be the expert's policy,
    with suboptimality in $[L,U]$ under the true unknown reward $r^E$. The
    \emph{feasible set} $\fsl^{L,U}$ of rewards compatible with $\pie$
    in $\cM_\phi$ is defined as:
    \begin{align}\label{eq: fs suboptimal linear mdps}
        \fsl^{L,U}\coloneqq\Bigc{r\in\fR_\phi\,\big|\,&J^*(r;p)-J^{\pi^E}(r;p)\in[L,U]
            }.
\end{align}
\end{defi}
Before discussing whether efficient IRL algorithms can be developed for learning
the feasible set in Linear MDPs, we formalize what we mean by ``efficient
algorithm'' in the simple setting with a generative model for $p$ and $\pi^E$.
The following definition customizes the general notion of PAC algorithm for IRL
(e.g., see \citep{metelli2023towards}) to the Linear MDPs setting:
\begin{defi}[PAC Algorithm]\label{defi: pac algorithm}%
    An algorithm $\fA$ is a PAC algorithm for the IRL setting with \emph{optimal
        expert} w.r.t. a metric $\rho$ between sets of rewards if, for any
        $\epsilon,\delta\in(0,1)$, for any Linear MDP without reward $\cM_\phi$
        and expert's policy $\pi^E$, the algorithm $\fA$ outputs set
        $\widehat{\cR}$ such that:
        \begin{align*}
            \mathop{\P}\limits_{(\cM_\phi,\pi^E),\fA}\Bigr{\rho\bigr{\widehat{\cR},\fsl}\le\epsilon}\ge 1-\delta,
        \end{align*}
        where $\P_{(\cM_\phi,\pi^E),\fA}$ denotes the probability measure
        induced by executing $\fA$ in the Linear IRL problem $(\cM_\phi,\pie)$,
        and $\widehat{\cR}$ is an estimate of the feasible set computed by $\fA$
        using $\tau$ calls to the generative model of $p$ and $\tau^E$ calls to
        the generative model of $\pi^E$.\\
        We define a PAC algorithm for the IRL setting with
        $[L,U]-$\emph{suboptimal expert} (with arbitrary $U\ge L\ge 0$)
        analogously.
\end{defi}
We say that an IRL algorithm is \emph{efficient} if it is PAC with values of
$\tau$ and $\tau^E$ that do not depend on the cardinality of the state space.
%

Intuitively, Linear MDPs provide structure to the transition model $p$ (and the
reward function), but \emph{not to the expert's policy $\pi^E$}. If we assume
that $\pi^E$ is known at all $(s,h)$, then, it is possible to design a PAC
algorithm for the optimal expert setting (Eq. \eqref{eq: fs optimal linear
mdps}) whose sample complexity is independent of the size of the state space
(see Algorithm 1 of \citet{lazzati2024scaleinverserllarge}).
%
%
However, if $\pi^E$ is unknown, we prove that no efficient algorithm can be
developed, i.e., in the worst case, any PAC algorithm has to collect a number of
samples that depends on the size of the state space.
\begin{restatable}[Statistical Inefficiency for Optimal
Expert]{thr}{thrfsnotlearnableopt}\label{thr: fs optimal not learnable}
    Let $\fA$ be a PAC algorithm for the IRL setting with optimal expert. Then,
    there exists a Linear MDP without reward $\cM_\phi$ with a state space with
    finite but arbitrarily large cardinality $S$, and a deterministic expert's
    policy $\pi^E$, in which $\fA$ requires at least $\tau^E\ge S$ calls to the
    generative model of $\pi^E$.
\end{restatable}
\begin{proof}
    Let $\fA$ be a PAC algorithm for the setting with optimal expert, and let
    $\rho$ be any metric between sets of rewards with respect to which $\fA$ is
    PAC. Let $\cM_\phi=\tuple{\cS,\cA,H,d_0,p,\phi}$ be a Linear MDP without
    reward, where $\cS=\{s_1,s_2,\dotsc,s_S\}$ is the finite state space,
    $\cA=\{a_1,a_2\}$ is the action space, $H=1$ is the horizon, $d_0(s)>0$
    $\forall s\in\cS$ is the initial state distribution, and
    $\phi(s,a)=\indic{a=a_1}$ $\forall (s,a)\in\SA$ is the feature map. Since
    $H=1$, we do not have to specify $p$.
    
    We construct a family
    $\bM=\bigr{(\cM_\phi,\pi^{E,0}),(\cM_\phi,\pi^{E,1}),\dotsc,(\cM_\phi,\pi^{E,S})}$
    of $S+1$ IRL problem instances that share the same environment $\cM_\phi$,
    but differ in the expert's policy. Specifically, we define the deterministic
    policy $\pi^{E,0}$ that always plays action $a_1$:
    $\pi^{E,0}_1(a_1|s)=1\,\forall s\in\cS$. For all $i\in\dsb{S}$, we define
    the deterministic policy $\pi^{E,i}$ that plays action $a_1$ at all states
    except state $s_i$: $\pi^{E,i}_1(a_1|s)=1\,\forall s\in\cS\setminus\{s_i\}$,
    $\pi^{E,i}_1(a_2|s_i)=1$.

    By direct calculation, the feasible sets of the various problem instances
    are:
    \begin{align*}
        &\cR_{\cM_\phi,\pi^{E,0}}=\Bigc{r\in\fR\,\big|\,\forall s\in\cS:\,r_1(s,a_1)\in[0,1]\,\wedge\,
        r_1(s,a_2)=0},\\
        &\cR_{\cM_\phi,\pi^{E,1}}=\dotsc=\cR_{\cM_\phi,\pi^{E,S}}=
        \Bigc{r\in\fR\,\big|\,\forall (s,a)\in\SA:\,r_1(s,a)=0}.
    \end{align*}
    %
    Since $\rho$ is a metric and
    $\cR_{\cM_\phi,\pi^{E,0}}\neq\cR_{\cM_\phi,\pi^{E,1}}$, then there exists a
    constant $c>0$ such that:
    \begin{align*}
        \rho\Bigr{\cR_{\cM_\phi,\pi^{E,0}},\cR_{\cM_\phi,\pi^{E,1}}}\ge c.
    \end{align*}
    We proceed by contradiction. Assume that $\fA$ is a PAC algorithm that
    requires less than $\tau^E < S$ calls to the generative model of the
    expert's policy in all possible IRL problem instances. When $\fA$ tackles
    problem instance $(\cM_\phi,\pi^{E,0})$, irrespective of the randomization
    method it uses to choose the states from which collecting samples, since it
    collects less than $S$ samples, then there is at least one state $s_j$ with
    $j\in\dsb{S}$ in which $\fA$ does not know whether the expert's policy plays
    $a_1$ or $a_2$. Therefore, $\fA$ cannot distinguish between $\pi^{E,0}$ and
    $\pi^{E,j}$, and thus it cannot know whether the true feasible set is
    $\cR_{\cM_\phi,\pi^{E,0}}$ or
    $\cR_{\cM_\phi,\pi^{E,j}}=\cR_{\cM_\phi,\pi^{E,1}}$.
    Nevertheless, by hypothesis, being $\fA$ a PAC algorithm, it must output a set
    $\widehat{\cR}$ such that, for any $\epsilon,\delta\in(0,1)$:
    \begin{align*}
        \scalebox{0.97}{$  \displaystyle\mathop{\P}\limits_{(\cM_\phi,\pi^{E,0}),\fA}\biggr{\Bigc{\rho\bigr{
                \cR_{\cM_\phi,\pi^{E,0}},\widehat{\cR}}\le\epsilon}}\ge 1-\delta
                \wedge
                \mathop{\P}\limits_{(\cM_\phi,\pi^{E,0}),\fA}\biggr{\Bigc{\rho\bigr{
                \cR_{\cM_\phi,\pi^{E,1}},\widehat{\cR}}\le\epsilon}}\ge 1-\delta.$}
    \end{align*}

    In particular, for the choice of accuracy $\epsilon<c/2$ and failure
    probability $\delta<1/2$, we have:
    \begin{align*}
        \mathop{\P}\limits_{(\cM_\phi,\pi^{E,0}),\fA}\biggr{\Bigc{\rho\bigr{
                \cR_{\cM_\phi,\pi^{E,0}},\widehat{\cR}}<\frac{c}{2}}}>\frac{1}{2}
                \wedge
                \mathop{\P}\limits_{(\cM_\phi,\pi^{E,0}),\fA}\biggr{\Bigc{\rho\bigr{
                \cR_{\cM_\phi,\pi^{E,1}},\widehat{\cR}}<\frac{c}{2}}}>\frac{1}{2}.
    \end{align*}
    However, this results in a contradiction:
    \begin{align*}
        1&=\frac{1}{2}+\frac{1}{2}\\
        &<\mathop{\P}\limits_{(\cM_\phi,\pi^{E,0}),\fA}\biggr{\Bigc{\rho\bigr{
            \cR_{\cM_\phi,\pi^{E,0}},\widehat{\cR}}<\frac{c}{2}}}
        +\mathop{\P}\limits_{(\cM_\phi,\pi^{E,0}),\fA}\biggr{\Bigc{\rho\bigr{
            \cR_{\cM_\phi,\pi^{E,1}},\widehat{\cR}}<\frac{c}{2}}}\\
        &\markref{(1)}{=}
        \mathop{\P}\limits_{(\cM_\phi,\pi^{E,0}),\fA}\biggr{\Bigc{\rho\bigr{
            \cR_{\cM_\phi,\pi^{E,0}},\widehat{\cR}}<\frac{c}{2}}\;\cup\;
            \Bigc{\rho\bigr{
            \cR_{\cM_\phi,\pi^{E,1}},\widehat{\cR}}<\frac{c}{2}}},
    \end{align*}
    where at (1) we use that the two events are disjoint because $\rho$
    satisfies the triangle inequality. Since the probability of no event can be
    larger than 1, we get a contradiction that demonstrates the claim of the
    theorem.
\end{proof}

\begin{restatable}[Statistical Inefficiency for Subptimal
    Expert]{thr}{thrfsnotlearnablesubopt}\label{thr: fs suboptimal not learnable}
        Let $0\le L\le U<1$ be arbitrary. Let $\fA$ be a PAC algorithm for the
        IRL setting with $[L,U]$-suboptimal expert. Then, there exists a Linear
        MDP without reward $\cM_\phi$ with a state space with finite but
        arbitrarily large cardinality $S$, and a deterministic expert's policy
        $\pi^E$, in which $\fA$ requires at least $\tau^E\ge S$ calls to the
        generative model of $\pi^E$.
    \end{restatable}
\begin{proof}
    Let $0\le L\le U<1$ be arbitrary. Let $\fA$ be a PAC algorithm for the
    setting with $[L,U]$-suboptimal expert, and let $\rho$ be any metric between
    sets of rewards with respect to which $\fA$ is PAC. We consider the same
    family $\bM$ of IRL problem instances used in the proof of Theorem \ref{thr:
    fs optimal not learnable}. Let us compute the feasible sets for these problems.
    
    By definition of Linear MDPs, the reward parameter $\theta$ belongs to
    $[-1,+1]$. For all $\theta\in[-1,+1]$, we denote by
    $\Delta_0(\theta)\coloneqq J^*(r_\theta;p)-J^{\pi^{E,0}}(r_\theta;p)$ the
    suboptimality of $\pi^{E,0}$ in $\cM_\phi$ using reward $r_\theta$ defined
    as $r_\theta(s,a)=\dotp{\phi(s,a),\theta}$ $\forall s,a$. Similarly, for all
    $i\in\dsb{S}$, we denote by $\Delta_i(\theta)\coloneqq
    J^*(r_\theta;p)-J^{\pi^{E,i}}(r_\theta;p)$ the suboptimality of $\pi^{E,i}$
    in $\cM_\phi$ using reward $r_\theta$ defined previously. We distinguish
    three cases:
    \begin{itemize}
        \item If $\theta\in[-1,0)$ $\to$ then $\Delta_0(\theta)=-\theta$ and
        $\Delta_i(\theta)=-\frac{S-1}{S}\theta$ $\forall i\in\dsb{S}$.
        \item If $\theta=0$ $\to$ then $\Delta_0(\theta)=\Delta_i(\theta)=0$ $\forall i\in\dsb{S}$.
        \item If $\theta\in(0,+1]$ $\to$ then $\Delta_0(\theta)=0$ and
        $\Delta_i(\theta)=\frac{\theta}{S}$ $\forall i\in\dsb{S}$.
    \end{itemize}
    Based on these expressions, we can compute the feasible sets. When $L=0$,
    the values of $\theta\in[-1,+1]$ that make, respectively, $\Delta_0(\theta)$
    and $\Delta_i(\theta)$ in $[0,U]$ for all $i\in\dsb{S}$ are:
    \begin{align*}
        \cR_{\cM_\phi,\pi^{E,0}}^{0,U}&=
        \Bigc{r\in\fR\,\big|\,\exists \theta\in[-U,+1]:\,\forall (s,a)\in\SA:\,
        r(s,a)=\dotp{\phi(s,a),\theta}}\\
        &=\Bigc{r\in\fR\,\big|\,\forall s\in\cS:\,r_1(s,a_1)\in[-U,1]\,\wedge\,
        r_1(s,a_2)=0},\\
        \cR_{\cM_\phi,\pi^{E,i}}^{0,U}&=
        \Bigc{r\in\fR\,\big|\,\exists \theta\in[\max\{-\frac{S}{S-1}U,-1\},\min\{SU,+1\}]:\\
        &\qquad\qquad\qquad\forall (s,a)\in\SA:\,
        r(s,a)=\dotp{\phi(s,a),\theta}}\\
        &=\scalebox{0.99}{$\Bigc{r\in\fR\,\big|\,\forall s\in\cS:\,r_1(s,a_1)\in
        [\max\{-\frac{S}{S-1}U,-1\},\min\{SU,+1\}]\,\wedge\,
        r_1(s,a_2)=0}.$}
    \end{align*}
    Note that these sets are different for all $U\in[0,1),S\ge2$, thus we can
    proceed analogously as in the proof of Theorem \ref{thr: fs optimal not
    learnable} to demonstrate the result. When $L\neq 0$, we have, for all
    $i\in\dsb{S}$:
    \begin{align*}
        \cR_{\cM_\phi,\pi^{E,0}}^{L,U}&=
        \Bigc{r\in\fR\,\big|\,\exists \theta\in[-U,-L]:\,\forall (s,a)\in\SA:\,
        r(s,a)=\dotp{\phi(s,a),\theta}}\\
        &=\Bigc{r\in\fR\,\big|\,\forall s\in\cS:\,r_1(s,a_1)\in[-U,-L]\,\wedge\,
        r_1(s,a_2)=0},\\
        \cR_{\cM_\phi,\pi^{E,i}}^{L,U}&=
        \Big\{r\in\fR\,\big|\,\exists \theta\in[\max\{-\frac{S}{S-1}U,-1\},\max\{-\frac{S}{S-1}L,-1\}]\\
        &\qquad\qquad\qquad\scalebox{0.99}{$  \displaystyle\cup[\min\{SL,1\},\min\{SU,1\}]:
        \forall (s,a)\in\SA:\,
        r(s,a)=\dotp{\phi(s,a),\theta}\Big\} $}\\
        &=\Bigc{r\in\fR\,\big|\,\forall s\in\cS:\,r_1(s,a_1)\in
        [\max\{-\frac{S}{S-1}U,-1\},\max\{-\frac{S}{S-1}L,-1\}]
        \\
        &\qquad\qquad\qquad\cup[\min\{SL,1\},\min\{SU,1\}]\,\wedge\,
        r_1(s,a_2)=0}.
    \end{align*}
    Again, note that these sets are different for all $L\in(0,1),U\in[L,1)$,
    thus we can proceed analogously as in the proof of Theorem \ref{thr: fs
    optimal not learnable} to demonstrate the result. This concludes the proof.
\end{proof}
In words, the theorems above show that, even under the easiest learning
conditions (i.e., generative model and deterministic expert), and even imposing
the strong structure provided by Linear MDPs, the sample complexity of learning
the feasible set scales directly with the cardinality of the state space $S$,
making it is infeasible when $S$ is large or even infinite.

Of course, note that the feasible set is not efficiently learnable if we impose
structural assumptions milder than those imposed by Linear MDPs.
For instance, consider the families of problems with low Bellman rank
\citep{jiang2017contextual}, low Eluder dimension \citep{wang2020general}, or
low Bellman Eluder dimension \citep{jin2021eluder}. As shown by
\citet{jin2021eluder}, all of them contain the family of Linear MDPs, therefore,
the structure imposed to the reward function, and, thus, to the feasible set, is
\emph{at most} as strong as that provided by Linear MDPs (see Eq. \eqref{eq: fs
optimal linear mdps} and \eqref{eq: fs suboptimal linear mdps}) and at least as
strong as no structure (see Eq. \eqref{eq: fs optimal} and \eqref{eq: fs
suboptimal}). An analogous statement can be made for the transition model. For
these reasons, if we adapt Definition \ref{defi: pac algorithm} to these
settings, we obtain that, in the IRL setting with optimal or suboptimal expert,
any PAC algorithm for learning the feasible set requires at least $\tau^E\ge S$
calls to the generative model of $\pi^E$, because this is the minimum number of
samples required with Linear MDPs (see Theorem \ref{thr: fs optimal not
learnable} and Theorem \ref{thr: fs suboptimal not learnable}).
%

We remark that the same negative results hold also for other function
approximation settings, like Linear Mixture MDPs \citep{zhou2021nearly}:
\begin{infthr}[Statistical Inefficiency - Linear Mixture MDPs]
    \label{thr: fs not learnable linear mixture}
    In the IRL setting with optimal or suboptimal expert, any PAC algorithm for
    learning the feasible set in Linear Mixture MDPs
    requires at least $\tau^E\ge S$ calls to the generative model of $\pi^E$.
    \end{infthr}
    \begin{proofsketch}
        First, we observe that Linear Mixture MDPs do not impose structure on
        the reward function, thus the feasible sets coincide with those in
        Definition \ref{def: fs optimal} and \ref{def: fs suboptimal}. Next, we
        construct problem instances analogous to those constructed in the proofs
        of Theorem \ref{thr: fs optimal not learnable} and Theorem \ref{thr: fs
        suboptimal not learnable}, without the need of choosing specific values
        for the feature mapping. Finally, we recognize that if the learning
        algorithm does not ``know'' the expert's action in a state, then it does
        not know which action in that state should have the larger reward, thus
        ending up in an estimate with finite error, that can be reduced only
        discovering the expert's action at all states, i.e., with $\tau^E=S$
        samples (more than this if $\pi^E$ is stochastic). The proof can be
        formally concluded by contradiction as we did in Theorem \ref{thr: fs
        optimal not learnable}.
    \end{proofsketch}
To sum up, intuitively, learning the feasible set efficiently
requires the exact knowledge of (the support of) the expert's policy in the
entire state space. While collecting (at least) one sample from each state is
feasible in tabular MDPs, and there are many efficient algorithms doing so
\citep{metelli2023towards,zhao2023inverse,poiani2024inverse},
this becomes clearly infeasible when the state space is large.
Due to the results presented above, i.e., that the common function approximation
settings that permit the development of efficient algorithms for (forward) RL in
problems with large state spaces fail to do so for IRL, then we can conclude
that \textit{sample-efficient IRL requires structural assumptions stronger than
those required by sample-efficient RL}, as long as the solution concept
considered for IRL is the notion of feasible set.
%
%
In addition, we remark that, irrespective of the specific problem setting
considered (e.g., optimal/suboptimal expert), the feasible set contains an
\emph{infinite} amount of reward functions, thus it is not possible to
practically implementing an algorithm that outputs \emph{all} the rewards in the
feasible set. This fact limits the framework of the feasible set as a mere
theoretical tool.

\section{The Reward Compatibility Framework}\label{sec: the rewards
compatibility framework}

In this section, we present the main contribution of this work: \emph{Reward
Compatibility}, a novel framework for IRL that allows us to conveniently
rephrase the learning from demonstrations problem as a \emph{classification}
task.
More specifically, to overcome the two issues highlighted in Section \ref{sec:
limitations fs}:
\begin{itemize}[noitemsep, topsep=0pt]
    \item We introduce the notion of \emph{reward compatibility} to permit
    sample-efficient IRL without additional structural assumptions. This
    framework replaces the feasible set as solution concept for IRL.
    \item We reformulate IRL as a \emph{classification problem} to allow the
    practical development of efficient algorithms. An algorithm will take in input a
    single reward and will output a single boolean.
\end{itemize}

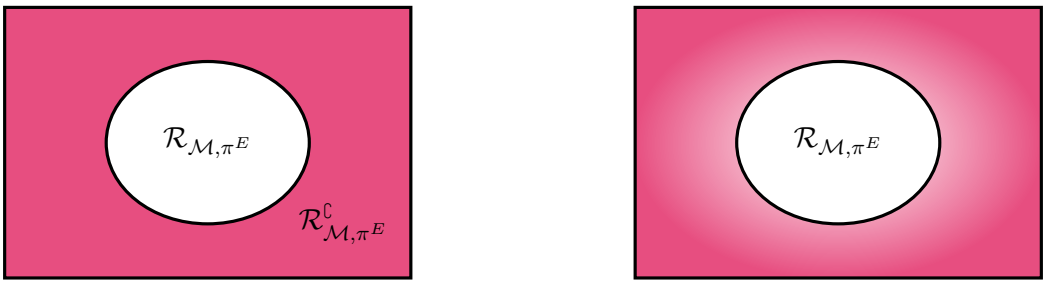
\begin{figure}[!t]
    \begin{subfigure}[b]{0.45\textwidth}
        \centering
        \usetikzlibrary{intersections}
        \usetikzlibrary{shadings}
        \begin{tikzpicture}[scale=0.90]
            \filldraw[color=rl3Salmon!80,draw=black, very thick]
            (-3,-2) rectangle (3,2);
            \filldraw[black,fill=white, very thick]
            (0,0) ellipse (1.5 and 1.2);
            \node at (0,0) {$\cR_{\cM,\pi^E}$};
            \node at (2,-1.2) {$\cR_{\cM,\pi^E}^\complement$};
        \end{tikzpicture}
    \end{subfigure}
    \hfill       
    \begin{subfigure}[b]{0.45\textwidth}
        \centering
    \usetikzlibrary{intersections}
    \usetikzlibrary{shadings}
    \begin{tikzpicture}[scale=0.90]
        
        \shade[draw=black,inner color=white, middle color=white,
        outer color=rl3Salmon!80, very thick]
        (-3,-2) rectangle (3,2);

        \filldraw[black,fill=white, very thick]
        (0,0) ellipse (1.5 and 1.2);
        \node at (0,0) {$\cR_{\cM,\pi^E}$};
    \end{tikzpicture}
    \end{subfigure}
    \caption
    {(Left) In the framework of the feasible set, all the rewards in
    $\cR_{\cM,\pi^E}^\complement$ (in pink), i.e., outside the feasible set
    $\cR_{\cM,\pi^E}$ (in white), are considered equally wrong. (Right) In the
    framework of the reward compatibility, the rewards outside the feasible set
    $\cR_{\cM,\pi^E}$ suffer from different errors (scale of pink). }
    \label{fig: fs vs rew comp}
\end{figure}

\subsection{Reward Compatibility}\label{sec: reward compatibility}

Let $\cM=\tuple{\cS,\cA,H,d_0,p}$ be an MDP without reward\footnote{For
simplicity, we consider a tabular MDP. Nevertheless, the presentation is
independent of structural assumptions of the MDP (e.g., Linear MDP).} and let
$\pie$ be the expert's policy. Let $U\ge L\ge0$ be arbitrary.
As described in the previous section, the feasible set for optimal expert $\fs$
(Definition \ref{def: fs optimal}) and the feasible set for suboptimal expert
$\fs^{L,U}$ (Definition \ref{def: fs suboptimal}) contain \emph{all and only}
the reward functions that \emph{exactly} comply with the given constraint, i.e.,
respectively, that make $\pi^E$ optimal, and that make the suboptimality of
$\pi^E$ inside $[L,U]$.
In other words, we can interpret the feasible set as carrying out a \emph{binary
classification} of rewards based on whether they satisfy a given ``hard''
constraint, as shown in Figure \ref{fig: fs vs rew comp} (Left). We say that the
rewards inside the feasible set are \emph{compatible} with the demonstrations of
$\pi^E$ \citep{metelli2023towards}. However, our insight is that some rewards
outside the feasible set are \emph{``more'' compatible} with $\pie$ than others,
as shown in Figure \ref{fig: fs vs rew comp} (Right) and in the following
example (a similar example may be constructed also for the suboptimal expert):

\begin{example}[label=exa:cont]\label{example: muffin cake soup} Consider an MDP
without reward $\cM$ with one state and horizon one in which the expert has
three actions: eating a muffin (M), a cake (C), or a soup (S). Assume that the
expert, assumed optimal, demonstrates action $\pi^E=M$, i.e., it eats the
muffin. Clearly, the rewards $r_1,r_2$ defined as follows:
  \begin{align*}
    r_1(a)=\begin{cases}
        +0.99 & \text{if }a=M\\
        +1 & \text{if }a=C\\
        0 & \text{if }a=S\\
    \end{cases},\qquad
    r_2(a)=\begin{cases}
        0 & \text{if }a=M\\
        +1 & \text{if }a=C\\
        0 & \text{if }a=S\\
    \end{cases},
  \end{align*}
  do not belong to the feasible set $\fs$, because they do not make ``eating the
  muffin'' the optimal strategy. However, intuitively, $r_1$ is \emph{``more''
  compatible} with $\pie$ than $r_2$, because it makes M a very good action,
  while reward $r_2$ makes it very bad. Clearly, we make a larger error if we
  model the preferences of the expert with $r_2$ instead of $r_1$. However, the
  feasible set $\fs$ is blind to the difference between $r_1$ and $r_2$, and it
  ``refuses'' both of them.
  \end{example}

  In the setting with optimal expert, we propose the following ``soft''
 definition of compatibility of a reward with demonstrations to capture this
 intuition. To be precise, since the larger the quantity the smaller the
 compatibility, we talk of (non)compatibility.

\begin{defi}[Reward (non)Compatibility - Optimal Expert]\label{def: reward
compatibility optimal} 

Let $\cM=\tuple{\cS,\cA,H,d_0,p}$ be an MDP without reward and let $\pie$ be the
expert's policy. For any reward $r\in\fR$, we define the
\emph{(non)compatibility} $\compr$ of $r$ with $\pi^E$ in
$\cM$ as:
  \begin{align}\label{eq: reward compatibility optimal}
      \compr\coloneqq
      J^*(r;p) - J^{\pie}(r;p).
  \end{align}
  \end{defi}
In words, $\compr$ quantifies the \textit{suboptimality} of $\pie$ in the MDP
obtained by $\cM$ and $r$.
Since, by definition of the setting, the true expert's reward
$r^E$ satisfies $J^*(r^E;p)-J^{\pie}(r^E;p)=0$, then it is clear that $\compr$
measures the degree to which reward $r$ fulfils this constraint.
Observe that the feasible set $\fs$ can be seen as the set of rewards with zero
(non)compatibility, i.e., $\fs=\{r\in\fR\,|\,\compr=0\}$.
Since the smaller the $\compr$ the more $r$ is compatible with $\pi^E$, and
since the larger the $\compr$ the less $r$ is compatible with $\pi^E$, we say
that $\comp$ quantifies the \emph{(non)}compatibility.
Visually, Figure \ref{fig: fs vs rew comp} (Right) represents function
$\comp:\fR\to[0,+\infty]$ using different magnitudes of the pink color. Rewards $r$
with zero $\compr$ (i.e., inside the feasible set) are pictured in white, while
rewards $r$ with larger values of $\compr$ are pictured with larger magnitudes
of pink.
\begin{example}[continues=exa:cont]
    Reward (non)compatibility permits to discriminate between $r_1$ and $r_2$.
  Indeed, we have that $\comp(r_1)=0.01$, $\comp(r_2)=1$. Since reward $r_1$
  suffers from a smaller (non)compatibility than $r_2$, i.e.,
  $\comp(r_1)<\comp(r_2)$, then we have that $r_1$ is more compatible with $\pie$
  than $r_2$, as expected.
  \end{example}
  By definition of IRL, the observed expert's policy $\pie$ does not reveal any
  information about the other policies. Thus, it is meaningful that $\comp$
  considers the suboptimality of $\pie$ only, as illustrated below.
  \begin{example}
  Consider now reward $r_1'$, defined below, and compare it with $r_1$.
  \begin{align*}
      r_1'(a)=\begin{cases}
          +0.99 & \text{if }a=M\\
          0 & \text{if }a=C\\
          +1 & \text{if }a=S\\
      \end{cases}.
    \end{align*}
    The optimal policy under $r_1'$ is to play $S$, while under $r_1$ we would
    play $C$. However, demonstrations from $\pi^E$ alone do not provide
    information on C or S, but only about $\pie=M$, therefore it is meaningful
    that $r_1$ and $r_1'$ are equally compatible $\comp(r_1')=\comp(r_1)=0.01$
    with the given demonstrations.
  \end{example}
In an analogous manner, we can define a notion of reward compatibility for the
setting with suboptimal expert:
\begin{defi}[Reward (non)Compatibility - Suboptimal Expert]\label{def: reward
compatibility suboptimal}
    
Let $U\ge L\ge 0$ be arbitrary. Let $\cM=\tuple{\cS,\cA,H,d_0,p}$ be an MDP
without reward and let $\pie$ be the expert's policy. For any reward $r\in\fR$,
we define the \emph{(non)compatibility} $\comprlu$ of $r$ with $\pi^E$ in $\cM$
based on $L,U$ as:
  \begin{align}\label{eq: reward compatibility suboptimal}
      \comprlu\coloneqq
      \min\limits_{x\in[L,U]}\bigg|\Bigr{J^*(r;p) - J^{\pie}(r;p)}-x\bigg|.
  \end{align}
    \end{defi}
    Simply put, $\overline{\cC}^{L,U}_{\cM,\pi^E}(r)$ measures how far from the
    interval $[L,U]$ lies the suboptimality of the
    expert's policy $\pie$ w.r.t. $r$ in $\cM$.
    In other words, $\overline{\cC}^{L,U}_{\cM,\pi^E}(r)$ quantifies the extent to
    which reward $r$ fulfils the constraint
    $J^*(\cdot;p)-J^{\pi^E}(\cdot;p)\in[L,U]$, that defines the setting. 
    Note that, again, the feasible set can be seen as the set of rewards with
    zero (non)compatibility
    $\fslu=\{r\in\fR\,|\,\comprlu=0\}$.

\subsection{The IRL Classification Formulation}\label{sec: irl classification
formulation}

In practical applications, we are often interested in understanding whether some
designed or computed reward function represents the preferences of the observed
expert agent. Beyond not being practically implementable, as mentioned in
Section \ref{sec: limitations fs}, an algorithm that outputs the feasible set
would not even permit to search efficiently for specific rewards inside the set.
For these reasons, we reformulate IRL as a classification problem as follows.
  \begin{defi}[IRL Classification Problem]\label{def: IRL problem} An \emph{IRL
    Classification Problem} instance is a tuple $\tuple{\cM,\pie,\cR,\Delta}$,
    where $\cM$ is an MDP without reward, $\pie$ is the expert's policy,
    $\cR\subseteq\mathfrak{R}$ is the set of rewards to classify, and
    $\Delta\in\RR_{\ge0}$ is a threshold. The goal is to classify the rewards
    $r\in\cR$ based on a given notion of (non)compatibility
    $\overline{\cC}:\fR\to\RR_{\ge0}$:
    \begin{align*}
        \forall r\in\cR:\;\textnormal{ \textbf{if} }\;\overline{\cC}(r)
        \le\Delta\;\textnormal{ \textbf{then} } \;  \text{ \textnormal{\textbf{return}} \textnormal{True}, \; \textnormal{\textbf{else}} \textnormal{\textbf{return}} \textnormal{False}}.
    \end{align*}
\end{defi}
\begin{defi}[IRL Algorithm]\label{def: IRL algorithm}
    An \emph{IRL algorithm} takes in input a
    reward $r\in\cR$ and outputs a boolean saying whether
    $\overline{\cC}(r)\le\Delta$.
\end{defi}
In words, in an IRL classification problem we aim to classify  a given set of
reward functions $\cR$ in two classes, depending on their (non)compatibility
with a given expert's policy $\pi^E$ in an environment $\cM$. Specifically, one
class represents rewards that are compatible with $\pi^E$, while the other class
describes rewards that are not. Whether a (non)compatibility value is considered
small or large depends on the classification threshold $\Delta$.
Moreover, note that the definitions above are general, and the notion of
(non)compatibility $\overline{\cC}$ adopted depends on the specific IRL setting
considered. In particular, they can be applied to settings other than IRL with
optimal \citep{ng2000algorithms} or suboptimal \citep{poiani2024inverse} expert,
like maximum entropy IRL \citep{ziebart2008maximum} or Bayesian IRL
\citep{Ramachandran2007birl}, as long as a suitable definition of
(non)compatibility is provided.
Observe that we allow for $\cR\neq\mathfrak{R}$ to manage scenarios in which,
for instance, we have some prior knowledge on $r^E$, i.e.,
$r^E\in\cR\subset\mathfrak{R}$, and therefore we want to ``update'' our
knowledge with the observed demonstrations of $\pi^E$ without wasting time or
computational power with the non-interesting rewards outside $\cR$.

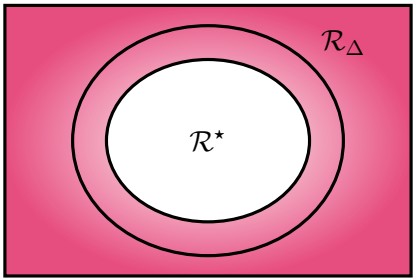
\begin{figure}[t]
    \centering
\usetikzlibrary{intersections}
\usetikzlibrary{shadings}
\begin{tikzpicture}[scale=0.9]
    \shade[draw=black,inner color=white,
    outer color=rl3Salmon!80, very thick]
    (-3,-2) rectangle (3,2);
    \filldraw[black,fill=white, very thick]
    (0,0) ellipse (1.5 and 1.2);
    \draw[black,very thick]
    (0,0) ellipse (2 and 1.7);
    \node at (0,0) {$\cR^\star$};
    \node at (2,1.45) {$\cR_{\Delta}$};
\end{tikzpicture}
\caption
{The set of rewards positively classified by an IRL algorithm
$\cR_\Delta$ with $\Delta>0$ represents an enlargement of the feasible set
$\cR^\star$, i.e., $\cR^\star\subseteq\cR_\Delta$. Visually, $\cR_\Delta$
contains rewards outside $\cR^\star$ whose magnitude of pink is not too
intense.}
\label{fig: r delta}
\end{figure}

\textbf{Relation with the feasible set.}~~How does the solution concept/learning
target of the IRL classification formulation relate to the feasible set? Let us
denote by $\cR^\star\coloneqq\{r \in \fR \,|\, \overline{\cC}(r)=0 \}$ the
feasible set for the considered IRL setting. Moreover, let $\cR=\fR$ (i.e., we
have to classify all the rewards) and define the set of rewards positively
classified by an IRL algorithm as $\cR_\Delta$, i.e.,
$\mathcal{R}_{\Delta}\coloneqq \{r \in \fR \,|\, \overline{\cC}(r) \le\Delta
\}$. Then, it is clear that, for any $\Delta\ge 0$:
\begin{align*}
    \cR^\star=\cR_0\subseteq\cR_\Delta.
\end{align*}
In words, if we choose $\Delta=0$, then we are simply classifying as positive
all and only the rewards inside the feasible set, while if we consider $\Delta
>0$ strictly, then we are enlarging the feasible set. The situation is
exemplified in Figure \ref{fig: r delta}.

Differently from the framework of the feasible set, as we shall see in the next
section, it is possible to practically implement the new notion of IRL algorithm
(Definition \ref{def: IRL algorithm}), with guarantees of sample efficiency even
when the state space is large, making the same structural assumptions considered
for (forward) RL.
Intuitively, the reward compatibility framework permits the development of
sample-efficient algorithms at the price of an enlargement of the feasible
set controlled by the classification threshold $\Delta$.

\section{Learning Compatible Rewards in the Online Setting}\label{sec: online}

In this section, we first describe the online learning setting for IRL
classification, and, then, we present an algorithm, \caty (\catylong), that
solves the task in a sample-efficient manner.

\subsection{Problem Setting}\label{subsec: problem setting online}

In practical applications, the transition model $p$ of the environment and the
expert's policy $\pi^E$ are not known, but have to be estimated from samples.
We consider the following learning setting for the IRL classification problem,
which is made of two phases: an \textcolor{vibrantBlue}{exploration phase} and a
\textcolor{vibrantRed}{classification phase}.

First, during the \textcolor{vibrantBlue}{exploration phase}, the learning algorithm receives as
input an expert's dataset
$\cD^E=\{\tuple{s_1^{E,i},a_1^{E,i},\dotsc,s_H^{E,i},a_H^{E,i},s_{H+1}^{E,i}}\}_{i\in\dsb{\tau^E}}$
of $\tau^E$ state-action trajectories collected by the expert's policy $\pi^E$
in the (unknown) environment $\cM$, and the set of rewards to classify $\cR$,
and it is allowed to explore the environment $\cM$ at will to collect an
exploration dataset
$\cD=\{\tuple{s_1^i,a_1^i,\dotsc,s_H^i,a_H^i,s_{H+1}^i}\}_{i\in\dsb{\tau}}$ of
$\tau$ state-action trajectories.

Next, during the \textcolor{vibrantRed}{classification phase}, the learning algorithm is not
allowed to interact with the environment $\cM$ anymore. It receives in input a
reward $r\in\cR$ and a threshold $\Delta\ge0$, and it must classify $r$ based on
its (non)compatibility $\overline{\cC}(r)$ and $\Delta$. Since the transition
model $p$ of $\cM$ and the expert's policy $\pi^E$ are unknown, the learning
algorithm has to use the datasets $\cD$ and $\cD^E$ to construct a meaningful
estimate of (non)compatibility $\widehat{\cC}(r)\approx\overline{\cC}(r)$
for the classification.

\subsection{Learning Framework}\label{subsec: online learning framework}

Observe that the considered learning setting represents the most common IRL
setting in practical applications, where we are given a batch dataset of expert
demonstrations $\cD^E$ and we can actively explore the environment to estimate
its dynamics $p$. Intuitively, a learning algorithm is efficient in this context
if it carries out an accurate classification of the input rewards with high
probability, using the least amount of samples $\tau^E$ and $\tau$ possible. We
formalize this concept as follows:
  
  \begin{defi}[PAC Algorithm - Online setting]\label{def: pac algorithm online}
  Let $\epsilon>0,\delta\in(0,1)$, and let $\cD^E$ be a dataset of $\tau^E$
  expert's trajectories. An algorithm $\mathfrak{A}$ exploring for $\tau$
  episodes is \emph{$(\epsilon,\delta)$-PAC} for the IRL classification problem if:
  \begin{align*} 
      \mathop{\mathbb{P}}\limits_{\cM,\pie,\mathfrak{A}}\Big(
          \sup\limits_{r\in\cR} \Big|\overline{\cC}(r)-
          \widehat{\cC}(r)\Big|\le\epsilon\Big)\ge 1-\delta,
  \end{align*}
  where $\mathbb{P}_{\cM,\pie,\mathfrak{A}}$ is the joint probability measure
  induced by $\pie$ and $\mathfrak{A}$ in $\cM$, and $\widehat{\cC}$ is the
  estimate of some (non)compatibility notion $\overline{\cC}$ computed by
  $\mathfrak{A}$ using $\tau^E$ and $\tau$ trajectories. The \emph{sample
  complexity} is defined by the pair $(\tau^E,\tau)$.
  \end{defi}
  Again, observe that this definition is general and independent of the specific
  (non)compatibility notion $\overline{\cC}$ adopted.
  We remark that, according to Definition \ref{def: pac algorithm online}, an
  algorithm is PAC depending on the accuracy with which it estimates the
  (non)compatibility of the various rewards in input, and not on the accuracy
  with which it classifies them. 
  Thus, to understand why it is interesting to design PAC algorithms for the IRL
  classification problem based on Definition \ref{def: pac algorithm online}, we
  have to understand what is the accuracy of such an algorithm for that task.

To this aim, let us consider an IRL classification problem instance
$\tuple{\cM,\pi^E,\cR,\Delta}$ with (non)compatibility $\overline{\cC}$, and an
$(\epsilon,\delta)$-PAC algorithm  $\fA$ (for some $\epsilon>0,\delta\in(0,1)$),
that computes estimates of (non)compatibility $\widehat{\cC}$ and classifies the
rewards based on some threshold $\eta\in\RR$ potentially different from
$\Delta$.\footnote{Threshold $\Delta$ is settled by the problem, but we can
implement our algorithm using any other value of threshold.} We denote the set
of rewards to classify with true (non)compatibility smaller than $\Delta$ as
$\cR_{\Delta}\coloneqq \{r \in \cR \,|\, \overline{\cC}(r) \le\Delta \}$, and
the set of rewards positively classified by algorithm $\fA$ as
$\widehat{\cR}_{\eta}\coloneqq \{r \in \cR \,|\, \widehat{\cC}(r) \le\eta \}$.
The main question is: \emph{what is the relationship between $\cR_{\Delta}$ and
$\widehat{\cR}_{\eta}$}?

By Definition \ref{def: pac algorithm online}, it is easy to see that, with probability at
least $1-\delta$, it holds that:
\begin{align*}
    \widehat{\cR}_{\Delta-\epsilon}\subseteq\cR_{\Delta}\subseteq
    \widehat{\cR}_{\Delta+\epsilon},
\end{align*}
namely, with a careful choice of the classification threshold $\eta$ adopted by
the algorithm $\fA$, we can guarantee that, under the good event (i.e., with probability at least $1-\delta$), either
\emph{all} the rewards inside $\cR_\Delta$, i.e., rewards with true
(non)compatibility smaller than $\Delta$, are classified correctly
($\eta\ge\Delta+\epsilon$), or \emph{only} the rewards inside $\cR_\Delta$,
i.e., rewards with true (non)compatibility smaller than $\Delta$, are classified
correctly ($\eta\le\Delta-\epsilon$).
Thus, we can trade-off the amount of ``false negatives''/``false positives'' by
choosing the threshold $\eta$. In particular, note that, if we choose
$\eta=\Delta-\epsilon$ (to minimize the ``false positives''), then we have the
guarantee that $\widehat{\cR}_{\eta}$ is not ``too small'':
\begin{align*}
    \cR_{\Delta-2\epsilon}\subseteq\widehat{\cR}_\eta\subseteq\cR_{\Delta}.
\end{align*}
Analogously, if we choose $\eta=\Delta+\epsilon$ (to minimize the ``false
negatives''), then we have the guarantee that $\widehat{\cR}_{\eta}$ is not
``too large'':
\begin{align*}
    \cR_{\Delta}\subseteq\widehat{\cR}_\eta\subseteq\cR_{\Delta+2\epsilon}.
\end{align*} 

To sum up, the notion of PAC algorithm in Definition \ref{def: pac algorithm
online} is meaningful because it permits to manage the amount of ``false
negatives'' and ``false positives'' in a simple yet effective manner. Moreover,
it guarantees that, if $\eta\in[\Delta-\epsilon,\Delta+\epsilon]$, then all the
rewards $r\in\cR$ with true (non)compatibility $\overline{\cC}(r)\le
\Delta-2\epsilon \vee \overline{\cC}(r)\ge \Delta+2\epsilon$ are correctly
classified with high probability, as shown in Figure \ref{fig: reward
classification}.

\begin{figure*}[!t]
    \centering
    \begin{subfigure}[b]{0.3\textwidth}
        \centering
        \begin{tikzpicture}
            \filldraw [black] (0,0) circle (2pt);
            \draw[-{Latex[scale=1.]}] (0,0) -- (3.5,0);
            \draw[thick,red] (0.5,0) -- (2.1,0);
            \filldraw [black] (1.3,0) circle (2pt);
            \draw (2.5,0.5) -- (2.5,-0.5);
            \draw[thick] (0.5,0.3) -- (0.5,-0.3);
            \draw[thick] (0.5,0.3) -- (0.6,0.3);
            \draw[thick] (0.5,-0.3) -- (0.6,-0.3);
            \draw[thick] (2.1,0.3) -- (2.1,-0.3);
            \draw[thick] (2,0.3) -- (2.1,0.3);
            \draw[thick] (2,-0.3) -- (2.1,-0.3);
            \draw (0.1,0) node[anchor=north east] {$0$};
            \draw (3.2,0) node[anchor=north west] {$ \RR$};
            \draw (2.5,-0.5) node[anchor=north] {$\Delta$};
            \draw (1.3,0) node[anchor=north] {\tiny$\overline{\cC}(r)$};
            \draw (0.6,0.3) node[anchor=south] {\tiny$-\epsilon$};
            \draw (2.1,0.3) node[anchor=south] {\tiny$+\epsilon$};
        \end{tikzpicture}
        \caption
        {{ Reward $r$ is classified correctly.}}    
        \label{fig:r correct}
    \end{subfigure}
    \hfill
    \begin{subfigure}[b]{0.3\textwidth}  
        \centering 
        \begin{tikzpicture}
            \filldraw [black] (0,0) circle (2pt);
            \draw[-{Latex[scale=1.]}] (0,0) -- (3.5,0);
            \draw[thick,red] (1.2,0) -- (2.8,0);
            \filldraw [black] (2,0) circle (2pt);
            \draw (2.5,0.5) -- (2.5,-0.5);
            \draw[thick] (1.2,0.3) -- (1.2,-0.3);
            \draw[thick] (1.2,0.3) -- (1.3,0.3);
            \draw[thick] (1.2,-0.3) -- (1.3,-0.3);
            \draw[thick] (2.8,0.3) -- (2.8,-0.3);
            \draw[thick] (2.7,0.3) -- (2.8,0.3);
            \draw[thick] (2.7,-0.3) -- (2.8,-0.3);
            \draw (0.1,0) node[anchor=north east] {$0$};
            \draw (3.2,0) node[anchor=north west] {$ \RR$};
            \draw (2.5,-0.5) node[anchor=north] {$\Delta$};
            \draw (2,0) node[anchor=north] {\tiny$\overline{\cC}(r)$};
            \draw (1.2,0.3) node[anchor=south] {\tiny$-\epsilon$};
            \draw (2.8,0.3) node[anchor=south] {\tiny$+\epsilon$};
        \end{tikzpicture}
        \caption[]%
        {{ Reward $r$ can be mis-classified.}}
        \label{fig: r maybe wrong}
    \end{subfigure}
    \hfill
    \begin{subfigure}[b]{0.3\textwidth}  
        \centering 
        \begin{tikzpicture}
            \filldraw [black] (0,0) circle (2pt);
            \draw[-{Latex[scale=1.]}] (0,0) -- (3.5,0);
            \draw[thick,red] (1.2,0) -- (2.8,0);
            \draw (2,0.5) -- (2,-0.5);
            \draw[thick] (1.2,0.3) -- (1.2,-0.3);
            \draw[thick] (1.2,0.3) -- (1.3,0.3);
            \draw[thick] (1.2,-0.3) -- (1.3,-0.3);
            \draw[thick] (2.8,0.3) -- (2.8,-0.3);
            \draw[thick] (2.7,0.3) -- (2.8,0.3);
            \draw[thick] (2.7,-0.3) -- (2.8,-0.3);
            \draw[-{Latex[scale=0.7]}] (1.2,0.1) -- (1.9,0.1);
            \draw[-{Latex[scale=0.7]}] (2.8,0.1) -- (2.1,0.1);
            \draw (0.1,0) node[anchor=north east] {$0$};
            \draw (3.2,0) node[anchor=north west] {$ \RR$};
            \draw (2,-0.5) node[anchor=north] {$\Delta$};
            \draw (1.2,0.3) node[anchor=south] {\tiny$-\epsilon$};
            \draw (2.8,0.3) node[anchor=south] {\tiny$+\epsilon$};
        \end{tikzpicture}
        \caption[]%
        {{ Range of uncertain (non)compatibility values.}}    
        \label{fig: range uncertain noncomp values}
    \end{subfigure}
    \caption
    {\small The axis represents (non)compatibility values
    $\overline{\cC}(\cdot)$ and we consider threshold $\eta=\Delta$. (a) Rewards
    $r$ with $\overline{\cC}(r)\le\Delta-\epsilon$ or
    $\overline{\cC}(r)\ge\Delta+\epsilon$ are correctly classified by an
    $(\epsilon,\delta)$-PAC with high probability, while (b) in the opposite
    case, $r$ can be mis-classified. (c) The red interval
    $[\Delta-\epsilon,\Delta+\epsilon]$ exemplifies the set of rewards
    $\{r\in\cR\,|\, |\overline{\cC}(r)-\Delta|\le\epsilon\}$ that are
    (potentially) mis-classified. The length of the interval reduces with
    $\epsilon$.}
    \label{fig: reward classification}
    \end{figure*}
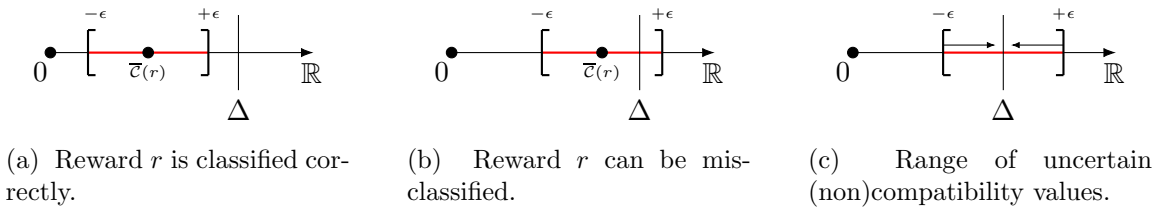

\subsection{Algorithm}\label{subsec: algorithm caty online}

In this section, we present \caty (\catylong), a learning algorithm for solving
the IRL classification problem in the online setting presented earlier. We begin
by describing \caty as a general algorithmic template that can be applied to a
variety of function approximation settings (i.e., classes of MDPs, like tabular
MDPs or linear MDPs) that satisfy certain properties,\footnote{As we will see,
if two properties are satisfied, then \caty is a PAC algorithm for the
considered setting.} and to two different kinds of reward (non)compatibility
(optimal expert in Definition \ref{def: reward compatibility optimal}, and
suboptimal expert in Definition \ref{def: reward compatibility suboptimal}).
Later on, we ``instantiate'' \caty in three classes of MDPs (tabular MDPs,
tabular MDPs with linear rewards, Linear MDPs), and we demonstrate that \caty is
a PAC algorithm (see Definition \ref{defi: pac algorithm}) for all these
settings by providing explicit sample complexity bounds.

An IRL algorithm for the online setting is made of two phases:

\textbf{\textcolor{vibrantBlue}{Exploration phase.}}~~During the
\emph{exploration} phase (see Algorithm \ref{alg: caty online exploration}),
\caty collects a dataset $\cD$ of trajectories by executing a reward-free
exploration (RFE) algorithm $\fA$ \citep{jin2020RFE} for the considered function
approximation setting. In this way, we are guaranteed that, whatever reward
$r\in\cR$ will be provided in input to the classification phase, we will be able
to compute an estimate $\widehat{J}^*(r)$ ``sufficiently'' close to $J^*(r;p)$
using a reasonably small amount of samples (i.e., polynomial in the problem
dimensions). Formally, by definition, any RFE algorithm $\fA$ guarantees that,
for any $\epsilon>0,\delta\in(0,1)$, there exists (a reasonably small) $N>0$
such that, if $\fA$ is executed for $\tau\ge N$ times, then it holds that:
\begin{align}\label{eq: requirement rfe}
    \mathop{\P}\limits_{\cM,\pie,\fA}\Bigr{\sup\limits_{r\in\fR}\big|J^*(r;p)-
    \widehat{J}^*(r)\big|\le\frac{\epsilon}{2}}\ge 1-\frac{\delta}{2}.
\end{align}
The reason why Algorithm \ref{alg: caty online exploration} receives in input
also the set $\cR$ will be clear later.

\begin{algorithm}[t!]
    \caption{\caty - Exploration phase
    }\label{alg: caty online exploration}
    \DontPrintSemicolon
    \KwData{
    Rewards to classify $\cR$, number of episodes $\tau$
    }
    \nonl \texttt{// Explore the environment with a RFE algorithm:}\;
    $\cD\gets$ RFE\_Exploration($\tau$)\label{line: rfe exploration}\;
    Return $\cD$
    \end{algorithm}

\textbf{\textcolor{vibrantRed}{Classification phase.}}~~During the
\emph{classification} phase (see Algorithm \ref{alg: caty online
classification}), \caty performs the estimation $\widehat{\cC}(r)$ of the
(non)compatibility term for the input reward $r\in\cR$ by splitting it into two
\emph{independent} estimates: $\widehat{J}^E(r)\approx J^{\pie}(r;p)$, which is
computed with $\cD^E$ only, and $\widehat{J}^*(r)\approx J^*(r;p)$, which is
computed with $\cD$ only. 
We have already mentioned that the usage of a RFE algorithm guarantees that
$\widehat{J}^*(r)$ is close to $J^*(r;p)$.
Concerning the estimate $\widehat{J}^E(r)$ of the expert's performance
$J^{\pie}(r;p)$, as shown in Line \ref{line: estimate JE online} of Algorithm
\ref{alg: caty online classification}, we use the empirical estimate (i.e.,
sample mean). We want $\widehat{J}^E(r)$ to be ``close'' to $J^{\pie}(r;p)$ for
any $r\in\cR$. As such, we require the function approximation setting considered
to guarantee this closeness using a finite and reasonably small (i.e.,
polynomial in the problem dimensions) number of samples. Formally, we want that,
for any $\epsilon>0,\delta\in(0,1)$, whatever the input reward $r\in\cR$, there
exists (a reasonably small) $N^E>0$ such that, if the dataset $\cD^E$ contains
at least $\tau^E\ge N^E$ trajectories, then it holds that:
\begin{align}\label{eq: requirement estimate JE}
    \mathop{\P}\limits_{\cM,\pie}\Bigr{\sup\limits_{r\in\fR}\big|J^{p,\pie}(r;p)-
    \widehat{J}^E(r)\big|\le\frac{\epsilon}{2}}\ge 1-\frac{\delta}{2}.
\end{align}
Then, \caty combines the estimates $\widehat{J}^E(r)\approx J^{\pie}(r;p)$ and
$\widehat{J}^*(r)\approx J^*(r;p)$ to obtain the estimate of compatibility
$\widehat{\cC}(r)$ in a way dependent on the specific notion of
(non)compatibility considered. Specifically, in the setting with optimal expert,
in which the (non)compatibility $\overline{\cC}$ considered is $\comp$, defined
in Eq. \eqref{eq: reward compatibility optimal}, then \caty computes estimate
$\widehat{\cC}(r)$ for input reward $r\in\cR$ as:
\begin{align}\label{eq: caty estimate comp optimal online}
    \widehat{\cC}(r)=\widehat{J}^*(r)-\widehat{J}^E(r),
\end{align}
irrespective of the class of MDP.
By using a union bound and a triangle inequality, if Eq. \eqref{eq: requirement
rfe} and Eq. \eqref{eq: requirement estimate JE} hold, then it is easy to see
that \caty is a PAC
algorithm, i.e.:
\begin{align*} 
    \mathop{\mathbb{P}}\limits_{\cM,\pie,\mathfrak{A}}\Big(
        \sup\limits_{r\in\cR} \Big|\comp(r)-
        \widehat{\cC}(r)\Big|\le\epsilon\Big)\ge 1-\delta.
\end{align*}
Similarly, in the setting with $[L,U]$-suboptimal expert (for arbitrary $U\ge
L\ge 0$) where the (non)compatibility $\overline{\cC}$ considered is $\complu$
defined in Eq. \eqref{eq: reward compatibility suboptimal}, \caty computes
estimate $\widehat{\cC}(r)$ for input reward $r\in\cR$ as:
\begin{align}\label{eq: caty estimate comp suboptimal online}
    \widehat{\cC}(r)=\min_{x\in[L,U]}\Big|x-(\widehat{J}^*(r) -
    \widehat{J}^E(r))\Big|,
\end{align}
irrespective of the class of MDP. Note that this quantity can be computed
in constant time since, for any $y\in\RR$:
\begin{align}\label{eq: comp subopt using indic}
    \min_{x\in[L,U]}|x-y|=\begin{cases}
        L-y & \text{if } y< L\\
        0 & \text{if } y\in[L,U]\\
        y-U & \text{if } y>U\\
    \end{cases}.
\end{align}
Again, if Eq. \eqref{eq: requirement rfe} and Eq. \eqref{eq: requirement
estimate JE} hold, by using a union bound and the fact that the difference of
two minima can be upper bounded by the maximum of the difference, i.e., that for
any $y,y'\in\RR$:
\begin{align*}
    \Big|\min\limits_{x\in[L,U]}|x-y|-\min\limits_{x\in[L,U]}|x-y'|\Big|
    \le |y-y'|,
\end{align*}
then it is easy to see
that \caty is a PAC
algorithm, i.e.:
\begin{align*} 
    \mathop{\mathbb{P}}\limits_{\cM,\pie,\mathfrak{A}}\Big(
        \sup\limits_{r\in\cR} \Big|\comp^{L,U}(r)-
        \widehat{\cC}(r)\Big|\le\epsilon\Big)\ge 1-\delta.
\end{align*}
Finally, \caty applies the potentially negative threshold $\eta$ to the estimate
of (non)compatibility $\widehat{\cC}(r)$ to perform the classification. Observe
that $\Delta\in\RR_{\ge0}$ is the threshold in the definition of IRL
classification problem of Definition \ref{def: IRL problem}, while $\eta\in\RR$
is the actual threshold applied by \caty. As explained in the previous section,
$\eta$ can be different from $\Delta$ to trade-off the amount of false
negative/positive.
\begin{remark}
To prove that \caty is a PAC algorithm, the function approximation setting
considered must satisfy two properties. First, it must admit a RFE algorithm for
which Eq. \eqref{eq: requirement rfe} holds for a ``small'' $N$. Next, it must
guarantee that Eq. \eqref{eq: requirement estimate JE} holds with a finite value
of $N^E$ at most polynomial in the dimensions of the problem. If both these
conditions are satisfied, then, as explained in this section, \caty can be shown
to be a PAC algorithm in this function approximation setting, with a sample
complexity of $N,N^E$.
\end{remark}

\begin{algorithm}[t!]
        \caption{\caty - Classification phase
        }\label{alg: caty online classification}
        \DontPrintSemicolon
        \KwData{
        Expert data
        $\cD^E$,
        exploration data $\cD$,
        reward to classify $r\in\cR$,
        threshold $\eta$
        }
        \nonl \texttt{// Estimate the expert's performance $\widehat{J}^E(r)$:}\;
        $\widehat{J}^E(r)\gets\frac{1}{\tau^E}\sum\limits_{i\in\dsb{\tau^E}}\sum\limits_{h\in\dsb{H}}r_h(s_h^{E,i},a_h^{E,i})$\label{line: estimate JE online}\;
        \nonl \texttt{// Estimate the optimal performance $\widehat{J}^*(r)$:}\;
        $\widehat{J}^*(r)\gets$ RFE\_Planning($\cD,r$)\label{line: estimate Jstar online}\;
    \nonl \texttt{// Classify the reward:}\;
    \begin{tcolorbox}[enhanced, attach boxed title to top
        right={yshift=-4.5mm,yshifttext=-1mm},
            colframe=brightGreen,colbacktitle=brightGreen,colback=white,
            title=optimal expert,fonttitle=\bfseries,
            boxed title style={size=small, sharp corners}, sharp corners ,boxsep=-1.5mm]
            $\widehat{\mathcal{C}}(r)\gets \widehat{J}^*(r) - \widehat{J}^E(r)$\label{line: comp
            online optimal}\;
          \end{tcolorbox}
          \begin{tcolorbox}[enhanced, attach boxed title to top
            right={yshift=-4.5mm,yshifttext=-1mm},
            colframe=brightGreen,colbacktitle=brightGreen,colback=white,
                title=suboptimal expert,fonttitle=\bfseries,
            boxed title style={size=small, sharp corners}, sharp corners
            ,boxsep=-1.5mm]\small
            $\widehat{\cC}(r)\gets\min_{x\in[L,U]}\big|x-(\widehat{J}^*(r) -
\widehat{J}^E(r))\big|$\label{line: comp online suboptimal}\;
              \end{tcolorbox}
        $class \gets True$ if $\widehat{\cC}(r)\le \eta$ else
        $False$\label{line: class online}\;
        Return $class$
        \end{algorithm}


\subsubsection{Tabular MDPs and Tabular MDPs with Linear Rewards}

In tabular MDPs and tabular MDPs with linear rewards, for exploration (Line
\ref{line: rfe exploration} of Algorithm \ref{alg: caty online exploration}), we
let \caty execute two different RFE algorithms depending on the amount of
rewards to classify $\cR$. Specifically, if $|\cR|$ is a ``small'' constant
w.r.t. to the size of the MDP, i.e., if it holds that $S+ \log \frac{1}{\delta}
\ge |\cR|\log \frac{|\cR|}{\delta}$ (see Theorem \ref{thr: upper bound caty
online}), then we run algorithm BPI-UCBVI \citep{menard2021fast} for each reward
$r \in \cR$. Otherwise we execute algorithm RF-Express \citep{menard2021fast}
once. Consequently, the computation of $\widehat{J}^*(r)$ (Line \ref{line:
estimate Jstar online} of Algorithm \ref{alg: caty online classification})
depends on the exploration algorithm adopted. Concerning BPI-UCBVI, \caty
considers:\footnote{In case \caty executes BPI-UCBVI as many times as there are
rewards in $r\in\cR$, for the estimate of $\widehat{J}^*(r)$, we, of course,
consider the $\widetilde{Q}$ obtained by exploring with the specific reward
$r$.}
\begin{align}\label{eq: caty choice Jstarhat bpiucbvi}
    \widehat{J}^*(r)=\max\limits_{a\in\cA}\widetilde{Q}_1^{\tau+1}(s_0,a),
\end{align}
where $\widetilde{Q}$ is the upper confidence bound on the optimal $Q$-function
constructed by BPI-UCBVI (see Algorithm 2 of \citet{menard2021fast}), $\cA$ is
the action space of the considered MDP without reward
$\cM=\tuple{\cS,\cA,H,d_0,p}$, $s_0$ such that $d_0(s_0)=1$ is the initial
state, and $\tau$ is the number of iterations of algorithm BPI-UCBVI.
With regards to RF-Express, \caty executes the \emph{Backward Induction}
algorithm (see Section 4.5 of \citet{puterman1994markov}) to compute the optimal
$V$-function $V^*_h(\cdot;p,r)$ at all $h$ in the MDP
$\tuple{\cS,\cA,H,d_0,\widehat{p},r}$, where $\widehat{p}\approx p$ is the
empirical estimate of the transition model of $\cM$ constructed with the data
$\cD$ gathered at exploration phase by RF-Express (see Algorithm 1 of
\citet{menard2021fast}). Then, \caty takes:
\begin{align}\label{eq: caty choice Jstarhat rfexpress}
    \widehat{J}^*(r)=V^*_1(s_0;\widehat{p},r),
\end{align}
where $s_0$ such that $d_0(s_0)=1$ is the initial state.

\subsubsection{Linear MDPs}

In Linear MDPs, \caty uses algorithm RFLin \citep{wagenmaker2022noharder} to
explore the environment and to construct an estimate of the optimal utility
$\widehat{J}^*(r)$ for any input reward $r\in\fR$:
\begin{align}\label{eq: caty choice Jstarhat rflin}
    \widehat{J}^*(r)=V_1(s_0),
\end{align}
where function $V$ is the upper bound to the optimal value function defined at
line 9 of Algorithm 2 of \citet{wagenmaker2022noharder} (RFLin-Plan), and $s_0$
is the initial state.

\subsection{Sample Complexity}
The following result shows that \caty is a PAC algorithm for IRL classification
in the online setting (Definition \ref{def: pac algorithm online}) for all the
classes of MDP and definitions of (non)compatibility mentioned earlier.

\begin{restatable}[Sample Complexity of
\caty]{thr}{upperboundtabular}\label{thr: upper bound caty online} Assume that
there is a single initial state. Let $U\ge L\ge 0$ be arbitrary and let
$\epsilon>0,\delta\in(0,1)$. Then \caty executed with $\eta=\Delta$ is
$(\epsilon,\delta)$-PAC for IRL classification in the online setting for both
the optimal expert and the $[L,U]$-suboptimal expert settings, with a sample
complexity upper bounded by:
\[
    \begin{array}{lll}
    \text{Tabular MDPs:}&
    \displaystyle \tau^E\le
    \widetilde{\mathcal{O}}\Big(\frac{H^{3}SA}{\epsilon^2}\log\frac{1}{\delta}\Big), &
    \scalebox{0.94}{$  \displaystyle \tau\le \widetilde{\mathcal{O}}\Big(\frac{H^3SA}{\epsilon^2}
    \Bigr{S+ \log \frac{1}{\delta}}\Big),$}\\[.35cm]
    \text{Tabular MDPs with linear rewards:}&
    \displaystyle \tau^E\le \widetilde{\mathcal{O}}\Big(\frac{H^{3}d}{\epsilon^2}
    \log\frac{1}{\delta}\Big),& 
    \scalebox{0.94}{$  \displaystyle \tau\le \widetilde{\mathcal{O}}\Big(\frac{H^3SA}{\epsilon^2}
        \Bigr{S+ \log \frac{1}{\delta}}\Big),$}\\[.35cm]
    \text{Linear MDPs:}&
    \displaystyle \tau^E\le \widetilde{\mathcal{O}}\Big(\frac{H^{3}d}{\epsilon^2}
    \log\frac{1}{\delta}\Big),&
          \displaystyle \tau\le \widetilde{\mathcal{O}}\Big(\frac{H^5d}{\epsilon^2}
          \Big(d+\log\frac{1}{\delta}\Big)\Big).
    \end{array}
\]
If $|\cR|\log(|\cR|/\delta)\le S+ \log (1/\delta)$, then \caty achieves the
following improved rate in both tabular MDPs and tabular MDPs with linear
rewards, in both the optimal and suboptimal expert settings:
\begin{align*}
    \tau^E\le \widetilde{\mathcal{O}}\Big(\frac{H^2}{\epsilon^2}
    \log\frac{|\cR|}{\delta}\Big),\qquad
    \tau\le \widetilde{\mathcal{O}}\Big(\frac{H^3SA}{\epsilon^2}
        |\cR|\log\frac{|\cR|}{\delta}\Big).
\end{align*}
\end{restatable}

Some observations are in order.
First, we remark that the assumption of a single initial state is common and not
restrictive \citep{menard2021fast}.
The rate dependent on $|\cR|$ that permits to improve over $S+ \log (1/\delta)$,
i.e., over a $S^2$ dependency, is possible in tabular MDPs and tabular MDPs with
linear rewards, where executing $|\cR|$ times the algorithm BPI-UCBVI requires
less samples than running RF-Express once, i.e., as long as
$|\cR|\log(|\cR|/\delta)\le S+ \log (1/\delta)$. Instead, we conjecture that the
$d^2$ dependence is unavoidable in Linear MDPs because of the lower bound for
BPI in \citet{wagenmaker2022noharder}.
Observe that the bounds do not depend on the classification threshold $\Delta$
as long as we set $\eta=\Delta$. Moreover, it is remarkable that both the
settings with optimal and $[L,U]$-suboptimal expert enjoy the same sample
complexity, which does not depend on $L,U$.
In tabular MDPs with deterministic expert, one might use the results in
\citet{Xu2023ProvablyEA} to reduce the rate of $\tau^E$ at the price of
increasing $\tau$ with additional logarithmic factors. However, in
\citet{lazzati2024scaleinverserllarge}, we present a lower bound to $\tau$ for
tabular MDPs in the setting with optimal expert that demonstrates that the upper
bound provided by \caty is unimprovable up to logarithmic factors when
$|\cR|\log(|\cR|/\delta)> S+ \log (1/\delta)$, thus showing that \caty is
\emph{minimax optimal} in tabular MDPs with optimal expert.

In light of the result in Theorem \ref{thr: upper bound caty online}, we
conclude that the \emph{reward compatibility} framework allows the
\emph{practical} development\footnote{Clearly, \caty can be implemented in
practice, since it considers a single reward at a time instead of computing the
full feasible set.} of \emph{sample efficient} algorithms (e.g., \caty) in
Linear MDPs with large/continuous state spaces.
\begin{proofthr}{thr: upper bound caty online}
As explained in Section \ref{subsec: algorithm caty online}, to show that \caty
is PAC for the three function approximation settings considered in the statement
of the theorem, we have to show that Eq. \eqref{eq: requirement rfe} and Eq.
\eqref{eq: requirement estimate JE} hold with the mentioned number of samples
$N,N^E$.

Let us begin by analyzing $N^E$ for tabular MDPs. When $|\cR|$ is finite, we
have:
\begin{align*}
    \mathop{\mathbb{P}}\limits_{\cM,\pie}\Big(
            \exists r\in\cR:\;\Big|J^{\pie}(r;p)-\widehat{J}^E(r)\Big|>
            \frac{\epsilon}{2}\Big)&\markref{(1)}{\le}
            \sum\limits_{r\in\cR}
            \mathop{\mathbb{P}}\limits_{\cM,\pie}\Big(
            \Big|J^{\pie}(r;p)-\widehat{J}^E(r)\Big|>
            \frac{\epsilon}{2}\Big)\\
            &\markref{(2)}{=}
            \sum\limits_{r\in\cR}
            \mathop{\mathbb{P}}\limits_{\cM,\pie}\Big(
            \E[\widehat{J}^E(r)]-\widehat{J}^E(r)
            \Big|>
            \frac{\epsilon}{2}\Big)\\
            &\markref{(3)}{\le}
                \sum\limits_{r\in\cR}2e^{\frac{-\tau^E\epsilon^2}{2H^2}}\\
            &\markref{(4)}{=}\frac{\delta}{2},
\end{align*}
where at (1) we apply a union bound, at (2) we note that
$J^{\pie}(r;p)=\E_{\cM,\pie}[\widehat{J}^E(r)]\eqqcolon\E[\widehat{J}^E(r)]$ is
the expected value of $\widehat{J}^E(r)$, at (3) we apply the Hoeffding's
inequality, and at (4) we set
$\delta/2=|\cR|2e^{\frac{-\tau^E\epsilon^2}{2H^2}}$, so that:
\begin{align*}
    \tau^E\ge \frac{2H^2}{\epsilon^2}\ln\frac{4|\cR|}{\delta}.
\end{align*}
For any choice of $\cR$ we have:
\begin{align*}
    \mathop{\mathbb{P}}\limits_{\cM,\pie}\Big(
            \exists r\in\cR:\;\Big|J^{\pie}(r;p)-\widehat{J}^E(r)\Big|>
            \frac{\epsilon}{2}\Big)
            &\le
            \mathop{\mathbb{P}}\limits_{\cM,\pie}\Big(
            \exists r\in\fR:\;\Big|J^{\pie}(r;p)-\widehat{J}^E(r)\Big|>
            \frac{\epsilon}{2}\Big)\\
            &\markref{(5)}{=}
            \scalebox{0.98}{$  \displaystyle\mathop{\mathbb{P}}\limits_{\cM,\pie}\Big(
            \exists r\in\fR_{\text{vertices}}:\;\Big|J^{\pie}(r;p)-\widehat{J}^E(r)\Big|>
            \frac{\epsilon}{2}\Big)$}\\
            &\markref{(6)}{\le}
            \sum\limits_{r\in\fR_{\text{vertices}}}
            \mathop{\mathbb{P}}\limits_{\cM,\pie}\Big(
            \Big|J^{\pie}(r;p)-\widehat{J}^E(r)\Big|>
            \frac{\epsilon}{2}\Big)\\
            &\markref{(7)}{=}
            \sum\limits_{r\in\fR_{\text{vertices}}}
            \mathop{\mathbb{P}}\limits_{\cM,\pie}\Big(
            \E[\widehat{J}^E(r)]-\widehat{J}^E(r)
            \Big|>
            \frac{\epsilon}{2}\Big)\\
            &\markref{(8)}{\le}
                \sum\limits_{r\in\fR_{\text{vertices}}}2e^{\frac{-\tau^E\epsilon^2}{2H^2}}\\
            &\markref{(9)}{=}\frac{\delta}{2},
\end{align*}
where at (5) we make the same observation as in the proof of Lemma 6 of
\citet{Shani2021OnlineAL}: $\fR$ is the $SAH$-dimensional hypercube, thus all
the rewards $r\in\fR$ can be written as a convex combination of rewards in
$\fR_{\text{vertices}}\coloneqq\{r\in\fR\,|\, \forall
(s,a,h)\in\SAH:\,r_h(s,a)\in\{-1,+1\}\}$. Thus, to upper bound the difference:
\begin{align*}
    \sup\limits_{r\in\fR}\Big|J^{\pie}(r;p)-\widehat{J}^E(r)\Big|=\sup\limits_{r\in\fR}\Big|
    \sum\limits_{(s,a,h)\in\SAH}\bigr{d^{p,\pie}_h(s,a)-
    \widehat{d}^E_h(s,a)}r_h(s,a)\Big|,
\end{align*}
where $\widehat{d}^E_h(s,a)$ is the sample mean of $d^{p,\pie}_h(s,a)$ using
$\cD^E$, it suffices to consider just the rewards in $\fR_{\text{vertices}}$,
since the quantity is maximized by $r_h(s,a)=+1$ when the difference
$d^{p,\pie}_h(s,a)- \widehat{d}^E_h(s,a)\ge0$, and by $r_h(s,a)=-1$ otherwise.
At (6) we apply a union bound, at (7) we recognize the expectation, at (8) we
apply Hoeffding's inequality, and at (9) we set
$\delta/2=2^{SAH}2e^{\frac{-\tau^E\epsilon^2}{2H^2}}$, since
$|\fR_{\text{vertices}}|=2^{SAH}$, i.e., the number of vertices in the
$n$-dimensional hypercube is $2^n$. Thus:
\begin{align*}
    \tau^E\ge \frac{2SAH^3}{\epsilon^2}\ln\frac{4}{\delta}.
\end{align*}
Concerning tabular MDPs with linear rewards and Linear MDPs, note that the
derivation is exactly the same, with the only difference that the reward
functions are now $d$-dimensional for all $h\in\dsb{H}$, thus the union bound
has to be computed over a $d$-dimensional hypercube, obtaining:
\begin{align*}
    \tau^E\ge \frac{2dH^3}{\epsilon^2}\ln\frac{4}{\delta}.
\end{align*}

Let us now analyze the number of samples $N$. In tabular MDPs and tabular MDPs
with linear rewards, when $\cR$ is finite and satisfies
$|\cR|\log(|\cR|/\delta)\le S+ \log (1/\delta)$, then \caty executes algorithm
BPI-UCBVI of \cite{menard2021fast} for every $r\in\cR$, and it sets:
\begin{align*}
    \widehat{J}^*(r)=\max\limits_{a\in\cA}\widetilde{Q}_1^{\tau+1}(s_0,a),
\end{align*}
as explained in Eq. \eqref{eq: caty choice Jstarhat bpiucbvi}. As shown by the
authors \citep{menard2021fast} in their proof of Lemma 2, it holds that:
\begin{align*}
    J^*(r;p)-J^{\widehat{\pi}}(r;p)\le \max\limits_{a\in\cA}\widetilde{Q}_1^{\tau+1}(s_0,a)
    -J^{\widehat{\pi}}(r;p),
\end{align*}
where $\widehat{\pi}$ is the actual output of BPI-UCBVI. Thus, for any $\epsilon'>0$:
\begin{align*}
    J^*(r;p)-J^{\widehat{\pi}}(r;p)\le \epsilon'\implies \max\limits_{a\in\cA}\widetilde{Q}_1^{\tau+1}(s_0,a)-
    J^*(r;p)\le\epsilon',
\end{align*}
since $\max\limits_{a\in\cA}\widetilde{Q}_1^{\tau+1}(s_0,a)\ge J^*(r;p)$ under
their good event. BPI-UCBVI guarantees that, at each execution with a reward
$r\in\cR$, with:
\begin{align*}
    \tau\le\widetilde{\mathcal{O}}\Big(\frac{H^3SA}{\epsilon^2}\ln\frac{|\cR|}{\delta}\Big),
\end{align*}
it holds that (see Theorem 2 of \citet{menard2021fast}):
\begin{align*}
    \mathop{\mathbb{P}}\limits_{\cM,\pie,\mathfrak{A}}\Big(
            \Big|J^*(r;p)-\widehat{J}^*(r)\Big|\le
            \frac{\epsilon}{2}\Big)\ge 1-\frac{\delta}{2|\cR|}.
\end{align*}
Through a union bound, we obtain:
\begin{align*}
    \mathop{\mathbb{P}}\limits_{\cM,\pie,\mathfrak{A}}\Big(
        \exists r\in\cR:\, \Big|J^*(r;p)-\widehat{J}^*(r)\Big|>
        \frac{\epsilon}{2}\Big)&\le
        \sum\limits_{r\in\cR}\mathop{\mathbb{P}}\limits_{\cM,\pie,\mathfrak{A}}\Big(
            \Big|J^*(r;p)-\widehat{J}^*(r)\Big|>
            \frac{\epsilon}{2}\Big)
        \\
        &\le \frac{\delta}{2}.
\end{align*}
In tabular MDPs and tabular MDPs with linear rewards, when \caty executes
RF-Express as subroutine, and it sets, for all $r\in\cR$ (Eq. \eqref{eq: caty
choice Jstarhat rfexpress}):
\begin{align*}
    \widehat{J}^*(r)=V^*_1(s_0;\widehat{p},r),
\end{align*}
since $V^*_1(s_0;\widehat{p},r)$ is the exact optimal performance in the MDP
with dynamics $\widehat{p}$ since it has been compute through Backward Induction
\citep{puterman1994markov}, then, thanks to Theorem 1 of \citet{menard2021fast},
we have that Eq. \eqref{eq: requirement rfe} holds
with:
\begin{align*}
    \tau\le \widetilde{\mathcal{O}}
    \Big(\frac{H^3SA}{\epsilon^2}\Bigr{S+ \log \frac{1}{\delta}}\Big).
\end{align*}
Therefore, we prefer to execute BPI-UCBVI for $|\cR|$ times instead of
RF-Express once if $\cR$ satisfies (modulo some constants):
\begin{align*}
    S+ \log \frac{1}{\delta}\ge|\cR|\log \frac{|\cR|}{\delta}.
\end{align*}
Finally, in Linear MDPs, by using algorithm RFLin
\citep{wagenmaker2022noharder}, and making the estimate (Eq. \eqref{eq: caty
choice Jstarhat rflin}):
\begin{align*}
    \widehat{J}^*(r)=V_1(s_0),
\end{align*}
then, Theorem 1 of \citet{wagenmaker2022noharder} guarantees that Eq. \eqref{eq:
requirement rfe} holds with (we omit linear terms in $1/\epsilon$):
\begin{align*}
    \tau\le\widetilde{\mathcal{O}}\Big(\frac{H^5d}{\epsilon^2}\big(d+\log\frac{1}{\delta}\big)\Big).
\end{align*}
This concludes the proof.
\end{proofthr}

\section{Learning Compatible Rewards in the Offline Setting}\label{sec: offline}

In this section, we analyse the reward compatibility framework in the offline
setting for tabular MDPs.

\subsection{Problem Setting}

In many applications, IRL is better framed as an \emph{offline} problem, in
which there is no possibility to actively exploring the environment to improve
our estimates. For this reason, in this section, we consider the offline
scenario in which we are given a batch expert's dataset
$\cD^E=\{\tuple{s_1^{E,i},a_1^{E,i},\dotsc,s_H^{E,i},a_H^{E,i},s_{H+1}^{E,i}}\}_{i\in\dsb{\tau^E}}$
of $\tau^E$ state-action trajectories collected by the expert's policy $\pi^E$
in an MDP without reward $\cM=\tuple{\cS,\cA,H,d_0,p}$ (with unknown
dynamics $p$), and an additional batch dataset
$\cD^b=\{\tuple{s_1^{b,i},a_1^{b,i},\dotsc,s_{H}^{b,i},a_{H}^{b,i},s_{H+1}^{b,i}}\}_{i\in\dsb{\tau^b}}$
of $\tau^b$ state-action trajectories collected by executing a behavioral policy
$\pi^b$ in the same MDP without reward of the expert $\cM$.

Comparing with the \emph{online} setting presented in Section \ref{subsec:
problem setting online}, we still have a batch dataset $\cD^E$ that gives us
information on $\pi^E$ (and its occupancy measure $d^{p,\pie}$). However,
instead of being allowed to explore the environment at will to construct an
estimate of the transition model $p$, we now have to estimate $p$ using the
trajectories in the new batch dataset $\cD^b$.

We make two remarks to clarify the importance of using two datasets.
\begin{remark}
    As we did in the \emph{online} setting, we will \emph{not} use the data in
    $\cD^E$ to improve our estimate of the transition model $p$. The reason is
    that mixing the data would \emph{unnecessarily} complicate the theoretical
    analysis without any significant advantage. Nevertheless, note that, in
    practice, using all the samples in $\cD^E\cup\cD^b$ to estimate $p$ might
    improve the performance of the algorithms.
\end{remark}
\begin{remark}
    The requirement of \emph{two datasets} is \emph{not necessary}, although it
    can be useful. In fact, in many applications, the expert's policy $\pi^E$ is
    deterministic or moderately stochastic. Thus, it provides a limited coverage of
    the state-action space, preventing us from constructing estimates of the
    transition model $p$ in portions of the space that are not reached by
    $\pi^E$. To avoid this, we can consider an additional dataset $\cD^b$ collected
    by a potentially more explorative (stochastic) policy $\pi^b$. Note that
    this is a generalization of the common setting with only $\cD^E$, that we
    recover if we take $\pi^b=\pi^E$. In the following, no restriction is made
    on $\pi^b$.
\end{remark}


\subsection{Reward compatibility}

We now extend the reward compatibility approach to this setting.

\paragraph{Non-learnability of the (non)compatibility.}

The major difficulty of the offline setting is that, even in the limit of
infinite trajectories in the batch dataset $\cD^b$, the behavioral policy may
cover only a portion $\cZ^{p,\pi^b}\subset\SAH$ of the space, preventing the
estimation of the transition model in triples $(s,a,h)\notin \cZ^{p,\pi^b}$.
Observe that, whatever reward $r\in\fR$ we consider, both notions of
(non)compatibility $\compr$ (Definition \ref{def: reward compatibility optimal})
and $\comprlu$ (Definition \ref{def: reward compatibility suboptimal}) depend on
the optimal performance $J^*(r;p)$ that can be achieved under $r$. Intuitively,
since $J^*(r;p)$ depends on the transition model $p$ at all the triples
$(s,a,h)\in\SAH$ (reachable by at least one policy from the initial state
$s_0$), and since we do not have information on the transition model in some of
these triples because of the partial coverage of $\pi^b$, then estimating the
(non)compatibility of $r$ is not feasible in the offline
setting:\footnote{Observe that also the notion of feasible set is not
``learnable'' in this setting, as we have shown in Theorem C.1 of
\citet{lazzati2024offline}.}
\begin{thr}[Non-learnability of $\comp$ and $\complu$ in the offline setting]
    \label{thr: comp non learnable offline}
    Let $U\ge L\ge 0$ be arbitrary. Let $\rho:\RR\times\RR\to\RR_{\ge0}$ be an
    arbitrary metric between scalars. Let $\fA$ be any algorithm that aims to
    estimate the (non)compatibilities in Definition \ref{def: reward
    compatibility optimal} and \ref{def: reward compatibility suboptimal}. Then,
    there exists an MDP without reward $\cM=\tuple{\cS,\cA,H,d_0,p}$, an
    expert's policy $\pi^E$, and a behavioral policy $\pi^b$, such that, even if
    the batch datasets $\cD^E\sim\pi^E$ and $\cD^b\sim\pi^b$ contain an infinite
    amount of trajectories, there are $\epsilon,\delta\in(0,1)$ and $r\in\fR$
    for which:
    \begin{align*}
        \mathop{\P}\limits_{\cM,\pi^E,\pi^b}\Bigr{
            \rho\Bigr{
                \overline{\cC}(r),\widehat{\cC}(r)
            } \ge \epsilon
        } \ge \delta,
    \end{align*}
    where $\P_{\cM,\pi^E,\pi^b}$ denotes the probability measure induced by
    $\pi^E$ and $\pi^b$ in $\cM$, $\overline{\cC}(r)$ is any of $\compr$ or
    $\comprlu$, and $\widehat{\cC}(r)$ is the estimate of $\overline{\cC}(r)$
    computed by $\fA$ using only the data in $\cD^E$ and $\cD^b$.
\end{thr}
\begin{proof}
    We begin by considering the setting with optimal expert.

    Consider the MDP without reward pictured below, where $s_0$ is the initial
    state, there are two actions $a_1,a_2$ in each state, and the horizon is $H=2$.
    From state $s_0$, action $a_1$ brings deterministically to $s_1$, while
    action $a_2$ brings to $s_2$ with probability $q\in[0,1]$, and to $s_1$ with
    probability $1-q$:
    \begin{figure}[h!]
        \centering
        \begin{tikzpicture}[node distance=3.5cm]
        \node[state,initial] at (0,0) (s0) {$s_0$};
        \node[state] at (3,1.5) (s1) {$s_1$};
        \node[state] at (3,-1.5) (s2) {$s_2$};
        \node[state, draw=none] at (5,1.5) (ss1) {};
        \node[state, draw=none] at (5,-1.5) (ss2) {};
        \node[draw=none,fill=black] at (2,-0.6) (ss3) {};
        \draw (s0) edge[->, solid, above, sloped] node{\scriptsize$a_1$} (s1);
        \draw (s0) edge[-, solid, above, sloped] node{\scriptsize$a_2$} (ss3);
        \draw (ss3) edge[->, solid, above, sloped] node{\scriptsize$1-q$} (s1);
        \draw (ss3) edge[->, solid, above] node{\scriptsize$q$} (s2);
        \draw (s1) edge[->, solid, above, sloped] node{\scriptsize$a_1,a_2$} (ss1);
        \draw (s2) edge[->, solid, above, sloped] node{\scriptsize$a_1,a_2$} (ss2);
      \end{tikzpicture}
      \end{figure}

      We denote as $\cM_0,\cM_1$ the MDPs without reward corresponding to,
      respectively, $q=0$ and to $q=1$. Let $\pi^E=\pi^b$ be the policies that
      play action $a_1$ at every state.

      We now show that there exists at least a reward $r\in\fR$ for which no
      algorithm $\fA$ can estimate the (non)compatibility
      $\overline{\cC}_{\cM_0,\pi^E}(r)$ of reward $r$ in $\cM_0$ and $\cM_1$
      with arbitrary accuracy and failure probability $\epsilon,\delta\in(0,1)$
      using only $\cD^E\sim\pi^E,\cD^b\sim\pi^b$, even if they contain an
      infinite amount of samples.

      Let $r\in\fR$ be the reward function that assigns reward $1$ to all
      actions played in states $s_0$ and $s_2$, and reward $0$ to actions played
      in state $s_1$. The (non)compatibilities of $r$ with $\pi^E$ in $\cM_0$
      and $\cM_1$ are:
      \begin{align*}
        &\overline{\cC}_{\cM_0,\pi^E}(r)=0,\qquad \overline{\cC}_{\cM_1,\pi^E}(r)=1.
      \end{align*}
      Since
      $\overline{\cC}_{\cM_0,\pi^E}(r)\neq\overline{\cC}_{\cM_1,\pi^E}(r)$, and
      since $\rho$ is a metric, then there exists a value $k>0$ such that:
      \begin{align*}
        \rho\Bigr{\overline{\cC}_{\cM_0,\pi^E}(r),\overline{\cC}_{\cM_1,\pi^E}(r)}=k.
      \end{align*}
      Since $\pi^E=\pi^b$ always play action $a_1$, then even with an infinite
      amount of samples, $\cD^E,\cD^b$ do not reveal any information on the
      transition model of action $a_2$ in $s_0$, thus algorithm $\fA$ cannot
      discriminate between environments $\cM_0,\cM_1$. By choosing
      $\epsilon<k/2,\delta>1/2$, we have that either the output
      $\widehat{\cC}(r)$ of $\fA$ satisfies, for any $\cM\in\{\cM_0,\cM_1\}$,
      $\P_{\cM,\pi^E,\pi^b}(\rho(\overline{\cC}_{\cM_0,\pi^E},\widehat{\cC}(r))<k/2)>
      1/2$ or
      $\P_{\cM,\pi^E,\pi^b}(\rho(\overline{\cC}_{\cM_1,\pi^E},\widehat{\cC}(r))<k/2)>
      1/2$, but not both since $\rho$ satisfies the triangle inequality. Thus,
      in at least one of the two instances $\cM_0,\cM_1$ any algorithm $\fA$
      satisfies the statement of the theorem. This concludes the proof for the
      setting with optimal expert.

      An analogous construction can be constructed for the setting with
      suboptimal expert.
\end{proof}

\paragraph{New notions of reward compatibility.}

Theorem \ref{thr: comp non learnable offline} shows that, because of the partial
coverage induced by $\pi^b$, the notions of (non)compatibility $\comp$ and
$\complu$ do not represent reasonable learning targets in the offline setting.
For this reason, the best we can do is to define an ``optimistic'' and a
``pessimistic'' extension of (non)compatibility, that represent, respectively,
the best and the worst possible value of (non)compatibility of the considered
reward given the available information:
\begin{defi}[Best and worst (non)compatibility]\label{def: best worst comp}
    Let $\cM=\tuple{\cS,\cA,H,d_0,p}$ be an MDP without reward, $\pi^E$ the
    expert's policy, and $\pi^b$ the behavioral policy. Let $\cZ\coloneqq\cZ^{p,\pi^b}$ be
    the portion of space covered by $\pi^b$. Then, given a notion of
    (non)compatibility $\overline{\cC}$, for any $r\in\fR$, we define the
    \emph{best (non)compatibility} $\overline{\cC}^b(r)$ of reward $r$ given
    partial coverage $\cZ$ as:
    \begin{align}\label{eq: best comp}
        \overline{\cC}^b(r)\coloneqq \min\limits_{p'\in[p]_{\equiv_\cZ}}
        \overline{\cC}(r).
    \end{align}
    Similarly, we define the \emph{worst (non)compatibility}
    $\overline{\cC}^w(r)$ of reward $r$ given partial coverage $\cZ$ as:
    \begin{align}\label{eq: worst comp}
        \overline{\cC}^w(r)\coloneqq \max\limits_{p'\in[p]_{\equiv_\cZ}}
        \overline{\cC}(r).
    \end{align}
\end{defi}
Intuitively, at best, the (non)compatibility of $r$ is
$\overline{\cC}^b(r)\le \overline{\cC}(r)$, and, at worst, it is
$\overline{\cC}^w(r)\ge \overline{\cC}(r)$.
In other words, we are proposing a best-/worst-case approach to cope with the
missing knowledge of the dynamics $p$ outside $\cZ^{p,\pi^b}$.
From now on, we use $\cZ\coloneqq\cZ^{p,\pi^b}$ to denote the portion of space
covered by $\pi^b$. Given any $r$, we denote respectively as
$\comproffb\coloneqq\min_{p'\in[p]_{\equiv_\cZ}}\compr$ and
$\comproffw\coloneqq\max_{p'\in[p]_{\equiv_\cZ}}\compr$ the best and worst
(non)compatibility of $r$ in the setting with optimal expert. Analogously, for
arbitrary $U\ge L\ge 0$, we denote respectively as
$\comprluoffb\coloneqq\min_{p'\in[p]_{\equiv_\cZ}}\comprlu$ and
$\comprluoffw\coloneqq\max_{p'\in[p]_{\equiv_\cZ}}\comprlu$ the best and worst
(non)compatibility of $r$ in the setting with $[L,U]$-suboptimal expert.

\paragraph{Each reward compatibility defines a feasible set.}

We observe that, analogously to what we have done in Section \ref{sec: the rewards
compatibility framework}, we can define two new notions of ``feasible set'' as
the sets of rewards with, respectively, zero best and worst (non)compatibility:
\begin{align*}
    &\cR^b\coloneqq\Bigc{r\in\fR\,\big|\,
    \overline{\cC}^b(r)=0
    },\\
    &\cR^w\coloneqq\Bigc{r\in\fR\,\big|\,
    \overline{\cC}^w(r)=0
    }.
\end{align*}
These sets satisfy the inclusion monotonicity property
\citep{lazzati2024offline}:
\begin{align*}
    \cR^b\subseteq \cR\subseteq \cR^w,
\end{align*}
where $\cR\coloneqq \bigc{r\in\fR\,\big|\, \overline{\cC}(r)=0 }$ is the
feasible set. If we consider the setting with optimal expert
$\overline{\cC}=\comp$, then we recover the notions of sub-/super-feasible set
that we analysed in \citet{lazzati2024offline}.

Observe that the notions of IRL classification problem (Definition \ref{def: IRL
problem}) and IRL algorithm (Definition \ref{def: IRL algorithm}) comply with
the new definitions. The only difference is that, now, we expect our IRL
algorithm to output two booleans, representing the classification carried out
based on both the best and worst (non)compatibilities (we use a single threshold
$\Delta$ for both).

We conclude this section with a remark. As explained in
\citet{lazzati2024offline} (Proposition 8.1) under the name of ``bitter
lesson'', in the setting with optimal expert, when we have only data collected
by a deterministic expert's policy $\pi^E$, i.e., $\cD^b=\cD^E$, then the
sub-feasible set $\cR_{\cM,\pie}^w\coloneqq \bigc{r\in\fR\,\big|\, \comproffw=0
}$ exhibits some degeneracy, since it contains only reward functions whose
greedy optimal action at all $(s,h)\in\cS^{p,\pie}$ is the expert's action
$\pie_h(s)$. However, reward compatibility can overcome this limitation, in the
following manner. Let $\cR_{\cM,\pie,\Delta}^w\coloneqq\bigc{r\in\fR\,\big|\,
\comproffw\le\Delta }$ be the set of rewards that should be positively
classified in the IRL classification problem with threshold $\Delta$. Clearly,
$\cR_{\cM,\pie}^w\subseteq\cR_{\cM,\pie,\Delta}^w$, and it is not difficult to
see that, if $\Delta>0$ strictly, then $\cR_{\cM,\pie,\Delta}^w$ contains
rewards that do not suffer from the ``bitter lesson'', i.e., do not make $\pi^E$
the greedy policy w.r.t. the immediate reward $r$.
The price to pay for the increased expressivity of the solution concept is an
increased (non)compatibility (i.e., error) in the learned rewards. In other
words, the larger the $\Delta$, the more the expressivity of the learned
rewards, but, at the same time, the larger their (non)compatibility.
  
\subsection{Learning Framework}

We are interested in solving the IRL classification problem in the offline
setting. For this purpose, we define an algorithm to be efficient if it provides
``good'' estimates for both the best and worst (non)compatibilities:

\begin{defi}[PAC Algorithm - Offline setting]\label{def: pac algorithm offline} Let
  $\epsilon>0,\delta\in(0,1)$, and let $\cD^E$ be a dataset of $\tau^E$ expert's
  trajectories and $\cD^b$ a dataset of $\tau^b$ behavioral
  trajectories. An algorithm $\mathfrak{A}$ is
  $(\epsilon,\delta)$-PAC for the IRL classification problem if:
  \begin{align*} 
      \mathop{\mathbb{P}}\limits_{\cM,\pie,\pi^b}\Big(
          \sup\limits_{r\in\cR} \Big|\overline{\cC}^b(r)-
          \widehat{\cC}^b(r)\Big|\le\epsilon\;
          \wedge\;\sup\limits_{r\in\cR} \Big|\overline{\cC}^w(r)-
          \widehat{\cC}^w(r)\Big|\le\epsilon\Big)\ge 1-\delta,
  \end{align*}
  where $\mathbb{P}_{\cM,\pie,\pi^b}$ is the joint probability measure induced
  by $\pie$ and $\pi^b$ in $\cM$, and $\widehat{\cC}^b,\widehat{\cC}^w$ are,
  respectively, the estimates of the best and worst (non)compatibilities
  $\overline{\cC}^b,\overline{\cC}^w$ computed by $\mathfrak{A}$. The
  \emph{sample complexity} is defined by the pair $(\tau^E,\tau^b)$.
  \end{defi}
  Observe that the PAC framework is general, and it can be instantiated for both
  the optimal and suboptimal expert settings. Let $\eta_b,\eta_w$ be the
  thresholds used by our learning algorithm $\fA$ to classify
  rewards,\footnote{Even though we have only one $\Delta$, we can use different
  thresholds $\eta_b,\eta_w$.} and define the sets of rewards positively
  classified as:
  \begin{align*}
    &\widehat{\cR}^b_{\eta_b}\coloneqq\Bigc{r\in\fR\,|\,\widehat{\cC}^b(r)\le\eta_b},\\
    &\widehat{\cR}^w_{\eta_w}\coloneqq\Bigc{r\in\fR\,|\,\widehat{\cC}^w(r)\le\eta_w}.
  \end{align*}
  Similarly, define the sets of rewards that should actually be positively classified as:
  \begin{align*}
    &\cR^b_{\Delta}\coloneqq\Bigc{r\in\fR\,|\,\overline{\cC}^b(r)\le\Delta},\\
    &\cR^w_{\Delta}\coloneqq\Bigc{r\in\fR\,|\,\overline{\cC}^w(r)\le\Delta}.
  \end{align*}
  It is not difficult to see that a PAC algorithm for the offline setting
  provides the same guarantees highlighted in Section \ref{subsec: online
  learning framework} between sets $\widehat{\cR}^b_{\eta_b},\cR^b_{\Delta}$ and
  $\widehat{\cR}^w_{\eta_w},\cR^w_{\Delta}$. In other words, accurately tuning
  the thresholds $\eta_b,\eta_w$ permits to trade-off the amount of ``false
  negatives''/``false positives''.

\subsection{Algorithm}

In this section, we present \catyoff (\catyofflong), a learning algorithm for
solving the IRL classification problem in the offline setting in \emph{tabular MDPs},
in both the settings with optimal and suboptimal expert. The pseudocode of the
algorithm is reported in Algorithm \ref{alg: caty offline}.

\begin{algorithm}[h!]
    \caption{\catyoff}\label{alg: caty offline}
    \DontPrintSemicolon
    \KwData{
    Expert dataset
    $\cD^E$, behavioral dataset
    $\cD^b$,
    classification threshold $\eta$,
    reward to classify $r\in\cR$
    }
    \nonl \texttt{// Estimate the expert's performance $\widehat{J}^E(r)$:}\;
    $\widehat{J}^E(r)\gets\frac{1}{\tau^E}\sum\limits_{i\in\dsb{\tau^E}}
    \sum\limits_{h\in\dsb{H}}r_h(s_h^{E,i},a_h^{E,i})$\label{line: JE
    last}\;
    \nonl \texttt{// Estimate the support $\cZ$ and the transition
    model $p$:}\;
    \uIf{$|\cR|=1$}{
        Randomly split $\cD^b=\{\omega^b_i\}_{i\in\dsb{\tau^b}}$ into $H$ datasets $\{\cD^b_h\}_{h\in\dsb{H}}$
        with $|\cD^b_h|=\floor{\tau^b/H}$\label{line: split datasets}\;
        $\widehat{\cZ}\gets \{(s,a,h)\in\SAH\,|\,
    \exists
    i\in\dsb{\tau^b}:\,\omega^b_i\in\cD^b_h\,\wedge\,(s_h^{b,i},a_h^{b,i})=(s,a)\}$\label{line: Z R1}\;
        $\widehat{p}_h(\cdot|s,a)\gets$ empirical estimate of $p$ from
        $\cD^b_h$ for all $(s,a,h)\in \widehat{\cZ}$ through Eq. \eqref{eq: estimate phat R1}\label{line: p R1}
        \;
    }
    \uElse{
        $\widehat{\cZ}\gets \{(s,a,h)\in\SAH\,|\, \exists
        i\in\dsb{\tau^b}:\,(s_h^{b,i},a_h^{b,i})=(s,a)\}$\label{line: Z Rall}
        \;
        $\widehat{p}_h(\cdot|s,a)\gets$ empirical estimate of $p$ from $\cD^b$
        for all $(s,a,h)\in \widehat{\cZ}$ through Eq. \eqref{eq: estimate phat Rall}\label{line: p Rall}\;
    }
    \nonl \texttt{// Estimate the optimal performances $\widehat{J}^*_m(r)$ and $\widehat{J}^*_M(r)$:}\;
    $\widehat{Q}^m_H(s,a),\widehat{Q}^M_H(s,a)\gets r_H(s,a)$  $\forall (s,a) \in \cS \times \cA$\label{line: EVI1}\;
\For{$h=H-1$ {\textnormal{\textbf{to}}} $1$}{
\For{$(s,a)\in\SA$}{
     $\widehat{Q}^M_h(s,a)\gets r_h(s,a)+\max\limits_{p' \in [\widehat{p}]_{\equiv_{\widehat{\cZ}}}}
    \sum\limits_{s'\in\cS}{p}'_h(s'|s,a)\max\limits_{a'\in\cA}\widehat{Q}^M_{h+1}(s',a')$\;
    $\widehat{Q}^m_h(s,a)\gets r_h(s,a)+\min\limits_{p' \in [\widehat{p}]_{\equiv_{\widehat{\cZ}}}}
    \sum\limits_{s'\in\cS}{p}'_h(s'|s,a)\max\limits_{a'\in\cA}\widehat{Q}^m_{h+1}(s',a')$\;
}
}
$\widehat{J}^*_m(r)\gets \max\limits_{a\in\cA}\widehat{Q}^m_1(s_0,a)$\;
$\widehat{J}^*_M(r)\gets \max\limits_{a\in\cA}\widehat{Q}^M_1(s_0,a)$\label{line: EVIlast}\;
\nonl \texttt{// Classify the reward:}\;
$\widehat{\Delta}_m(r) \gets \widehat{J}_m^*(r)-\widehat{J}^E(r)$\label{line:
estimate delta m}\;
$\widehat{\Delta}_M(r) \gets \widehat{J}_M^*(r)-\widehat{J}^E(r)$\label{line:
estimate delta M}\;
\begin{tcolorbox}[enhanced, attach boxed title to top
    right={yshift=-4.5mm,yshifttext=-1mm},
        colframe=vibrantBlue,colbacktitle=vibrantBlue,colback=white,
        title=optimal expert,fonttitle=\bfseries,
        boxed title style={size=small, sharp corners}, sharp corners ,boxsep=-1.5mm]\small
        $\widehat{\cC}^b(r)\gets\widehat{\Delta}_m(r)$\;
    $\widehat{\cC}^w(r)\gets\widehat{\Delta}_M(r)$\;
      \end{tcolorbox}\label{line: comp
      off optimal}
      \begin{tcolorbox}[enhanced, attach boxed title to top
        right={yshift=-4.5mm,yshifttext=-1mm},
        colframe=vibrantBlue,colbacktitle=vibrantBlue,colback=white,
            title=suboptimal expert,fonttitle=\bfseries,
        boxed title style={size=small, sharp corners}, sharp corners
        ,boxsep=-1.5mm]\small
        $\widehat{\cC}^b(r)\gets\max\bigc{\indic{\widehat{\Delta}_M(r)<L}(L-\widehat{\Delta}_M(r)),
        \indic{\widehat{\Delta}_m(r)>U}(\widehat{\Delta}_m(r)-U)}
        $\;
        $\widehat{\cC}^w(r)\gets\max\bigc{\indic{\widehat{\Delta}_m(r)<L}(L-\widehat{\Delta}_m(r)),
        \indic{\widehat{\Delta}_M(r)>U}(\widehat{\Delta}_M(r)-U)}
        $\;
          \end{tcolorbox}
          \label{line: comp off suboptimal}
          $class^b \gets True$ if
          $\widehat{\cC}^b(r)\le \eta$ else $False$\label{line: class best}\;
          $class^w \gets True$ if $\widehat{\cC}^w(r)\le \eta$ else
          $False$\label{line: class worst}\;
    Return $class^b,class^w$
    \end{algorithm}

\paragraph{Description of the algorithm.}
\catyoff takes in input a reward function $r\in\cR$ and a threshold $\Delta$,
and it aims to output two booleans meant to classify the reward based on
its best and worst (non)compatibilities.
In both the optimal and suboptimal expert settings, to compute an estimate of
the best and worst (non)compatibilities, \catyoff first estimates three different
quantities: the expert's performance $\widehat{J}^E(r)\approx J^{\pie}(r;p)$, the
best optimal performance $\widehat{J}^*_M(r)\approx
\max_{p'\in[p]_{\equiv_{\cZ}}}J^*(r;p')$, and the worst optimal performance
$\widehat{J}^*_m(r)\approx \min_{p'\in[p]_{\equiv_{\cZ}}}J^*(r;p')$ of the
considered reward $r\in\cR$.

\catyoff computes $\widehat{J}^E(r)$ as the empirical estimate (sample mean) of
the expert's performance $J^{\pie}(r;p)$ using dataset $\cD^E$ and reward $r$
(see Line \ref{line: JE last}), analogously to \caty.
To compute estimates $\widehat{J}^*_m(r)$ and $\widehat{J}^*_M(r)$, \catyoff
uses the empirical estimates of the support of the behavioral policy
distribution $\widehat{\cZ}\approx\cZ$ (Lines \ref{line: Z R1} and \ref{line: Z
Rall}),\footnote{Observe that we distinguish between the case in which $|\cR|=1$
and $|\cR|>1$ through the splitting of the dataset $\cD^b$ carried out at line
\ref{line: EVI1}. This passage is needed to obtain the sample complexity
guarantee in Theorem \ref{thr: bounds tabular off}. } and of the transition
model $\widehat{p}\approx p$ (Lines \ref{line: p R1} and \ref{line: p Rall}).
Specifically, the estimates of transition model at Line \ref{line: p R1} are
computed as:
\begin{align}\label{eq: estimate phat R1}
    \widehat{p}_h(s'|s,a)\coloneqq \frac{\sum_{i\in\dsb{\tau^b}}
    \indic{(s_h^{b,i},a_h^{b,i},s_{h+1}^{b,i})=(s,a,s')\wedge\omega^b_i\in\cD^b_h}}
    {\sum_{i\in\dsb{\tau^b}}
    \indic{(s_h^{b,i},a_h^{b,i})=(s,a)\wedge\omega^b_i\in\cD^b_h}}
    \;\forall(s,a,h)\in\widehat{\cZ},\forall s'\in\cS,
\end{align}
where we use $\{\omega^b_i\}_{i\in\dsb{\tau^b}}$ to denote the $\tau^b$
state-action trajectories contained into $\cD^b$, and splitted into the $H$
datasets $\{\cD^b_h\}_h$. Instead, the estimates of transition model at Line
\ref{line: p Rall} are computed as:
\begin{align}\label{eq: estimate phat Rall}
    \widehat{p}_h(s'|s,a)\coloneqq \frac{\sum_{i\in\dsb{\tau^b}}
    \indic{(s_h^{b,i},a_h^{b,i},s_{h+1}^{b,i})=(s,a,s')}}
    {\sum_{i\in\dsb{\tau^b}}
    \indic{(s_h^{b,i},a_h^{b,i})=(s,a)}}\qquad\forall(s,a,h)\in\widehat{\cZ},\forall s'\in\cS.
\end{align}
Then, \catyoff computes:
\begin{align*}
    &\widehat{J}^*_m(r)=\min\limits_{p'\in[\widehat{p}]_{\equiv_{\widehat{\cZ}}}}J^*(r;p'),\\
    &\widehat{J}^*_M(r)=\max\limits_{p'\in[\widehat{p}]_{\equiv_{\widehat{\cZ}}}}J^*(r;p'),
\end{align*}
by applying the Extended Value Iteration (EVI) algorithm \citep{auer2008nearoptimal}
(Lines \ref{line: EVI1}-\ref{line: EVIlast}).
Finally, \catyoff estimates the best and worst (non)compatibilities
$\widehat{\cC}^b(r)\approx \comproffb$ and $\widehat{\cC}^w(r)\approx
\comproffw$ in the setting with optimal expert as (see Line \ref{line: comp off
optimal}):
\begin{align*}
    &\widehat{\cC}^b(r)=\widehat{J}^*_m(r)-\widehat{J}^E(r),\\
    &\widehat{\cC}^w(r)=\widehat{J}^*_M(r)-\widehat{J}^E(r).
\end{align*}
In the setting with suboptimal expert, observe that the best and worst
(non)compatibilities can be re-written in a clearer form. To this aim, we need
some additional notation. For any $r\in\fR$, for any transition model $p'\in[p]_{\equiv_{\cZ}}$, we
define $\Delta(r;p')\coloneqq J^*(r;p')-J^{\pi^E}(r;p')$, and also
$\Delta_m(r)\coloneqq\min_{p'\in[p]_{\equiv_{\cZ}}} J^*(r;p')-J^{\pi^E}(r;p')$,
$\Delta_M(r)\coloneqq\max_{p'\in[p]_{\equiv_{\cZ}}} J^*(r;p')-J^{\pi^E}(r;p')$. Then:
\begin{prop}\label{prop: comp off subopt}
    For any $U\ge L\ge 0$, for any $r\in\fR$, it holds that:
    \begin{align*}
    &\comprluoffw=\max\Bigc{\indic{\Delta_m(r)<L}\bigr{L-\Delta_m(r)},\indic{\Delta_M(r)>U}\bigr{\Delta_M(r)-U}},\\
    &\comprluoffb=\max\Bigc{\indic{\Delta_M(r)<L}\bigr{L-\Delta_M(r)},\indic{\Delta_m(r)>U}\bigr{\Delta_m(r)-U}}.
    \end{align*}
\end{prop}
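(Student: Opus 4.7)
The plan is to recast the proposition as a one-dimensional problem by exploiting the structure of the distance-to-interval function $f(y) \coloneqq \min_{x \in [L,U]} |y - x|$. Explicitly, $f$ is the convex piecewise-linear map that vanishes on $[L,U]$ and equals $L - y$ for $y < L$ and $y - U$ for $y > U$; in particular $f$ is convex and unimodal about $[L,U]$.

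The first substantive step is to show that, as $p'$ ranges over $[p]_{\equiv_\cZ}$, the scalar $\Delta(r;p')$ sweeps out the \emph{entire} closed interval $[\Delta_m(r), \Delta_M(r)]$. The equivalence class $[p]_{\equiv_\cZ}$ is convex (a convex combination of two dynamics that agree with $p$ on $\cZ$ still agrees with $p$ on $\cZ$) and compact in the tabular setting. For each deterministic policy $\pi$, $J^\pi(r;p')$ is multilinear, hence continuous in $p'$, so $J^*(r;\cdot) = \max_\pi J^\pi(r;\cdot)$ and $J^{\pi^E}(r;\cdot)$ are continuous, and so is $\Delta(r;\cdot)$. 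Applying the intermediate value theorem along line segments in $[p]_{\equiv_\cZ}$ then shows that the image of $\Delta(r;\cdot)$ is a connected subset of $\RR$ that attains its extrema, namely the interval $[\Delta_m(r), \Delta_M(r)]$.

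This reduction lets me rewrite
\begin{align*}
\comprluoffw = \max_{y \in [\Delta_m(r), \Delta_M(r)]} f(y) \qquad \text{and} \qquad \comprluoffb = \min_{y \in [\Delta_m(r), \Delta_M(r)]} f(y).
\end{align*}
For the worst case, convexity of $f$ forces the maximum over an interval to be attained at one of its endpoints, so $\comprluoffw = \max\{f(\Delta_m(r)), f(\Delta_M(r))\}$. Because $\Delta_m(r) \le \Delta_M(r)$, only $\Delta_m(r)$ can fall strictly below $L$ and only $\Delta_M(r)$ can strictly exceed $U$; substituting the closed form of $f$ then yields precisely the claimed indicator expression for $\comprluoffw$.

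For the best case, I would reinterpret the double minimization as the distance between the intervals $[\Delta_m(r), \Delta_M(r)]$ and $[L,U]$. This distance is $0$ exactly when the two intervals meet, i.e.\ when $\Delta_m(r) \le U$ and $\Delta_M(r) \ge L$, in which case both indicators in the claimed formula vanish. If $[\Delta_m(r), \Delta_M(r)]$ lies strictly to the left of $[L,U]$ the distance is $L - \Delta_M(r)$, and if it lies strictly to the right it is $\Delta_m(r) - U$; these two mutually exclusive cases are picked out by the two indicators. The only real obstacle is the bookkeeping for the boundary values $\Delta_m(r) = L$ or $\Delta_M(r) = U$, which are handled consistently because in those edge cases the relevant positive part is already zero.
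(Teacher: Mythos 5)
Your proof is correct, and it takes a more structured route than the paper's. The paper proves the worst-case identity by directly case-splitting the max over $p'\in[p]_{\equiv_\cZ}$ of the piecewise form of $\min_{x\in[L,U]}|x-\Delta(r;p')|$ and then declares the best case ``analogous.'' You instead first establish that the image of $p'\mapsto\Delta(r;p')$ over $[p]_{\equiv_\cZ}$ is the \emph{entire} interval $[\Delta_m(r),\Delta_M(r)]$ (convexity and compactness of the equivalence class, continuity of $J^{\pi^E}(r;\cdot)$ and of $J^*(r;\cdot)$ as a finite max of continuous functions, then the intermediate value theorem), which collapses both identities to one-dimensional optimization of the convex distance-to-interval function $f$. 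This extra step is not mere pedantry: the worst-case formula only needs the two extrema to be attained (convexity of $f$ puts the max at an endpoint of any set containing $\Delta_m(r)$ and $\Delta_M(r)$), but the best-case formula genuinely requires connectedness of the image --- $\comprluoffb=0$ whenever $[\Delta_m(r),\Delta_M(r)]$ meets $[L,U]$ presupposes that some $p'$ actually realizes a suboptimality value inside $[L,U]$, which is exactly what your IVT argument supplies and what the paper's ``analogous derivation'' leaves implicit. One small imprecision: your claim that ``only $\Delta_m(r)$ can fall strictly below $L$ and only $\Delta_M(r)$ can strictly exceed $U$'' is not literally true (both endpoints may lie below $L$, or both above $U$); what saves the substitution is the dominance $f(\Delta_M(r))\le f(\Delta_m(r))$ when $\Delta_M(r)<L$ (and symmetrically on the right), so the stated maximum is unchanged. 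With that reading, the argument is complete.
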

    \begin{proof}
        We begin with the worst (non)compatibility:
        \begin{align*}
            \comprluoffw&\coloneqq\max\limits_{p'\in[p]_{\equiv_\cZ}}\comprlu\\
            &=\max\limits_{p'\in[p]_{\equiv_\cZ}}\min\limits_{x\in[L,U]}
            \Big|x-\Bigr{J^*(r;p')-J^{\pie}(r;p')}\Big|\\
            &=\max\limits_{p'\in[p]_{\equiv_\cZ}}\min\limits_{x\in[L,U]}
            \Big|x-\Delta(r;p')\Big|\\
            &\markref{(1)}{=}\max\limits_{p'\in[p]_{\equiv_\cZ}}
            \begin{cases}
                L-\Delta(r;p'), & \text{if }\Delta(r;p')<L\\
                0, & \text{if }\Delta(r;p')\in[L,U]\\
                \Delta(r;p')-U, & \text{if }\Delta(r;p')>U
            \end{cases}\\
            &\markref{(2)}{=}
            \begin{cases}
                L-\Delta_m(r), & \text{if }\Delta_m(r)<L\\
                0, & \text{if }\forall p'\in [p]_{\equiv_\cZ}:\,\Delta(r;p')\in[L,U]\\
                \Delta_M(r)-U, & \text{if }\Delta_M(r)>U
            \end{cases}\\
            &=\max\Bigc{\indic{\Delta_m(r)<L}\bigr{L-\Delta_m(r)},\indic{\Delta_M(r)>U}\bigr{\Delta_M(r)-U}},
            \end{align*}
            where at (1) we use the observation in Eq. \eqref{eq: comp subopt
            using indic}, and at (2) we simply recognize the worst cases. An
            analogous derivation can be carried out for the best
            (non)compatibility.
    \end{proof}
Thus, at Line \ref{line: comp off suboptimal}, \catyoff simply applies the
formulas in Proposition \ref{prop: comp off subopt} replacing the values of
$\Delta_m(r),\Delta_M(r)$ with their estimates computed at lines \ref{line:
estimate delta m}-\ref{line: estimate delta M}.
Finally, at lines \ref{line: class best}-\ref{line: class worst}, \catyoff
performs the classification of the input reward.

\paragraph{Sample efficiency.}%
We have the following result on the sample complexity of \catyoff:
\begin{restatable}[Sample Complexity of
\catyoff]{thr}{upperboundtabularoff}\label{thr: bounds tabular off} Assume that
there is a single initial state. Let $U\ge L\ge 0$ be arbitrary and let
$\epsilon>0,\delta\in(0,1)$. Then \catyoff executed with $\eta=\Delta$ is
$(\epsilon,\delta)$-PAC for IRL classification in the offline (tabular) setting
for both the optimal expert and the $[L,U]$-suboptimal expert settings, with a
sample complexity upper bounded by:
        \begin{align*}
          \text{if }|\cR|=1:\quad&\tau^E\le
        \widetilde{\mathcal{O}}\Big(\frac{H^2}{\epsilon^2}\log\frac{1}{\delta}\Big),
        \qquad
        \tau^b\le \widetilde{\mathcal{O}}\Big(\frac{H^5\log^2\frac{|\cZ|}
        {\delta}}{\epsilon^2 d_{\min}^{p,\pi^b}}+
        \frac{\log\frac{1}{\delta}}{\log\frac{1}{1-d_{\min}^{p,\pi^b}}}\Big),\\
        \text{otherwise:}\quad&\tau^E\le
        \widetilde{\mathcal{O}}\Big(\frac{H^{3}SA}{\epsilon^2}\log\frac{1}{\delta}\Big),
        \qquad
        \tau^b\le \widetilde{\mathcal{O}}\Big(\frac{H^4\log\frac{|\cZ|}
        {\delta}}{\epsilon^2 d_{\min}^{p,\pi^b}}\Bigr{S+\log\frac{|\cZ|}
        {\delta}}+
        \frac{\log\frac{1}{\delta}}{\log\frac{1}{1-d_{\min}^{p,\pi^b}}}\Big),\\
        \end{align*}
        where $d_{\min}^{p,\pi^b}\coloneqq \min_{(s,a,h)\in\cZ}d^{p,\pi^b}_h(s,a)$.
    \end{restatable}
We make some comments.
First, note that the expert sample complexity $\tau^E$ coincides with that
provided in Theorem \ref{thr: upper bound caty online}, since \catyoff computes
the same estimate as \caty for $J^{\pi^E}(r;p)$.
Next, observe that the upper bound to $\tau^b$ is made of two terms, one that
displays a tight dependence on the desired accuracy $\epsilon$ and a dependence
of order $H^5$ or $H^4$ on the horizon, and another term dependent on the
minimum non-zero value of the visitation distribution $d_{\min}^{p,\pi^b}>0$,
that is needed to ensure $\widehat{\cZ}=\cZ$. More in detail, note that, since
$1/d_{\min}^{p,\pi^b}\le\cZ$, and since $\cZ=\SAH$ when $\pi^b$ covers the
entire space, then the dependence on $SA$ is hidden inside $d_{\min}^{p,\pi^b}$.
Also note that, when $|\cR|=1$, we need at most $\propto S$ samples, otherwise we
require $\propto S^2$ data.
We stress that \catyoff is computationally efficient since it just implements
EVI \citep{auer2008nearoptimal}, and thus it has a time complexity of order
$\cO(HS^2A)$. Nevertheless, notice that \catyoff does not scale to problems with
large/infinite state spaces even under the structure imposed by Linear MDPs,
because our definitions of best and worst (non)compatibility rely on the portion
$\cZ$ of the state-action space covered by $\pi^b$ (through
$d_{\min}^{p,\pi^b}$). We leave to future works the development of alternative
(non)compatibility notions more suitable for large-scale settings, that are able
to exploit the structure of the considered problem (e.g., Linear MDP). Simply
put, these notions should be based on a concept of coverage other than $\cZ$.
Finally, we note that, differently from the results on the feasible set in this
setting (see Theorem 5.1 of \citet{lazzati2024offline}), our Theorem \ref{thr:
bounds tabular off} does \emph{not} require the assumption that
$\cZ^{p,\pi^E}\subseteq\cZ^{p,\pi^b}$, providing an additional advantage of the
(non)compatibility framework over the feasible set.

To prove Theorem \ref{thr: bounds tabular off}, we begin by showing that
estimating $\widehat{\Delta}_m(\cdot)\approx\Delta_m(\cdot)$ and
$\widehat{\Delta}_M(\cdot)\approx\Delta_M(\cdot)$ accurately suffices also for
the setting with suboptimal expert. Indeed, in the setting with optimal expert, it is
immediate.
\begin{lemma}\label{lemma: suboptimal compatibility smaller delta offline}
    For any problem instance $\cM,\pi^E,\pi^b$, for any $U\ge L\ge 0$, for any
    $r\in\fR$, it holds that:
    \begin{align*}
        &\Big|\comprluoffw-\widehat{\cC}^w(r)\Big|\le
        \max\Bigc{\big|\Delta_m(r)-\widehat{\Delta}_m(r)\big|,
        \big|\Delta_M(r)-\widehat{\Delta}_M(r)\big|},\\
        &\Big|\comprluoffb-\widehat{\cC}^b(r)\Big|\le
        \max\Bigc{\big|\Delta_m(r)-\widehat{\Delta}_m(r)\big|,
        \big|\Delta_M(r)-\widehat{\Delta}_M(r)\big|}.
    \end{align*} 
\end{lemma}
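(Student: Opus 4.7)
The plan is to exploit the explicit closed-form expressions for the best/worst (non)compatibility provided by Proposition \ref{prop: comp off subopt}, and compare them term-by-term with the estimates $\widehat{\cC}^b(r)$ and $\widehat{\cC}^w(r)$ appearing at Line \ref{line: comp off suboptimal} of Algorithm \ref{alg: caty offline}. Note that both the true quantities and the estimates have exactly the same functional form: the only difference is that $\Delta_m(r),\Delta_M(r)$ are replaced by $\widehat{\Delta}_m(r),\widehat{\Delta}_M(r)$. So the task reduces to a purely analytic stability argument on a simple scalar function, completely decoupled from any MDP considerations.

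Concretely, I would first observe that for any $y\in\RR$ the two building blocks
\begin{align*}
f_-(y)\coloneqq \indic{y<L}(L-y)=\max\{L-y,0\},\qquad
f_+(y)\coloneqq \indic{y>U}(y-U)=\max\{y-U,0\}
\end{align*}
are both $1$-Lipschitz in $y$, since they are pointwise maxima of $1$-Lipschitz affine functions. Hence $|f_-(y)-f_-(y')|\le|y-y'|$ and $|f_+(y)-f_+(y')|\le|y-y'|$ for all $y,y'\in\RR$.

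The second elementary ingredient I would invoke is the standard inequality $|\max\{a,b\}-\max\{a',b'\}|\le\max\{|a-a'|,|b-b'|\}$, valid for all $a,b,a',b'\in\RR$. Applying this to the worst-case expression
\begin{align*}
\comprluoffw=\max\bigl\{f_-(\Delta_m(r)),\,f_+(\Delta_M(r))\bigr\},\qquad
\widehat{\cC}^w(r)=\max\bigl\{f_-(\widehat{\Delta}_m(r)),\,f_+(\widehat{\Delta}_M(r))\bigr\},
\end{align*}
and then using the $1$-Lipschitzness of $f_-,f_+$, yields directly
\begin{align*}
\bigl|\comprluoffw-\widehat{\cC}^w(r)\bigr|
\le \max\bigl\{|\Delta_m(r)-\widehat{\Delta}_m(r)|,\,|\Delta_M(r)-\widehat{\Delta}_M(r)|\bigr\}.
\end{align*}
The argument for $\comprluoffb$ is identical, with the roles of $\Delta_m$ and $\Delta_M$ inside $f_-,f_+$ swapped, matching exactly the algorithm's definition of $\widehat{\cC}^b(r)$.

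There is essentially no serious obstacle here: once Proposition \ref{prop: comp off subopt} is in place, the lemma is a two-line Lipschitz calculation, and the statement of the algorithm is designed precisely to mirror the structure of the proposition. The only minor care required is verifying that the indicator-with-positive-part formulation in the algorithm really does implement the functions $f_-,f_+$ pointwise, which is immediate from their definitions.
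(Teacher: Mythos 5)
Your proof is correct and follows essentially the same route as the paper's: both invoke Proposition \ref{prop: comp off subopt}, then the inequality $|\max\{a,b\}-\max\{a',b'\}|\le\max\{|a-a'|,|b-b'|\}$, and finally bound each term. The only (cosmetic) difference is at the last step, where the paper verifies the bound on $|\indic{y<L}(L-y)-\indic{y'<L}(L-y')|$ by checking the four sign cases, whereas you observe more cleanly that $\indic{y<L}(L-y)=\max\{L-y,0\}$ is a pointwise maximum of $1$-Lipschitz functions and hence $1$-Lipschitz.
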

\begin{proof}
    We prove the result only for the worst (non)compatibility. For the best
    (non)compatibility the proof is analogous. We can write:
    \begin{align*}
        \Big|\comprluoffw-\widehat{\cC}^w(r)\Big|&\markref{(1)}{=}
        \Big|\max\Bigc{\indic{\Delta_m(r)<L}\bigr{L-\Delta_m(r)},\indic{\Delta_M(r)>U}\bigr{\Delta_M(r)-U}}\\
        &\qquad\scalebox{0.96}{$  \displaystyle-\max\Bigc{\indic{\widehat{\Delta}_m(r)<L}\bigr{L-\widehat{\Delta}_m(r)},
        \indic{\widehat{\Delta}_M(r)>U}\bigr{\widehat{\Delta}_M(r)-U}}\Big|$}\\
        &\markref{(2)}{\le}
        \max\Big\{\Big|\indic{\Delta_m(r)<L}\bigr{L-\Delta_m(r)}-
        \indic{\widehat{\Delta}_m(r)<L}\bigr{L-\widehat{\Delta}_m(r)}\Big|,\\
        &\qquad
        \Big|\indic{\Delta_M(r)>U}\bigr{\Delta_M(r)-U}-
        \indic{\widehat{\Delta}_M(r)>U}\bigr{\widehat{\Delta}_M(r)-U}\Big|
        \Big\}\\
        &\markref{(3)}{\le}
        \max\Bigc{\Big|\bigr{L-\Delta_m(r)}-\bigr{L-\widehat{\Delta}_m}\Big|,
        \Big|\bigr{\Delta_M(r)-U}-\bigr{\widehat{\Delta}_M-U}\Big|}\\
        &=\max\Bigc{\Big|\Delta_m(r)-\widehat{\Delta}_m\Big|,
        \Big|\Delta_M(r)-\widehat{\Delta}_M\Big|},
    \end{align*}
    where at (1) we use Proposition \ref{prop: comp off subopt}, and the
    estimate used in Line \ref{line: comp off suboptimal} of \catyoff, at (2) we
    use the Lipschitzianity of the maximum operator $|\max\{x,y\}-
    \max\{x',y'\}| \le \max\{|x-x'|,|y-y'|\}$ for all $x,y,x',y'\in\RR$, at (3)
    we recognize that, in all four cases obtained by comparing $L$ with
    $\Delta_m(r)$, and $L$ with $\widehat{\Delta}_m$, the upper bound holds;
    similarly also for $U$ and $\Delta_M(r)$ and $\widehat{\Delta}_M$.
\end{proof}
Next, we have to show that our estimates are close to the true quantities with
high probability. Depending on the cardinality of $\cR$ (the set of rewards to
classify), we have two different lemmas. Before presenting the results, we need
some additional notation. Specifically, given any set $\cX\subseteq \SAH$, and
any pair of ``partial'' transition models
$p'\in\Delta^\cS_\cX,p''\Delta^\cS_{\cX^\complement}$, i.e., one defined only on
triples in $\cX$, and the other on the remaining triples
$\cX^\complement=\SAH\setminus\cX$, we define the $V$- and $Q$-functions and the
expected utility under the transition model obtained by combining $p'$ and $p''$
as $V^\pi_h(s;p',p'',r),Q^\pi_h(s,a;p',p'',r),J^\pi(r;p',p'')$ for any
$(s,a,h)\in\SAH,r\in\fR,\pi\in\Pi$. We extend the notation analogously to the
optimal $V$- and $Q$-functions and expected utility. In addition, we denote the
visit distribution of $\pi$ in this context as $d^{p',p'',\pi}$.
\begin{lemma}[Concentration for $|\cR|=1$]\label{lemma: conc single reward} Let
    $|\cR|=1$, and let $p'\in\Delta^\cS_{\cZ^\complement}$, $\pi\in\Pi$, and $r\in\fR$ be
    arbitrary. Let $\delta\in(0,1)$. Then, there exists a constant
    $c> 0$ for which the event $\cE\coloneqq\cE_1\cap\cE_2\cap\cE_3$ defined
    as the intersection of the events:
    \begin{align*}
        &\cE_1\coloneqq\bigg\{\Big|J^{\pi^E}(r;p)-\widehat{J}^E(r)\Big|
        \le c\sqrt{\frac{H^2\log\frac{6}{\delta}}{\tau^E}}\bigg\},\\
        &\cE_2\coloneqq\bigg\{
        \Big|\E_{s'\sim p_h(\cdot|s,a)}[V^{\pi}_{h+1}(s';\widehat{p},p',r)]
        -\E_{s'\sim \widehat{p}_h(\cdot|s,a)}[V^{\pi}_{h+1}(s';\widehat{p},p',r)]\Big|
        \le c\sqrt{\frac{H^2\ln\frac{12|\widehat{\cZ}|}{\delta}}{N_h^b(s,a)}},\\
        &\qquad\qquad\wedge\; \frac{1}{N_h^b(s,a)}\le c\cdot \frac{H\ln\frac{6|\widehat{\cZ}|}
        {\delta}}{\tau^b d^{p,\pi^b}_h(s,a)}
        \;\forall (s,a,h)\in\widehat{\cZ}
        \bigg\},\\
        &\mathcal{E}_3\coloneqq\bigg\{N_h^b(s,a)\ge 1, \;\forall (s,a,h)\in \cZ
        \quad\text{when }\tau^b\ge H\frac{\ln\frac{3|\cZ|}{\delta}}
        {\ln\frac{1}{1-d_{\min}^{p,\pi^b}}}\bigg\},
    \end{align*}
    where $N_h^b(s,a)$ is the random variable that counts the number of samples
    inside dataset $\cD_h^b$ for triple $(s,a,h)$, and
    $d_{\min}^{p,\pi^b}\coloneqq \min_{(s,a,h)\in\cZ}d^{p,\pi^b}_h(s,a)$, holds
    with probability at least $1-\delta$.
\end{lemma}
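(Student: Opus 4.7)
~~The plan is to bound the failure probability of each of the three sub-events $\cE_1, \cE_2, \cE_3$ by $\delta/3$ and conclude via a union bound. Throughout, I exploit the dataset-splitting step executed by \catyoff when $|\cR|=1$ (Line \ref{line: split datasets}), which is the key structural device that makes the concentration clean.

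\emph{Event $\cE_1$.}~~First I would apply Hoeffding's inequality to the i.i.d. returns $R^i \coloneqq \sum_{h \in \dsb{H}} r_h(s_h^{E,i},a_h^{E,i})$ for $i \in \dsb{\tau^E}$, noting that each $R^i \in [-H,+H]$ and that $\E_{\cM,\pie}[R^i] = J^{\pi^E}(r;p)$. This directly yields $|\widehat{J}^E(r) - J^{\pi^E}(r;p)| \le c\sqrt{H^2 \log(6/\delta)/\tau^E}$ with probability at least $1-\delta/3$, for a suitable absolute constant $c$.

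\emph{Event $\cE_3$.}~~Fix $(s,a,h) \in \cZ$. Since $\cD^b_h$ contains $\lfloor \tau^b/H \rfloor$ independent samples and each has probability $d^{p,\pi^b}_h(s,a) \ge d_{\min}^{p,\pi^b}$ of landing on $(s,a)$ at stage $h$, the probability that $(s,a,h)$ receives no sample is at most $(1-d_{\min}^{p,\pi^b})^{\tau^b/H}$. A union bound over $\cZ$ shows that, whenever $\tau^b \ge H \ln(3|\cZ|/\delta)/\ln(1/(1-d_{\min}^{p,\pi^b}))$, every triple in $\cZ$ receives at least one sample with probability at least $1-\delta/3$.

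\emph{Event $\cE_2$.}~~This is the main technical step. I split it into two sub-claims, each holding with probability at least $1-\delta/6$. For the second sub-claim (the lower bound on $N_h^b(s,a)$), I would apply a multiplicative Chernoff bound to the indicator variables counting visits to $(s,a,h)$ inside $\cD^b_h$, whose expectation is $\lfloor \tau^b/H \rfloor d^{p,\pi^b}_h(s,a)$. A union bound over $(s,a,h) \in \widehat{\cZ}$ then gives $1/N_h^b(s,a) \le c\, H \ln(6|\widehat{\cZ}|/\delta)/(\tau^b d^{p,\pi^b}_h(s,a))$ uniformly. For the first sub-claim, the critical observation is that, because $\widehat{p}_h(\cdot|s,a)$ is estimated from $\cD^b_h$ alone while $V^{\pi}_{h+1}(\cdot;\widehat{p},p',r)$ depends only on $\widehat{p}_{h+1},\dotsc,\widehat{p}_H$ (and thus on $\cD^b_{h+1},\dotsc,\cD^b_H$, together with the ``ghost'' completion $p'$ on $\cZ^\complement$), the random function $V^{\pi}_{h+1}(\cdot;\widehat{p},p',r)$ is independent of the samples used to form $\widehat{p}_h(\cdot|s,a)$. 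Conditioning on $\widehat{p}_{h+1},\dotsc,\widehat{p}_H$ and on $N_h^b(s,a)$, the quantity $\E_{s'\sim \widehat{p}_h(\cdot|s,a)}[V^{\pi}_{h+1}(s';\widehat{p},p',r)]$ becomes an empirical average of $N_h^b(s,a)$ i.i.d. samples of a bounded function of $s'$ (with range in $[-H,+H]$ since $r \in \fR$). Hoeffding's inequality then yields the stated $\sqrt{H^2 \ln(12|\widehat{\cZ}|/\delta)/N_h^b(s,a)}$ concentration, and a union bound over $(s,a,h) \in \widehat{\cZ}$ completes the sub-claim.

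\emph{Conclusion.}~~Combining the three estimates by a union bound gives $\mathbb{P}(\cE_1 \cap \cE_2 \cap \cE_3) \ge 1-\delta$. The main obstacle is the independence justification inside $\cE_2$: without the per-stage dataset split used in the $|\cR|=1$ branch of \catyoff, the samples that define $\widehat{p}_h$ would be entangled with those defining $V^{\pi}_{h+1}(\cdot;\widehat{p},p',r)$, and a naive Hoeffding step would be invalid; once the split is in place, the rest is bookkeeping with union bounds and Chernoff-style concentration for the visitation counts.
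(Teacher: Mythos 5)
Your proposal is correct and follows essentially the same route as the paper: a union bound over the three events, Hoeffding for $\cE_1$, a visitation-count/coverage argument for $\cE_3$ (with the extra $H$ from the per-stage split), and for $\cE_2$ the independence of $V^{\pi}_{h+1}(\cdot;\widehat{p},p',r)$ from the samples defining $\widehat{p}_h$ enabled by the dataset splitting. The only difference is that the paper delegates the $\cE_2$ and $\cE_3$ steps to Lemma B.1 of Xie et al. (2021) and Lemma F.1 of Lazzati et al. (2024) respectively, whereas you spell out the underlying arguments, correctly identifying the data-splitting independence trick that those lemmas encapsulate.
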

\begin{proof}
    The result follows by an application of the union bound after having
    observed that each event $\cE_1,\cE_2,\cE_3$ holds with probability
    $1-\delta/3$. Specifically, we have already shown in the proof of Theorem
    \ref{thr: upper bound caty online} that $\cE_1$ holds with probability
    $1-\delta/3$. Concerning event $\cE_2$, the result follows from Lemma B.1 of
    \citet{xie2021bridging}. Finally, w.r.t. event $\cE_3$, the result follows
    as shown in Lemma F.1 of \citet{lazzati2024offline}, with an additional $H$
    term due to the splitting of the datasets carried out by \catyoff.
\end{proof}
\begin{lemma}[Concentration for $|\cR|\ge1$]\label{lemma: conc all rewards}
    Let $\delta\in(0,1)$. Then, there exists a constant
    $c> 0$ for which the event $\cE'\coloneqq\cE_1'\cap\cE_2'\cap\cE_3'$ defined
    as the intersection of the events:
    \begin{align*}
        &\cE_1'\coloneqq\bigg\{\sup\limits_{r\in\fR}\Big|J^{\pi^E}(r;p)-\widehat{J}^E(r)\Big|
        \le c\sqrt{\frac{SAH^3\log\frac{6}{\delta}}{2\tau^E}}\bigg\},\\
        &\cE_2'\coloneqq\bigg\{
        N_h^b(s,a) KL(\widehat{p}_h(\cdot|s,a)\|p_h(\cdot|s,a))
        \le \beta(N_h^b(s,a),\delta/6),\\
        &\qquad\qquad\wedge\; \frac{1}{N_h^b(s,a)}\le c\cdot \frac{\ln\frac{6|\widehat{\cZ}|}
        {\delta}}{\tau^b d^{p,\pi^b}_h(s,a)}
        \;\forall (s,a,h)\in\widehat{\cZ}
        \bigg\},\\
        &\mathcal{E}_3'\coloneqq\bigg\{N_h^b(s,a)\ge 1, \;\forall (s,a,h)\in \cZ
        \quad\text{when }\tau^b\ge \frac{\ln\frac{3|\cZ|}{\delta}}
        {\ln\frac{1}{1-d_{\min}^{p,\pi^b}}}\bigg\},
    \end{align*}
    where $N_h^b(s,a)$ is the random variable that counts the number of samples
    inside dataset $\cD^b$ for triple $(s,a,h)$,
    $\beta(n,\delta)\coloneqq\ln(|\widehat{\cZ}|/{\delta})+ (S-1)\ln(
    e(1+n/{(S-1)}))$ and $d_{\min}^{p,\pi^b}\coloneqq
    \min_{(s,a,h)\in\cZ}d^{p,\pi^b}_h(s,a)$, holds with probability at least
    $1-\delta$.
\end{lemma}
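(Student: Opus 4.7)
The plan is to establish $\cE'$ by a union bound after showing that each of the three sub-events $\cE_1', \cE_2', \cE_3'$ holds with probability at least $1-\delta/3$. The structure closely mirrors the proof of Lemma \ref{lemma: conc single reward}, but with two essential modifications that reflect the fact that we now need guarantees that hold uniformly over all rewards $r\in\fR$: (i) the concentration on $\widehat{J}^E$ must be made uniform over rewards, and (ii) the concentration on the empirical transition model must be given in the stronger KL-divergence form, so that it can later be combined with a value-function covering argument that is independent of the specific reward queried.

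For $\cE_1'$, I would reuse verbatim the argument already carried out in the proof of Theorem \ref{thr: upper bound caty online}. Namely, since $\fR$ is the $SAH$-dimensional hypercube $[-1,1]^{SAH}$, the quantity $\sup_{r\in\fR}|J^{\pi^E}(r;p)-\widehat{J}^E(r)|$ is attained at a vertex of $\fR$, so a union bound over the $2^{SAH}$ vertices combined with Hoeffding's inequality applied to the bounded random variables $\sum_h r_h(s^{E,i}_h,a^{E,i}_h)\in[-H,H]$ yields the stated rate $\sqrt{SAH^3\log(6/\delta)/(2\tau^E)}$ at confidence $1-\delta/3$.

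For $\cE_2'$, the first part (the KL bound) follows directly from the standard time-uniform Bernstein/KL concentration inequality for multinomial distributions — essentially Proposition 1 of \citet{jonsson2020planning} or the analogous bound used in \citet{menard2021fast}. Applied to each $(s,a,h)\in\widehat{\cZ}$ and a union bound over at most $|\widehat{\cZ}|$ triples gives the desired $\beta(N_h^b(s,a),\delta/6)$ bound with probability at least $1-\delta/6$. The second part (the lower bound on $N_h^b(s,a)$ relative to $\tau^b d^{p,\pi^b}_h(s,a)$) follows from a multiplicative Chernoff bound on the binomial counts, again combined with a union bound over $|\widehat{\cZ}|$ triples; together these two pieces give $\cE_2'$ with probability at least $1-\delta/3$. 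For $\cE_3'$, I would invoke the same geometric argument used in Lemma F.1 of \citet{lazzati2024offline}: the probability that any fixed $(s,a,h)\in\cZ$ is not visited in $\tau^b$ independent trajectories is at most $(1-d_{\min}^{p,\pi^b})^{\tau^b}$, so a union bound over the at most $|\cZ|$ such triples gives $\cE_3'$ whenever $\tau^b\ge \log(3|\cZ|/\delta)/\log(1/(1-d_{\min}^{p,\pi^b}))$. Note that, in contrast with Lemma \ref{lemma: conc single reward}, there is no extra factor of $H$ here because when $|\cR|>1$ the dataset $\cD^b$ is used as a whole without being split across stages (Line \ref{line: Z Rall} of Algorithm \ref{alg: caty offline}).

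The only subtle step is $\cE_2'$: using the KL-form bound (rather than the Hoeffding-type bound used in the $|\cR|=1$ case) is essential because, when classifying arbitrarily many rewards, one cannot afford a union bound over rewards inside the concentration of the transition model. The KL bound will later be combined with a value-function covering-number argument inside the proof of Theorem \ref{thr: bounds tabular off} to obtain a uniform-over-$r$ deviation for $\widehat{J}^*_m(r)$ and $\widehat{J}^*_M(r)$. Therefore the main technical import of this lemma is to replace the reward-dependent concentration of Lemma \ref{lemma: conc single reward} with a reward-free concentration statement strong enough to survive such a uniformization.
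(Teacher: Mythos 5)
Your proposal is correct and follows essentially the same route as the paper: a union bound over the three sub-events, with $\cE_1'$ handled by the hypercube-vertex/Hoeffding argument already given in the proof of Theorem \ref{thr: upper bound caty online}, the KL part of $\cE_2'$ by the standard time-uniform KL concentration for multinomials (the paper cites Lemma 10 of \citet{kaufmann2021adaptiveRFE} via Lemma F.2 of \citet{lazzati2024offline}), the counting part of $\cE_2'$ by a Chernoff-type bound on the visit counts (Lemma A.1 of \citet{xie2021bridging}), and $\cE_3'$ by the geometric coverage argument of Lemma F.1 of \citet{lazzati2024offline}. Your observation that no extra $H$ factor appears because $\cD^b$ is not split across stages is also consistent with the paper.
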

\begin{proof}
    The result follows by an application of the union bound after having
    observed that each event $\cE_1',\cE_2',\cE_3'$ holds with probability
    $1-\delta/3$. Specifically, we have already shown in the proof of Theorem
    \ref{thr: upper bound caty online} that $\cE_1'$ holds with probability
    $1-\delta/3$. Concerning event $\cE_2'$, the result is contained in Lemma
    F.2 of \citet{lazzati2024offline}. In particular, the first part is proved
    through Lemma 10 in \citet{kaufmann2021adaptiveRFE}, while for the second
    part we simply apply Lemma A.1 of \citet{xie2021bridging}. Finally, w.r.t.
    event $\cE_3'$, the result follows as shown in Lemma F.1 of
    \citet{lazzati2024offline}.
\end{proof}
Lemma \ref{lemma: conc single reward} and Lemma \ref{lemma: conc all rewards}
are needed for the following tasks. Event $\cE_1$ guarantees that \catyoff
provides an accurate estimate of the expert's expected utility for a single
reward, while $\cE_1'$ provides the guarantee for all the bounded rewards. Event
$\cE_2$, intuitively, guarantees that the estimate of the transition model
$\widehat{p}$ is close to the true one $p$ at all the triples in $\widehat{\cZ}$
for a single reward, while $\cE_2'$ provides the guarantee in one-norm. Finally,
events $\cE_3$ and $\cE_3'$ guarantee that $\widehat{\cZ}=\cZ$. Note that, due
to the splitting of the datasets when $|\cR|=1$, the bound on $\tau^b$ for
$\cE_3$ is larger by a $H$ term than the bound for $\cE_3'$.

To prove Theorem \ref{thr: bounds tabular off}, we need to prove one last lemma:
\begin{lemma}\label{lemma: difference of J} 
    Let $s_0$ be the initial state and let $p,\widehat{p}\in\Delta^\cS_\cZ$
   and $p'\in\Delta^\cS_{\cZ^\complement}$ be arbitrary. Then, for any policy
   $\pi$ and reward function $r\in\fR$, it holds that:
    \begin{align*}
        \scalebox{0.98}{$  \displaystyle\Big| J^\pi(r;p,p')-J^\pi(r;\widehat{p},p') \Big|\le
        \sum\limits_{(s,a,h)\in\cZ}d_h^{p,p',\pi}(s,a)\Big|
        \sum\limits_{s'\in\cS}(p_h(s'|s,a)-\widehat{p}_h(s'|s,a))V^\pi_{h+1}(s';\widehat{p},p',r)
        \Big|.$}
    \end{align*}
\end{lemma}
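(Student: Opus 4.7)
The plan is to invoke the standard performance difference / simulation lemma for the two transition models $(p,p')$ and $(\widehat{p},p')$, and then exploit the fact that these models agree everywhere on $\cZ^\complement$ so that the resulting sum collapses onto $\cZ$.

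\textbf{Step 1: Set up the simulation lemma.} Consider the two ``combined'' transition kernels $P \coloneqq (p,p')$ and $\widehat{P} \coloneqq (\widehat{p},p')$, both elements of $\Delta^\cS_{\SAH}$. The standard simulation (value-difference) lemma states that, for any policy $\pi$ and reward $r$,
\begin{align*}
J^\pi(r;P) - J^\pi(r;\widehat{P}) = \sum_{h=1}^H \sum_{(s,a)\in\SA} d_h^{P,\pi}(s,a)\sum_{s'\in\cS}\bigr{P_h(s'|s,a)-\widehat{P}_h(s'|s,a)}V^\pi_{h+1}(s';\widehat{P},r).
\end{align*}
I would derive this quickly by the usual telescoping/induction argument over the horizon on $V^\pi_h(\cdot;P,r) - V^\pi_h(\cdot;\widehat{P},r)$, rewriting the per-step difference as an expectation under $d^{P,\pi}$ and using that $V^\pi_{H+1}\equiv 0$.

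\textbf{Step 2: Restrict the sum to $\cZ$.} By construction, $P$ and $\widehat{P}$ share the same kernel $p'$ on every $(s,a,h)\in\cZ^\complement$. Hence $P_h(s'|s,a)-\widehat{P}_h(s'|s,a)=0$ for all $(s,a,h)\notin\cZ$, and the outer sum over $(s,a,h)\in\SAH$ collapses to a sum over $(s,a,h)\in\cZ$. Writing the visit distribution in the notation of the statement, $d_h^{P,\pi}=d_h^{p,p',\pi}$, we obtain
\begin{align*}
J^\pi(r;p,p') - J^\pi(r;\widehat{p},p') = \sum_{(s,a,h)\in\cZ} d_h^{p,p',\pi}(s,a)\sum_{s'\in\cS}\bigr{p_h(s'|s,a)-\widehat{p}_h(s'|s,a)}V^\pi_{h+1}(s';\widehat{p},p',r).
\end{align*}

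\textbf{Step 3: Apply the triangle inequality.} Taking absolute values on both sides and pulling the modulus inside the outer sum (using that $d_h^{p,p',\pi}(s,a)\ge 0$), we conclude
\begin{align*}
\big|J^\pi(r;p,p')-J^\pi(r;\widehat{p},p')\big|\le \sum_{(s,a,h)\in\cZ} d_h^{p,p',\pi}(s,a)\Big|\sum_{s'\in\cS}\bigr{p_h(s'|s,a)-\widehat{p}_h(s'|s,a)}V^\pi_{h+1}(s';\widehat{p},p',r)\Big|,
\end{align*}
which is exactly the claim.

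There is no real obstacle here: the only subtlety is bookkeeping the two transition models carefully so that the simulation lemma's visit distribution uses $(p,p')$ while the value function on the right uses $(\widehat{p},p')$, and then observing that the agreement of $P$ and $\widehat{P}$ on $\cZ^\complement$ kills all those terms.
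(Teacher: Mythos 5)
Your proof is correct and follows essentially the same route as the paper: both arguments telescope the value difference across the horizon under the combined kernels $(p,p')$ and $(\widehat{p},p')$, attribute each per-step discrepancy to the transition mismatch weighted by the visitation measure of $(p,p')$, and then observe that agreement on $\cZ^\complement$ collapses the sum onto $\cZ$. The only cosmetic difference is that you state the exact simulation identity first and apply the triangle inequality at the end, whereas the paper interleaves the absolute values with the unrolling of the recursion.
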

\begin{proof}
    Let us denote by $\overline{p},\widecheck{p}\in\Delta_{\SAH}^\cS$ the
    transition models obtained by combining, respectively, $p$ with $p'$, and
    $\widehat{p}$ with $p'$. Then, we can write:
    \begin{align*}
        \Big|J^\pi(r;\overline{p})-J^\pi(r;\widecheck{p})\Big|
        &=\Big|V^\pi_1(s_0;\overline{p},r)-V^\pi_1(s_0;\widecheck{p},r)\Big|\\
        &=
        \Big|\sum\limits_{a\in\cA}\pi_1(a|s_0)\sum\limits_{s'\in\cS}\overline{p}_1(s'|s_0,a)V_2^\pi(s';\overline{p},r)\\
        &\qquad-\sum\limits_{a\in\cA}\pi_1(a|s_0)
        \sum\limits_{s'\in\cS}\widecheck{p}_1(s'|s_0,\pi_1(s_0))V_2^\pi(s';\widecheck{p},r)\Big|\\
        &\le\sum\limits_{a\in\cA}\pi_1(a|s_0)
        \Big|\sum\limits_{s'\in\cS}(\overline{p}_1(s'|s_0,a)-
        \widecheck{p}_1(s'|s_0,a))V_2^\pi(s';\widecheck{p},r)\Big|\\
        &\qquad+\sum\limits_{a\in\cA}\pi_1(a|s_0)\sum\limits_{s'\in\cS}\overline{p}_1(s'|s_0,a)
        \Big|V_2^\pi(s';\overline{p},r)-
        V_2^\pi(s';\widecheck{p},r)\Big|\\
        &\markref{(1)}{\le}\dotsc\\
        &\le \E_{\overline{p},\pi}\bigg[\sum\limits_{h\in\dsb{H}}
        \Big|
        \sum\limits_{s'\in\cS}(\overline{p}_h(s'|s,a)-
        \widecheck{p}_h(s'|s,a))V^\pi_{h+1}(s';\widecheck{p},r)
        \Big|\bigg]\\
        &=\sum\limits_{(s,a,h)\in\SAH}d_h^{\overline{p},\pi}(s,a)\Big|
        \sum\limits_{s'\in\cS}(\overline{p}_h(s'|s,a)-
        \widecheck{p}_h(s'|s,a))V^\pi_{h+1}(s';\widecheck{p},r)
        \Big|,
    \end{align*}
    where at (1) we have unfolded the recursion.
    
    The result follows by changing notation and by noticing that, by definition,
    $\overline{p}_h(s'|s,a)=\widecheck{p}_h(s'|s,a)=p'_h(s'|s,a)$ $\forall
    s'\in\cS$ in all $(s,a,h)\notin \cZ$.
\end{proof}
We are now ready to prove Theorem \ref{thr: bounds tabular off}.
\begin{proofthr}{thr: bounds tabular off}
    Thanks to the definition of PAC algorithm in Definition \ref{def: pac
    algorithm offline}, and thanks to Proposition \ref{prop: comp off subopt},
    then it is clear that, if we show that \catyoff satisfies:
    \begin{align*} 
        \mathop{\mathbb{P}}\limits_{\cM,\pie,\pi^b}\Big(
            \sup\limits_{r\in\cR} \Big|\Delta_m(r)-
            \widehat{\Delta}_m(r)\Big|\le\epsilon\;
            \wedge\;\sup\limits_{r\in\cR} \Big|\Delta_M(r)-
            \widehat{\Delta}_M(r)\Big|\le\epsilon\Big)\ge 1-\delta,
    \end{align*}
    then we have successfully proved that \catyoff is PAC.

    Let $r\in\fR$ be arbitrary, and assume that event $\cE$ holds if $|\cR|=1$,
    otherwise assume that event $\cE'$ holds. Then, we can write:
    \begin{align*}
         \Big|\Delta_M(r)-\widehat{\Delta}_M(r)\Big|
          &\le  \underbrace{\Big|\max\limits_{p'\in[p]_{\equiv_{\cZ}}}
          J^*(r;p')-\max\limits_{\widehat{p}'\in[\widehat{p}]_{\equiv_{\widehat{\cZ}}}}J^*(r;\widehat{p}')\Big|}_{\eqqcolon \cI}
          + \Big|\widehat{J}^E(r)-J^{\pie}(r;p)\Big|,\\
          \Big|\Delta_m(r)-\widehat{\Delta}_m(r)\Big|
          &\le  \underbrace{\Big|\min\limits_{p'\in[p]_{\equiv_{\cZ}}}
          J^*(r;p')-\min\limits_{\widehat{p}'\in[\widehat{p}]_{\equiv_{\widehat{\cZ}}}}J^*(r;\widehat{p}')\Big|}_{\eqqcolon \cJ}
          + \Big|\widehat{J}^E(r)-J^{\pie}(r;p)\Big|,
    \end{align*}
    where we have defined symbols $\cI,\cJ$. Next:
    \begin{align*}
        \cI&\markref{(1)}{\le}  \Big|\max\limits_{p'\in[p]_{\equiv_{\cZ}}}
        J^*(r;p')-\max\limits_{\widehat{p}'\in[\widehat{p}]_{\equiv_{\cZ}}}J^*(r;\widehat{p}')\Big|
      \\
      &\markref{(2)}{=}  \Big|\max\limits_{p'\in\Delta^\cS_{\cZ^\complement}}
      J^*(r;p,p')-\max\limits_{p'\in\Delta^\cS_{\cZ^\complement}}
      J^*(r;\widehat{p},p')\Big|\\
      &\le
      \max\limits_{p'\in\Delta^\cS_{\cZ^\complement}}\Big|
      J^*(r;p,p')-J^*(r;\widehat{p},p')\Big|\\
      &\le \max\limits_{p'\in\Delta^\cS_{\cZ^\complement}}
      \max\limits_\pi\Big|J^\pi(r;p,p')-J^\pi(r;\widehat{p},p')\Big|\\
      &\markref{(3)}{\le}\max\limits_{p'\in\Delta^\cS_{\cZ^\complement}}
      \max\limits_\pi
      \sum\limits_{(s,a,h)\in\cZ}d_h^{p,p',\pi}(s,a)\Big|
      \sum\limits_{s'\in\cS}(p_h(s'|s,a)-\widehat{p}_h(s'|s,a))V^\pi_{h+1}(s';\widehat{p},p',r)
      \Big|,
    \end{align*}
    where at (1) we use that, under $\cE_3$ or $\cE_3'$, we have
    $\widehat{\cZ}=\cZ$, at (2) we use the definitions of $[p]_{\equiv_{\cZ}}$
    and $[\widehat{p}]_{\equiv_{\cZ}}$, and at (3) we apply Lemma \ref{lemma:
    difference of J}. Note that we can show that also quantity $\cJ$ enjoys the
    same upper bound by using the property that $|\min_x f(x)-\min_x g(x)|\le
    \max_x|f(x)-g(x)|$.

    Now, when $|\cR|=1$, under the good event $\cE$, we can upper bound the
    last term by:
    \begin{align*}
        \max\limits_{p'\in\Delta^\cS_{\cZ^\complement}}
          \max\limits_\pi&\sum\limits_{(s,a,h)\in\cZ}d_h^{p,p',\pi}(s,a)\cdot
        c\sqrt{\frac{H^3 \ln^2\frac{12|\widehat{\cZ}|}{\delta}}{\tau^b d^{p,\pi^b}_h(s,a)}}
        \le cH\sqrt{\frac{H^3 \ln^2\frac{12|\cZ|}
        {\delta}}{\tau^b d_{\min}^{p,\pi^b}}}\le\frac{\epsilon}{2},
    \end{align*}
    from which we obtain the bound in the theorem for $\tau^b$ when $|\cR|=1$.
    For the bound for $\tau^E$, we simply set the confidence bound of event
    $\cE_1$ to be $\le\epsilon/2$ and solve w.r.t. $\tau^E$. The result follows
    through an application of Lemma \ref{lemma: conc single reward}.

    For $|\cR|\ge1$, under good event $\cE'$, by applying also the Pinsker's
    inequality, we can upper bound:
    \begin{align*}
          \max\limits_{p'\in\Delta^\cS_{\cZ^\complement}}
          \max\limits_\pi\sum\limits_{(s,a,h)\in\cZ}d_h^{p,p',\pi}(s,a)\cdot
        cH\sqrt{\frac{ \ln\frac{6|\widehat{\cZ}|}{\delta}\beta(\tau^b,\delta/6)}{\tau^b d^{p,\pi^b}_h(s,a)}}
        \le cH^2\sqrt{\frac{ \ln\frac{6|\cZ|}{\delta}\beta(\tau^b,\delta/6)}{\tau^b d_{\min}^{p,\pi^b}}}
        \le\frac{\epsilon}{2},
    \end{align*}
    and solving w.r.t. $\tau^b$ using Lemma J.3 in \citet{lazzati2024offline} we
    obtain the bound in the theorem for $\tau^b$ when $|\cR|\ge1$.
    For the bound on $\tau^E$, we simply impose that the confidence bound in
    event $\cE_1'$ is smaller than $\epsilon/2$. Finally, we apply Lemma
    \ref{lemma: conc all rewards} to show that the guarantee holds with
    probability at least $1-\delta$. This concludes the proof.    
\end{proofthr}

\section{Discussion}\label{sec: discussion}

In this section, we provide a discussion on the flexibility of the reward
compatibility framework by presenting additional problem settings and extensions
in which the adoption of reward compatibility for efficient learning is
straightforward. Moreover, we discuss about some design choices made in this
paper, and we collect additional insights about the practical ``usage'' of the
rewards learned through IRL.

\paragraph{Reward Learning.}

In the context of Reward Learning (ReL), the learner receives a variety of
expert feedbacks to learn the true reward function $r^E$
\citep{jeon2020rewardrational}. From the ``constraint'' column of Table 2 in
\citet{jeon2020rewardrational}, we recognize that each feedback, similarly to
the IRL feedback, provides a different kind of (inequality) constraint with
which reducing the amount of rewards in $\mathfrak{R}$ that represent feasible
candidates for $r^E$. It should be remarked that these constraints are ``hard'',
in that a reward either satisfies the constraint or not (we might define a
notion of feasible set as the set of rewards satisfying such constraints). To
permit efficient learning when the transition model is unknown, our reward
compatibility framework proposes to transform such ``hard'' constraints into
``soft'' constraints, by measuring the compatibility of the rewards with the
constraints.

\paragraph{Other IRL settings.}

A popular alternative to the IRL setting with optimal expert
\citep{ng2000algorithms}, is that in which the expert is optimal in a certain
\emph{entropy-regularized MDP} \citep{ziebart2010MCE,Fu2017LearningRR}, i.e.:
\begin{align*}
    \pi^E=\argmax\limits_{\pi\in\Pi} \mathop{\E}\limits_{p,\pi} \bigg[
        \sum\limits_{h\in\dsb{H}}
        r_h(s_h,a_h) + \mathcal{H}(\pi_h(\cdot|s_h))
        \bigg]\eqqcolon \argmax\limits_{\pi\in\Pi}\overline{J}^\pi(r;p),
\end{align*}
where $\mathcal{H}$ denotes the entropy. In words, the objective of the expert
consists in the maximization of the \emph{entropy-regularised} expected utility
$\overline{J}$. The advantage of this formulation is the existence of a unique
optimal policy. To permit efficient learning when the dynamics is unknown, it is
possible to extend our notion of reward compatibility to the maximum-entropy IRL
framework as:
\begin{align*}
    \overline{\cC}^{\text{ENT}}_{p,\pi^E}(r)\coloneqq
    \max\limits_\pi \overline{J}^\pi(r;p)-\overline{J}^{\pi^E}(r;p).
\end{align*}
In this manner, we quantify the degree of compatibility of reward $r$ with the
given constraint. We observe that the same approach can be adopted also for
other settings like, for instance, IRL with risk-sensitive agents
\citep{lazzati2024learningutilitiesdemonstrationsmarkov}, once that the form of
the objective of the expert has been defined.

\paragraph{Robustness to misspecification.}

When the given constraint is not known exactly, our framework permits to be
robust by accurately choosing the classification threshold $\Delta$. For
instance, when the expert's suboptimality is contained into $[L,U]$ for some
$U\ge L\ge 0$, but we are uncertain about the specific values $L,U$, then we can
use a larger classification threshold. Intuitively, the larger the
classification threshold, the more uncertain we are on the feedback received.

\paragraph{Demonstrations in multiple environments.}

Consider the learning setting in which we are given many expert's demonstrations
about the same reward $r^E$ in a variety of environments
\citep{amin2016resolving,cao2021identifiability}. Clearly, multiple constraints
reduce the partial identifiability of the problem and permit to retrieve a
smaller feasible set. When the transition model is unknown, reward compatibility
permits to cope with uncertainty in a straightforward way. For instance, let
$\{\widehat{\cC}_i(\cdot)\}_{i\in\dsb{N}}$ be the estimated (non)compatibilities
associated to demonstrations $\{\cD^E_i\}_{i\in\dsb{N}}$ in $N$ environments. A
meaningful objective consists in finding a reward $r$ such that
$\max_{i\in\dsb{N}}\widehat{\cC}_i(r)\le\epsilon$, i.e., which is at most
$\epsilon$-(non)compatible (for some $\epsilon>0$) with all the input
demonstrations.

\paragraph{Multiplicative compatibility.}

Any reward $r\in\mathfrak{R}$ induces, in the considered environment $\cM$ with
dynamics $p$, an ordering in the space of policies $\Pi$, based on the
performance $J^{\pi}(r;p)$ of each policy $\pi\in\Pi$. It is easy to notice that
for any scaling and translation parameters $\alpha \in \RR_{> 0},\beta\in \RR$,
the reward constructed as $r'(\cdot,\cdot)=\alpha r(\cdot,\cdot)+\beta$ induces
the same ordering as $r$ in the space of policies.\footnote{Indeed, simply
observe that, for any $\pi\in\Pi$: $J^{\pi}(r';p)=J^{\pi}(\alpha
r+\beta;p)=\alpha J^{\pi}(r;p)+\beta$.} Nevertheless, the \emph{additive} notion
of (non)compatibility $\overline{\cC}_{\cM,\pie}$ in Definition \ref{def: reward
compatibility optimal} (setting with optimal expert), is such that, for any
$r\in\mathfrak{R}$:
\begin{align*}
    &\overline{\cC}_{\cM,\pie}(r+\beta)=\overline{\cC}_{\cM,\pie}(r)\qquad\forall \beta\in \RR,\\
    &\overline{\cC}_{\cM,\pie}(\alpha r)=\alpha\overline{\cC}_{\cM,\pie}(r)\neq\overline{\cC}_{\cM,\pie}(r)
    \qquad\forall\alpha\in \RR_{>0}.
\end{align*}
Simply put, the scale $\alpha$ of the reward matters, and rescaling the reward
modifies the (non)compatibility. If we are interested in invariance to scale, we
can define a \emph{multiplicative} notion of
compatibility\footnote{Multiplicative suboptimality has already been analysed in
literature. E.g., see Theorem 7.2.7 in \cite{puterman1994markov}, which is
inspired by \cite{ornstein1969existence}.} $\cF$ (defined only for non-negative
rewards $r$ and non-zero optimal performance $J^{*}(r;p)$) as:
\begin{align*}
    \cF_{\cM,\pie}(r)\coloneqq\frac{J^{\pie}(r;p)}{J^{*}(r;p)}.
\end{align*}
Clearly, the larger $\cF_{\cM,\pie}(r)$, the closer is the performance of $\pie$
to the optimal performance. By definition, we have:
\begin{align*}
    & \cF_{\cM,\pie}(\alpha r)=\cF_{\cM,\pie}(r)\qquad\forall\alpha\in \RR_{>0}\\
    & \cF_{\cM,\pie}(r+\beta)\neq \cF_{\cM,\pie}(r)\qquad\forall \beta\in \RR,
\end{align*}
i.e., this definition does not care about the scaling $\alpha$ of the reward,
but it is sensitive to the actual ``location'' $\beta$ of the reward. Thus,
intuitively, none of $\overline{\cC}_{\cM,\pie},\overline{\cF}_{\cM,\pie}$ can
be seen as perfect. We prefer to use $\overline{\cC}_{p,\pie}$ instead of
$\cF_{\cM,\pie}$ because $(i)$ most of the RL literature prefers the additive
notion of suboptimality instead of the multiplicative one, giving importance to
the scale of the reward, and $(ii)$ the additive notion of suboptimality is
simpler to analyze from a theoretical viewpoint w.r.t. the multiplicative one.

\paragraph{When can a learned reward be ``used'' for (forward) RL?}

We are interested in applications of IRL like Apprenticeship Learning (AL). We
say that a reward function $r$ \emph{can be ``used'' for (forward) RL} if the
policy $\pi$ obtained through the optimization of $r$ performs acceptably under
the true expert's reward $r^E$. What properties should the reward $r$, learned
through IRL, satisfy in order to be ``usable''? We now list and analyze some
plausible requirements which are common in literature.

First, $(i)$ we might ask that, being $\pie$ optimal w.r.t. $r^E$, then any
reward $r$ such that $\pie\in\argmax_\pi J^\pi(r)$ can be used, i.e., any reward
in the feasible set $\cR_{\cM,\pie}$ \citep{metelli2023towards}. However, there
are rewards $r\in\cR_{\cM,\pie}$ that induce more than one optimal policy (e.g.,
both $\overline{\pi},\pie$ as optimal), and optimal policies other than $\pie$
(e.g., $\overline{\pi}$) are not guaranteed to perform well under $r^E$
(actually, $\overline{\pi}$ can be an arbitrary policy in $\Pi$). Clearly, this
is not satisfactory.
Another possibility $(ii)$ consists in rewards $r\in\cR_{\cM,\pie}$ such that
$\pie$ is the \emph{unique} optimal policy, similarly to what happens in
entropy-regularized MDPs \citep{ziebart2010MCE,Fu2017LearningRR}. However,
in practice, due to computational limitations or uncertainty (e.g., estimated
dynamics $\widehat{p}$), we can just afford to compute an $\epsilon$-optimal
policy for $r$ in $p$. Since any policy can be $\epsilon$-optimal under reward
$r$, then we have no guarantee on its performance w.r.t. $r^E$.
A third requirement $(iii)$ asks for rewards $r$ that make $\pie$ at least
$\epsilon$-optimal, i.e., $\epsilon$-compatible rewards based on Definition
\ref{def: reward compatibility optimal}. However, since these rewards represent
a superset of the feasible set $\cR_{\cM,\pie}$, then even this requirement is
not satisfactory.

The three requirements described above do not provide guarantees that
optimizing the considered reward $r$ provides a policy with satisfactory
performance w.r.t. the true $r^E$.
\begin{remark}[Sufficient condition for reward usability]
    If we want to be sure that an $\epsilon$-optimal policy for the learned
    reward $r$ is at least $f(\epsilon)$-optimal for $r^E$ (for some function
    $f$), then \emph{it suffices that all the (at least) $\epsilon$-optimal policies
    $\pi$ under the learned $r$ have visitation distribution close to that
    of $\pie$ in 1-norm}:
    \begin{align*}
        |J^{\pie}(r^E;p)-J^{\pi}(r^E;p)|=
        \Big|\sum\limits_{h\in\dsb{H}}\dotp{d^{p,\pie}_h-d^{p,\pi}_h,r^E_h}\Big|
        \le
        \sum\limits_{h\in\dsb{H}}\|d^{p,\pi}_h-d^{p,\pie}_h\|_1.
    \end{align*}
\end{remark}
If we define distance $d^{\text{all}}$ between rewards $r,r'$ (see Section 3.1
of \citet{zhao2023inverse}) as:
\begin{align*}
    d^{\text{all}}(r,r')\coloneqq \sup\limits_{\pi\in\Pi}|J^\pi(r;p)-J^\pi(r';p)|,
\end{align*}
then we see that $d^{\text{all}}(r,r^E)\le\epsilon$ for some small
$\epsilon\ge0$ represents a stronger condition for reward usability. Indeed, if
$d^{\text{all}}(r,r^E)$ is small, then the performance of \emph{any} policy as
measured by $r$, not just optimal policy or $\epsilon$-optimal policy, is
similar to its performance as measured by $r^E$.
Therefore, clearly, \emph{rewards $r$ with small distance to $r^E$ w.r.t.
$d^{\text{all}}$ \emph{can} be ``used'' for forward RL}. However, since expert
demonstrations do not provide any information about the performance of policies
other than $\pie$ under $r^E$, we have the following result:
\begin{restatable}{prop}{propnoguaranteesdall}\label{prop: no guarantees on
    dall} Let $\cM=\tuple{\cS,\cA,H, d_0,p}$ be a \emph{known} MDP without
    reward, and let $\pie$ be a \emph{known} expert's policy. Let $r^E$ the true
    \emph{unknown} reward optimized by the expert to construct $\pie$. Then,
    there does not exist a learning algorithm that, for any
    $\epsilon,\delta\in(0,1)$, receives in input an arbitrary pair
    $\tuple{\cM,\pie}$ and outputs a single reward $r$ such that
    $d^{\text{all}}(r,r^E)\le\epsilon$ w.p. $1-\delta$.
\end{restatable}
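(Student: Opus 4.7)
The plan is to prove impossibility by a standard two-instance / indistinguishability argument, exploiting the fact that knowing $\cM$ and $\pi^E$ perfectly still does not pin down $r^E$ within the feasible set $\cR_{\cM,\pi^E}$. Since the algorithm receives \emph{exactly} the same input in any two instances that share $\cM$ and $\pi^E$, any (possibly randomized) rule must produce the same output distribution on both. If we can exhibit two candidate ``true'' rewards $r_1^E, r_2^E \in \cR_{\cM,\pi^E}$ whose mutual distance $d^{\text{all}}(r_1^E,r_2^E)$ is bounded away from zero, then by the triangle inequality no single reward $r$ can be $\epsilon$-close to both, yielding the desired contradiction for suitably small $\epsilon,\delta$.

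Concretely, I would take the tiny instance already used in Example~\ref{example: muffin cake soup}: one state, horizon $H=1$, with two actions $a_1,a_2$, and deterministic expert $\pi^E(a_1)=1$. Two reward functions consistent with $\pi^E$ being optimal are $r_1^E(a_1)=1,\, r_1^E(a_2)=0$ and $r_2^E(a_1)=1,\, r_2^E(a_2)=-1$; both belong to $\cR_{\cM,\pi^E}$. Evaluating the policy $\pi$ that deterministically plays $a_2$ gives $J^\pi(r_1^E;p)-J^\pi(r_2^E;p)=1$, so $d^{\text{all}}(r_1^E,r_2^E)\ge 1$.

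Suppose, for contradiction, that there exists an algorithm $\fA$ satisfying the stated guarantee for every $\epsilon,\delta\in(0,1)$. Pick $\epsilon<1/2$ and $\delta<1/2$, and consider the two IRL instances $(\cM,\pi^E,r_1^E)$ and $(\cM,\pi^E,r_2^E)$. Since $\fA$ sees only $(\cM,\pi^E)$, the output random reward $r$ has the same law in both instances. By the triangle inequality for $d^{\text{all}}$ (which is a pseudometric on rewards), the events $\{d^{\text{all}}(r,r_1^E)\le\epsilon\}$ and $\{d^{\text{all}}(r,r_2^E)\le\epsilon\}$ are disjoint whenever $2\epsilon<d^{\text{all}}(r_1^E,r_2^E)$, hence their probabilities sum to at most $1$. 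But the PAC guarantee would force each of them to have probability $>1/2$, a contradiction.

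The only mildly delicate step is justifying indistinguishability for randomized $\fA$: since both problem instances feed identical inputs into $\fA$, its output distribution is literally the same measure on $\fR$, so the two success events live in a common probability space and the ``sum $\le 1$'' argument applies verbatim. I do not anticipate any real obstacle beyond this; the construction is minimal and the argument is the same triangle-inequality pattern used in the proofs of Theorem~\ref{thr: fs optimal not learnable} and Theorem~\ref{thr: comp non learnable offline}.
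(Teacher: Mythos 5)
Your proof is correct and follows essentially the same route as the paper's: exhibit two rewards in the feasible set that are far apart in $d^{\text{all}}$, observe that the algorithm's input (and hence output law) is identical in both instances, and conclude via the triangle inequality that it cannot be $\epsilon$-close to both with probability above $1/2$. The only difference is that your minimal one-state instance yields a separation of $1$ rather than the paper's $H$ (obtained from disjoint state supports), which is still sufficient since the claim only needs failure for some $\epsilon\in(0,1)$.
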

\begin{proofsketch}
    Simply, we can construct $\cM$ to be an MDP without reward in which there is
    at least a policy $\pi\neq\pi^E$ such that
    $\cS^{p,\pi}\cap\cS^{p,\pi^E}=\emptyset$, i.e., the states visited by the
    two policies are different. Then, in the feasible set $\fs$, there are both
    the rewards $r',r''$ that give $J^{\pi}(r';p)=0$ and $J^{\pi}(r'';p)=H$.
    Since we do not know the performance of $\pi$ under $r^E$, then both rewards
    are plausible. For this reason, whatever the output $r$ of the algorithm,
    the distance $d^{\text{all}}(r,r^E)$ (note that $d^{\text{all}}$ is a
    metric) cannot be smaller than $H/2$ in the worst case. By taking $\delta >
    1/2$ and using triangle inequality, we can prove the result.
\end{proofsketch}
Nevertheless, \citet{metelli2023towards,lazzati2024offline,zhao2023inverse}
\emph{seem} to provide sample efficient algorithms w.r.t.
$d^{\text{all}}$.\footnote{Actually,
\cite{metelli2023towards,lazzati2024offline} use different notions of distance,
like $d_\infty(r,r')\coloneqq\|r-r'\|_\infty$. However, we can write
$\|r-r'\|_\infty\ge\|r-r'\|_1/(SAH)$, and by dual norms we have that
$d^{\text{all}}(r,r')=\sup_{\pi\in\Pi}|\dotp{d^{p,\pi},r-r'}|\le
\sup_{\overline{d}:\|\overline{d}\|_\infty\le1}|\dotp{\overline{d},r-r'}|=\|r-r'\|_1$.
Therefore, the guarantees of \cite{metelli2023towards,lazzati2024offline} can be
converted to $d^{\text{all}}$ guarantees too.} By looking at Proposition
\ref{prop: no guarantees on dall}, we realize that this is clearly a
\emph{contradiction}. What is the right interpretation?
The trick is that the algorithms proposed in
\citet{metelli2023towards,lazzati2024offline,zhao2023inverse} are \emph{not}
able to output a single reward $r$ which is close to $r^E$ w.r.t.
$d^{\text{all}}$, but, \emph{for any possible reward $r^E=r^E(V,A)$
parametrized\footnote{While \cite{zhao2023inverse} makes this parametrization
explicit, \cite{metelli2023towards,lazzati2024offline} keep the parametrization
implicit, but everything is analogous.} by some value and advantage functions
$V,A$, they are able to output a reward $r$ such that
$d^{\text{all}}(r,r^E(V,A))$ is small.} In other words, it is like if these
works \emph{assume to know} the $V,A$ parametrization of the true reward $r^E$.
In this way, these works are able to output a reward $r$ that can be used for
``forward'' RL, otherwise their algorithms cannot provide such guarantee.

\section{Related Work}\label{sec: related work}

In this section, we report and describe the literature that relates the most to
this paper. Theoretical works concerning sample efficient IRL can be grouped
in works that concern the feasible set, and works that do not.

\paragraph{Feasible Set.}
Let us begin with works related to the feasible set. While the notion of
feasible set has been introduced implicitly in \cite{ng2000algorithms}, the
first paper that analyses the sample complexity of estimating the feasible set
in online IRL is \cite{metelli2021provably}. Authors in
\cite{metelli2021provably} adopt the simple generative model in tabular MDPs,
and devise two sample efficient algorithms. \cite{lindner2022active} focuses on
the same problem as \cite{metelli2021provably}, but adopts a forward model in
tabular MDPs. By adopting RFE exploration algorithms, they devise sample
efficient algorithms. However, as remarked in \cite{zhao2023inverse}, the
learning framework considered in \cite{lindner2022active} suffers from a major
issue. \cite{metelli2023towards} builds upon \cite{metelli2021provably} to
construct the first minimax lower bound for the problem of estimating the
feasible set using a generative model. The lower bound is in the order of
$\Omega\big(\frac{H^3SA}{\epsilon^2}(S+\log\frac{1}{\delta})\big)$, where $S$
and $A$ are the cardinality of the state and action spaces, $H$ is the horizon,
$\epsilon$ is the accuracy and $\delta$ the failure probability. In addition,
\cite{metelli2023towards} develops US-IRL, an efficient algorithm whose sample
complexity matches the lower bound. \cite{poiani2024inverse} analyze a setting
analogous to that of \cite{metelli2023towards}, in which there is availability
of a single optimal expert and multiple suboptimal experts with known
suboptimality. \cite{lazzati2024offline} analyse the problem of estimating the
feasible set when no active exploration of the environment is allowed, but the
learner is given a batch dataset collected by some behavior policy $\pi^b$. In
particular, \cite{lazzati2024offline} focus on two novel learning targets that
are suited for the offline setting, i.e., a subset and a superset of the
feasible set, and demonstrate that such sets are the tightest learnable subset
and superset of the feasible set. They conclude by proposing a pessimistic
algoroithm, PIRLO, to estimate them. \cite{zhao2023inverse} analyses the same
offline setting as \cite{lazzati2024offline}, but instead of focusing on the
notion of feasible set directly, it considers the notion of reward mapping,
which considers reward functions as parametrized by their value and advantage
functions, and whose image coincides with the feasible set.

\paragraph{Other sample-efficient IRL settings.}
With regards to IRL works that do not consider the feasible set, we
mention \cite{lopes2009active}, which analyses an active learning framework for
IRL. However, \cite{lopes2009active} assumes that the transition model is known,
and its goal is to estimate the expert policy only. Works
\cite{komanduru2019correctness} and \cite{komanduru2021lowerbound} provide,
respectively, an upper bound and a lower bound to the sample complexity of IRL
for $\beta$-strict separable problems in the tabular setting. However, both the
setting considered and the bound obtained are fairly different from ours. 
Analogously, \cite{dexter2021IRL} provides a sample efficient IRL algorithm for
$\beta$-strict separable problems with continuous state space. However, their
setting is different from ours since they assume that the system can be modelled
using a basis of orthonormal functions.

\paragraph{Identifiability and Reward Learning.}
As aforementioned, the IRL problem is ill-posed, thus, to retrieve a single
reward, additional constraints shall be imposed. \cite{amin2016resolving}
analyse the setting in which demonstrations of an optimal policy for the same
reward function are provided across environments with different transition
models. In this way, authors can reduce the experimental unidentifiability, and
recover the state-only reward function. \cite{cao2021identifiability} and
\cite{kim2020domain} concern reward identifiability but in entropy-regularized
MDPs \citep{ziebart2010MCE,Fu2017LearningRR}. Such setting is in some sense
easier than the common IRL setting, because entropy-regularization permits a
unique optimal policy for any reward function. \cite{cao2021identifiability} use
expert demonstrations from multiple transition models and multiple discount
factors to retrieve the reward function, while \cite{kim2020domain} analyse
properties of the dynamics of the MDP to increase the constraints. With regards
to the more general field of Reward Learning (ReL), we mention
\cite{jeon2020rewardrational}, which introduce a framework that formalizes the
constraints imposed by various kinds of human feedback (like demonstrations or
preferences \citep{wirth2017surveyPbRL}). Intuitively, multiple feedbacks about
the same reward represent additional constraints beyond mere demonstrations.
\cite{skalse2023invariance} characterize the partial identifiability of the
reward function based on various reward learning data sources.

\paragraph{Online Apprenticeship Learning.}
The first works that provide a theoretical analysis of the AL setting when the
transition model is unknown are \cite{abbeel2005exploration} and
\citet{syed2007game}. Recently, \cite{Shani2021OnlineAL} formulate the online
AL problem, which resembles the online IRL problem. The main difference
is that in online AL the ultimate goal is to imitate the expert, while in IRL is
to recover a reward function. \cite{Xu2023ProvablyEA} improve the results in
\cite{Shani2021OnlineAL} by combining an RFE algorithm with an efficient
algorithm for the estimation of the visitation distribution of the deterministic
expert's policy in tabular MDPs, presented in \cite{Rajaraman2020TowardTF}. We
mention also \cite{rajaraman2021value,swamy2022minimax} for the sample
complexity of estimating the expert's policy in problems with linear function
approximation. In the context of Imitation Learning from Observation alone
(ILfO) \citep{liu2018imitation}, \cite{sun2019provably} propose a
probably efficient algorithm for large-scale MDPs with unknown transition model.
\cite{liu2022learning} provide an efficient AL algorithm based on GAIL
\citep{ho2016gail} in Linear Kernel Episodic MDPs \cite{zhou2021nearly} with
unknown transition model.

\paragraph{Others.} We mention \cite{klein2012IRLclassification}, which consider
a classification approach for IRL. However, this is fairly different from our
IRL problem formulation in Sections \ref{sec: the rewards compatibility
framework} and \ref{sec: offline}.

\section{Conclusion}\label{sec: conclusions}

Motivated by major limitations of the feasible reward set as a unifying
framework for sample-efficient IRL, in this paper, we have presented the
powerful framework of \emph{reward compatibility}, which permits efficient
learning in many IRL problems. The major advantage of reward compatibility is
its flexibility, since it can easily be adapted to a multitude of problem
settings. In this paper, we have considered both an optimal and a suboptimal
expert, and we have analysed online IRL in tabular and Linear MDPs, and offline
IRL in tabular MDPs only, and we have provided sample efficient algorithms,
\caty and \catyoff, for solving the newly-proposed IRL classification problem.
We have also discussed on the flexibility of the framework introduced in a
variety of \emph{complementary} settings, and provided some insights about the
usage of reward functions learned through IRL for (forward) RL.

\textbf{Limitations.}~~In this work, we have not analysed the offline IRL
problem in Linear MDPs. In particular, as already mentioned, we note that the
approach with the best and worst notions of (non)compatibility that we adopted
cannot be extended straightforwardly to linear MDPs, because it is based on a
notion of coverage best-suited for tabular MDPs. For this reason, an original
definition of reward compatibility that permits to use existing notions of
coverage of the state space for linear MDPs has to be devised, in order to
analyse the offline IRL problem in linear MDPs through our framework.
In addition, we acknowledge that the empirical validation of the proposed
algorithms is beyond the scope of this work. Our focus has been on developing
the theoretical framework and foundational aspects, leaving the empirical
evaluation for future research.

\textbf{Future Directions.}~~Promising directions for future works concern the
extension of the analysis of the \emph{reward compatibility} framework beyond
Linear MDPs to general function approximation and to the offline setting.
Moreover, it would be fascinating to extend the notion of reward compatibility
to other kinds of expert feedback (in the context of ReL), and to other IRL
settings (e.g., Boltzmann rational experts), as discussed in Section \ref{sec:
discussion}. In such way, we believe that it will be possible to bring IRL
closer to real-world applications.

\acks{AI4REALNET has received funding from European Union's Horizon Europe
Research and Innovation programme under the Grant Agreement No 101119527. Views
and opinions expressed are however those of the author(s) only and do not
necessarily reflect those of the European Union. Neither the European Union nor
the granting authority can be held responsible for them. Funded by the European
Union - Next Generation EU within the project NRPP M4C2, Investment 1.,3 DD. 341
- 15 march 2022 - FAIR - Future Artificial Intelligence Research - Spoke 4 -
PE00000013 - D53C22002380006.}

\newpage

\appendix

\vskip 0.2in
\bibliography{refs}

\end{document}